\newcommand{\Ex}[1]{\mathbb{E}[ #1 ]}
\newcommand\bcm{\begin{mat}}
\newcommand\ecm{\end{mat}}
\newtheorem{theorem}{Theorem}
\newtheorem{lemma}[theorem]{Lemma}
\newtheorem{proposition}[theorem]{Proposition}
\newtheorem{assumption}{Assumption}
\newtheorem{corollary}[theorem]{Corollary}
 \newenvironment{customthm}[1]
   {\innercustomthm}
   {\endinnercustomthm}
 \newenvironment{customcor}[1]
   {\innercustomcor}
   {\endinnercustomcor}
\begin{document}

\title{Understanding and Detecting Convergence for Stochastic Gradient Descent with Momentum}

\author{\IEEEauthorblockN
{\bf Jerry Chee,\  Ping Li\vspace{0.08in}}
\IEEEauthorblockA{Cognitive Computing Lab \\
Baidu Research\\
10900 NE 8th St. Bellevue, WA 98004, USA \\
\texttt{jerry9567@gmail.com, ping98@gmail.com}}
}

\maketitle

\begin{abstract}
 Convergence  detection of iterative stochastic optimization methods is of great practical interest.
This paper considers stochastic gradient descent (SGD) with a constant learning rate and momentum. We show that there exists a transient phase in which iterates move towards a region of interest, and a stationary phase in which iterates remain bounded in that region around a minimum point.
We construct a statistical diagnostic test for convergence to the stationary phase using the inner product between successive gradients and demonstrate that the proposed diagnostic works well. We theoretically and empirically characterize how momentum can affect the test statistic of the diagnostic, and how the test statistic captures a relatively sparse signal within the gradients in convergence.
Finally, we demonstrate an application to automatically tune the learning rate by reducing it each time stationarity is detected, and show the procedure is robust to mis-specified initial rates.

\vspace{0.15in}

%\noindent {\it Proofs can be found in the technical report on \url{www.arxiv.org} }
\end{abstract}

\section{Introduction}

Consider the problem in stochastic optimization
\begin{align}\label{eq:stoch_opt}
\theta_\star &= \arg \min_{\theta \in \Theta} \Ex{ \ell ( \theta, \xi ) }.
\end{align}
The loss $\ell$ is parameterized by $\Theta \subseteq \mathbb{R}^p$, and $\xi$ is a source of randomness like a randomly sampled point.  For example, the quadratic loss is $\ell (\theta, \xi) = (1/2) ( y - x^\top \theta )^2$ with $\xi = (x,y)$. When the data size $N$ and parameter size $p$ are large, classical optimization methods can fail to estimate $\theta_\star$. In such large-scale settings stochastic gradient descent (SGD):
\begin{equation}\label{eq:sgd-vanilla}
\theta_{n+1} = \theta_{n} - \gamma \nabla \ell ( \theta_{n}, \xi_{n+1} )
\end{equation}
is a powerful alternative~\cite{Proc:Bottou_COMPSTAT10, Collect:Bottou_LNCS12, Proc:Zhang_ICML04, Article:Toulis_SAC15}.
$\theta_{n+1}$ is the estimate of $\theta_\star$ at the $(n+1)$-th iteration, and $\gamma > 0$ the learning rate.  $\xi_{n+1}$ represents randomly sampled data used to compute the stochastic gradient.  A mini-batch  can reduce the variance of the stochastic gradients and aid in performance~\cite{Proc:Reddi_NIPS15}.

Momentum or Heavy Ball SGD (SGDM) can offer significant speedups~\cite{Article:Polyak_1964}:
\begin{equation}\label{eq:sgdm}
\theta_{n+1} = \theta_{n} - \gamma \nabla \ell ( \theta_{n}, \xi_{n+1} ) + \beta ( \theta_{n} - \theta_{n-1} )
\end{equation}
where $\beta \in [0,1)$ is the momentum.
The momentum term $\beta ( \theta_{n} - \theta_{n-1} )$ accumulates movements in a common direction.
The performance of stochastic gradient methods is greatly influenced by the learning rate $\gamma$, which can be decreasing (e.g., $\propto 1/n$) or constant.
Decreasing learning rates are commonly used in the literature to attain theoretical convergence guarantees.
However in practice constant learning rates are common due to their ease of tuning and speed of convergence.

Stochastic iterative procedures start from an initial point and then move from a transient phase to a stationary phase~\cite{Article:Murata_1998}.
With a decreasing learning rate, the transient phase can be long, and impractically so if the learning rate is just slightly misspecified~\cite{Article:Nemirovski_SIOPT09, Article:Toulis_AOS17}.
But, the stationary phase is convergence to $\theta_\star$.  With a constant learning rate the transient phase is much shorter and more robust to the learning rate.
The stationary phase is not true convergence but oscillation within a bounded region containing $\theta_\star$. In this study, we develop a statistical convergence diagnostic for SGDM with constant learning rate. Constant learning rate is commonly used in practice, makes the transition from transient to stationary phase
clear, and  it is pointless to keep running the procedure once the stationary phase has been reached.

\subsection{Related work}

The idea that stochastic gradient methods can be separated into a transient and stationary phase (or search and convergence phase) is not new~\cite{Article:Murata_1998}.
However, until recently there has been little work in developing principled statistical methods for convergence detection which can guide empirical practice. Heuristics from optimization theory are commonly used, such as stopping when
$\| \theta_{n} - \theta_{n-1} \|$ is small according to some threshold, or when updates of the loss function have reached machine precision~\cite{Article:Bottou_SIREV18, BooK:Ermoliev_1988}. These methods are more suited for deterministic rather than stochastic procedures as they do not account for the sampling variation in stochastic gradient estimates.
A more statistically motivated approach is to concurrently monitor test error on a hold-out validation set and stop when validation error begins increasing~\cite{Proc:Blum_COLT99,Collect:Bottou_LNCS12}.
But, the validation error is also a stochastic process, and estimating whether it is increasing presents similar, if not greater, challenges to detecting convergence to the stationary~phase.

In stochastic approximation, classical theory of stopping times addresses the detection of stationarity~\cite{Article:Pflug_1990,Proc:Yin_1989}.
One noteworthy method by~\cite{Article:Pflug_1990} forms the basis for our work.
It keeps a running average of the inner product of successive gradients $\nabla \ell ( \theta_{n}, \xi_{n+1})^\top \nabla \ell ( \theta_{n-1}, \xi_{n} )$.
At a high level, in the transient phase the stochastic gradients generally point in the same direction, resulting in a positive inner product.
In the stationary phase the stochastic gradients roughly point in different directions due to their oscillation in a bounded region containing $\theta_\star$, resulting in a negative inner product.
Accelerated methods in stochastic approximation share the underlying intuition that a negative inner product of successive gradients indicates convergence~\cite{Article:Delyon_SIOPT93, Article:Kesten_AOMS58, Proc:Roux_NIPS12}.

There has been recent interest in principled convergence detection for stochastic gradient methods, and automated step decay learning rates.
Work by~\cite{Proc:Chee_AISTATS18} developed a principled convergence diagnostic for  SGD in Eq.~(\ref{eq:sgd-vanilla}) based on Pflug's procedure. We generalize Pflug's procedure to the momentum setting, which introduces challenges to the theoretical justification and practicality of the convergence diagnostic.~\cite{Report:Shirish_arXiv17} developed a procedure to automatically switch from Adam~\cite{Proc:Kingma_ICLR15} to SGD.~\cite{matteo2019JSM} propose a modified splitting procedure~\cite{Report:Su_arXiv18} to detect the stationary phase for SGD and implement a robust learning rate schedule.~\cite{Proc:Lang_NeurIPS19} use a Markov chain $t$-test and a stationarity condition by~\cite{Proc:Yaida_ICLR19} to automatically reduce the learning rate for SGDM.~\cite{Proc:Ge_NeurIPS19} analyze the step decay learning rate schedule in least squares regression and show its optimality over polynomially decaying rates.

\subsection{Our contributions}

Section~\ref{sec:sgd-vanilla} presents a statistical convergence diagnostic for SGDM (stochastic gradient descent with momentum) and explains the significance of the challenges introduced by momentum. In Section~\ref{sec:momentum-stationarity} we provide theoretical and empirical support for the design choices of the convergence diagnostic, and demonstrate the effect of momentum on the test statistic of the diagnostic.
We investigate in Section~\ref{sec:distribution_IP}
 what drives the test statistic by analyzing the distribution of the inner product of successive gradients in the stationary phase. In Section~\ref{sec:synth-data-experiments} we provide empirical type I and type II error rates on simulated data experiments. Section~\ref{sec:autoLR} presents an application of the convergence diagnostic to an automatically tuned learning rate schedule, with experiments on benchmark datasets.

\section{Convergence diagnostic}\label{sec:sgd-vanilla}

Our convergence diagnostic aims to detect the transition from the transient to the stationary phase. We first present theory which supports the existence of these two phases for SGDM.
The expected difference in loss to the minimum has bias terms due to initial conditions, and a variance term due to noise in the stochastic gradients.

\begin{theorem}[\cite{Report:Yang_arXiv16}]
\label{thm:mom_convg_analysis}
If the expected loss $f(\theta) = \Ex{\ell(\theta,\xi)}$ is convex, under additional assumptions of the loss,  there are positive constants $Q_\beta, R_\beta, S_\beta$ such that for every $n$, we have
\begin{align*}
\mathbb{E} [ f ( \hat{\theta}_{n} ) - f ( \theta_\star ) ] &\leq \frac{Q_\beta}{n+1} ( f ( \theta_0 ) - f ( \theta_\star ) ) \\
&\quad + \frac{R_\beta}{\gamma ( n+1 )} \| \theta_0 - \theta_\star \|^2 + \gamma S_\beta. \hspace{0.2in} \Box
\end{align*}
\end{theorem}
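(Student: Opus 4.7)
The plan is to adapt the classical Polyak--Ruppert averaging argument for SGD to the heavy-ball iteration in Eq.~(\ref{eq:sgdm}). The first step is to eliminate the momentum term by passing to an auxiliary iterate. A standard device is to set
\begin{equation*}
z_{n} = \theta_{n} + \frac{\beta}{1-\beta}(\theta_{n} - \theta_{n-1}),
\end{equation*}
with $z_{0} = \theta_{0}$. A short algebraic check shows that the SGDM update can be rewritten as
\begin{equation*}
z_{n+1} = z_{n} - \frac{\gamma}{1-\beta}\,\nabla \ell(\theta_{n}, \xi_{n+1}),
\end{equation*}
so $z_{n}$ obeys a vanilla SGD recursion with effective step size $\gamma/(1-\beta)$ driven by gradients evaluated at the lagged point $\theta_{n}$. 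This reformulation is where the $\beta$-dependence of the constants $Q_\beta, R_\beta, S_\beta$ will ultimately come from.

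Next I would study the Lyapunov quantity $\|z_{n} - \theta_\star\|^{2}$. Expanding the square, taking conditional expectation with respect to the filtration generated by $\xi_{1}, \ldots, \xi_{n}$, and using unbiasedness of the stochastic gradient gives
\begin{equation*}
\mathbb{E}\|z_{n+1} - \theta_\star\|^{2} \leq \|z_{n} - \theta_\star\|^{2} - \frac{2\gamma}{1-\beta}\langle \nabla f(\theta_{n}),\, z_{n} - \theta_\star\rangle + \frac{\gamma^{2}}{(1-\beta)^{2}}\,\mathbb{E}\|\nabla \ell(\theta_{n}, \xi_{n+1})\|^{2}.
\end{equation*}
Splitting $z_{n} - \theta_\star = (\theta_{n} - \theta_\star) + \tfrac{\beta}{1-\beta}(\theta_{n} - \theta_{n-1})$ and using convexity on the first piece yields a $-\tfrac{2\gamma}{1-\beta}(f(\theta_{n})-f(\theta_\star))$ term, while the second piece contributes a cross term involving $\langle \nabla f(\theta_{n}), \theta_{n} - \theta_{n-1}\rangle$. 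The bounded-variance assumption on $\nabla \ell$ turns the last term into a constant of size $O(\gamma^{2}/(1-\beta)^{2})$.

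The third step is to control the cross term by a secondary Lyapunov contribution proportional to $f(\theta_{n}) - f(\theta_\star)$, using $L$-smoothness of $f$ (one of the unstated ``additional assumptions'') together with the identity $\theta_{n} - \theta_{n-1} = -\tfrac{\gamma}{1-\beta}(z_{n} - z_{n-1})\cdot(1-\beta)$ rewritten via previous stochastic gradients. After absorbing, one obtains a telescoping inequality of the form
\begin{equation*}
\mathbb{E}\|z_{n+1}-\theta_\star\|^{2} + c_{1}\,\mathbb{E}[f(\theta_{n})-f(\theta_\star)] \leq \|z_{n}-\theta_\star\|^{2} + c_{1}\,\mathbb{E}[f(\theta_{n-1})-f(\theta_\star)] - c_{2}\,\mathbb{E}[f(\theta_{n})-f(\theta_\star)] + c_{3}\gamma^{2},
\end{equation*}
with the constants $c_{1}, c_{2}, c_{3}$ explicit in $\beta$ and $L$. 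Summing over $k = 0,\ldots, n$, dividing by $n+1$, and applying Jensen's inequality to the average iterate $\hat\theta_{n} = \tfrac{1}{n+1}\sum_{k=0}^{n}\theta_{k}$ produces the announced bound, with $Q_\beta$ picking up the initial $f(\theta_{0})-f(\theta_\star)$ contribution from the Lyapunov remainder, $R_\beta$ scaling the $\|z_{0}-\theta_\star\|^{2} = \|\theta_{0}-\theta_\star\|^{2}$ term by $1/(1-\beta)$, and $S_\beta$ absorbing the $1/(1-\beta)^{2}$ variance factor.

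The main obstacle is the cross term $\langle \nabla f(\theta_{n}), \theta_{n}-\theta_{n-1}\rangle$ created by the $z$-substitution: it does not vanish in expectation and couples consecutive iterates, so one cannot simply telescope $\|z_{n}-\theta_\star\|^{2}$ alone. Handling it requires introducing a Lyapunov function that includes an explicit $\|\theta_{n}-\theta_{n-1}\|^{2}$ or $(f(\theta_{n-1})-f(\theta_\star))$ contribution, and choosing its coefficient so that the resulting sum of negative terms dominates. This is where smoothness of $f$ and the restriction $\beta \in [0,1)$ are essential, and where the blow-up of $Q_\beta, R_\beta, S_\beta$ as $\beta \to 1$ originates.
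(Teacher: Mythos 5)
The paper does not prove this theorem; it is quoted from the cited reference \cite{Report:Yang_arXiv16}, so there is no in-paper proof to compare against. Your sketch is nonetheless essentially the argument used in that source: the auxiliary iterate $z_n = \theta_n + \tfrac{\beta}{1-\beta}(\theta_n-\theta_{n-1})$ does satisfy $z_{n+1}=z_n-\tfrac{\gamma}{1-\beta}\nabla\ell(\theta_n,\xi_{n+1})$, and expanding $\|z_{n+1}-\theta_\star\|^2$, summing, and averaging is exactly how the three terms of the bound arise. One remark on the cross term: you propose to control $\langle\nabla f(\theta_n),\theta_n-\theta_{n-1}\rangle$ via $L$-smoothness and an auxiliary Lyapunov contribution, but this is more machinery than is needed and would leave $L$ in the constants, whereas the stated constants $Q_\beta=\beta/(1-\beta)$, $R_\beta=(1-\beta)/2$, $S_\beta=(G^2+\delta^2)/2(1-\beta)$ involve only $\beta$, $G$, $\delta$. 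Convexity alone gives $\langle\nabla f(\theta_n),\theta_n-\theta_{n-1}\rangle\ge f(\theta_n)-f(\theta_{n-1})$, so the cross terms telescope to $-(f(\theta_0)-f(\theta_\star))$ and directly produce the $\tfrac{Q_\beta}{n+1}(f(\theta_0)-f(\theta_\star))$ term with $Q_\beta=\beta/(1-\beta)$; the only ``additional assumptions'' required are the bounded gradient and bounded variance conditions already named in the paper's remarks. Also, $R_\beta$ carries a factor $(1-\beta)$, not $1/(1-\beta)$ (the effective step $\gamma/(1-\beta)$ appears in the denominator of the initial-distance term), so of the three constants only $Q_\beta$ and $S_\beta$ blow up as $\beta\to 1$.
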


\emph{Remarks.}
$\hat{\theta}_n = \sum_{t=0}^n \theta_t / (n+1)$, $Q_\beta = \beta / ( 1 - \beta ), R_\beta = ( 1 - \beta ) / 2$, and $S_\beta = ( G^2 + \delta^2 ) / 2 ( 1 - \beta )$ where $G$ is a bound on the gradients and $\delta^2$ a bound on the variance of the stochastic gradients.
For large enough $n$ the bias contributions from the transient phase are negligible, and thus bounded $\Ex{ f(\hat{\theta}_n) - f(\theta_\star) }$ indicates a bounded $\Ex{ f(\theta_n) - f(\theta_\star) }$~in~the~stationary~phase.\\

Theorem~\ref{thm:mom_convg_analysis} suggests that constant rate SGDM moves quickly through the transient phase discounting initial conditions $f ( \theta_0 ) - f ( \theta_\star )$ and $\| \theta_0 - \theta_\star \|^2$, and then enters the stationary phase where the distance from $\theta_\star$ is bounded $\propto O ( \gamma )$.
We observe a widely noted  trade-off  for stochastic gradient methods:  a larger learning rate speeds up the transient phase by discounting bias from initial conditions at a higher rate, but increases the radius of the stationary region~\cite{Proc:Bach_NIPS11, Article:Needell_MP16}.

We cite~\cite{Report:Yang_arXiv16} in Theorem~\ref{thm:mom_convg_analysis} because their convergence rate consisting of a reducible and irreducible term with respect to the number of updates, and best matches our empirical observations. Though~\cite{Report:Loizou_arXiv17} show a linear convergence rate with constant stepsize, their restriction on the momentum ($\beta$) makes their convergence rate difficult to realize in practice. For example, using the formula in~\cite{Report:Loizou_arXiv17} with min and max eigenvalues $=0.5$ and stepsize $=0.1$, $\beta < 0.2$, which~is~very~restrictive.

While convergence analyses such as Theorem~\ref{thm:mom_convg_analysis} offer valuable theoretical insight, they provide limited practical guidance. One could try to declare convergence when the bias due to initial conditions has been discounted to $1\%$ of the variance,
choosing $n$ for
$\left[ \frac{Q_\beta}{n+1} ( f ( \theta_0 ) - f ( \theta_\star ) ) + \frac{R_\beta}{\gamma ( n+1 )} \| \theta_0 - \theta_\star \|^2 \right] = 0.01  \gamma S_\beta$.
But estimating $f ( \theta_0 ) - f ( \theta_\star)$, $ \| \theta_0 - \theta_\star \|^2$, $G^2$, and $\delta^2$ is difficult.
We provide an alternative, by developing a practical statistical diagnostic test to estimate the phase transition and detect convergence of SGDM in a much~simpler~way.

\subsection{Modified Pflug diagnostic}

\begin{algorithm}[h]
\SetKwProg{Fn}{function}{:}{}
\SetKwFunction{fMS}{momentum\_switch}
\SetKwProg{fn}{function}{:}{}
\SetKwInOut{Input}{input}
\Input{Initial point $\theta_0$,
data $\{ (x_1,y_1)$, $(x_2,y_2)$, $\dots \}$,
$\gamma > 0$,
$\beta \in [0,1)$,
final momentum $\beta' \in [0, \beta)$, heuristic convergence $h$,
threshold $T>0$, checking period $c>0$,
$\tt{burnin} > 0$.
}
$S \gets 0$; \
$\alpha \gets 0$ \\
Sample $\xi_1 \gets (x_1, y_1)$ \\
$\theta_1 \gets \theta_0 - \gamma \nabla \ell (\theta_0, \xi_1)$ \\
\For {$n \in \{2,3,\dots\}$} {
Sample $\xi_n = (x_n, y_n)$ \\
$\theta_n \gets \theta_{n-1} - \gamma \nabla \ell (\theta_{n-1}, \xi_n) + \beta (\theta_{n-1} - \theta_{n-2})$ \\
$\alpha$, $\beta \gets$ \fMS{$n$, $\alpha$, $\beta$, $\nabla \ell (\theta_0, \xi_1)$, $\dots$, $\nabla \ell (\theta_{n-1}, \xi_n)$} \\
\If {$\alpha > 0$ and $n > \alpha + \tt{burnin}$} {
$S \gets S + \nabla \ell (\theta_{n-1}, \xi_n)^\top \nabla \ell (\theta_{n-2}, \xi_{n-1})$
\label{line:test_stat} \\
\If {$S < 0$ and $n$\hspace{-3pt}$\mod c == 0$} {
\KwRet $\theta_n$
}
}
}
\fn{\fMS{$n$, $\alpha$, $\beta$, $\nabla \ell (\theta_1, \xi_1)$, $\dots$, $\nabla \ell (\theta_{n-1}, \xi_n)$}}{
\label{line:mom_start}
\If{$h(\nabla \ell( \theta_0,\xi_1), \dots \nabla \ell (\theta_{n-1},\xi_n)) < T$ and $n$\hspace{-3pt}$\mod c == 0$ and $\alpha == 0$} {
\label{line:heur_convg}
$\alpha \gets n$ \\
$\beta \gets \beta'$
}
\KwRet $\alpha$, $\beta$
\label{line:mom_end}
}
\caption{Convergence diagnostic for SGDM.}
\label{alg:diagnostic}
\end{algorithm}

We present a convergence diagnostic for SGDM in Alg.~\ref{alg:diagnostic}.
We draw upon Pflug's procedure in stochastic approximation~\cite{Article:Pflug_1990}, and generalize the procedure in~\cite{Proc:Chee_AISTATS18} to momentum. In the transient phase SGDM
moves quickly towards $\theta_\star$ by discarding initial conditions, and so gradients likely point in the same direction.
This implies on average a positive inner product.
In the stationary phase SGDM oscillates in a region around $\theta_\star$, indicating the gradients point in different directions.
This implies on average a negative inner product.
Thus a change in sign from positive to negative inner products is a good indicator that convergence has been reached.

Momentum introduces two significant challenges to the development of a convergence diagnostic.
First, the test statistic of the diagnostic needs to be constructed.
Pflug's procedure takes an inner product between successive gradients, which can be rewritten as $\frac{1}{\gamma^2} (\theta_{n+1} - \theta_n)^\top(\theta_n - \theta_{n-1})$ since by Eq.~(\ref{eq:sgd-vanilla}), $\theta_{n} - \theta_{n+1} = \gamma \nabla \ell (\theta_{n}, \xi_{n+1})$.
But with momentum the updates become $(\theta_n - \theta_{n+1}) = \gamma \nabla \ell(\theta_{n}, \xi_{n+1}) - \beta(\theta_n - \theta_{n-1})$, by Eq.~(\ref{eq:sgdm}).
It is unclear what linear combination of the gradient $\nabla \ell(\theta_n, \xi_{n+1})$ and momentum term $\beta(\theta_n - \theta_{n-1})$ should be included in the inner product. Second, regardless of what linear combination is chosen, with high momentum the test statistic can have positive expectation in the stationary phase.
This is a serious issue because a negative expectation allows the use of a threshold of zero.
A zero threshold is attractive because it is  independent of data distribution and loss function. If the inner products are expected positive in the stationary phase, the threshold to declare convergence now depends on these factors and would require an additional estimation problem~to~set.

The convergence diagnostic for SGDM in Alg.~\ref{alg:diagnostic} effectively resolves these issues from momentum. It is defined by a random variable $S$ (line~\ref{line:test_stat}) which keeps the running average of the inner product $\nabla \ell (\theta_n, \xi_{n+1})^\top \nabla \ell (\theta_{n-1}, \xi_n)$ of the gradient at successive iterates.  In Section~\ref{sec:momentum-stationarity} we provide theory which shows that this choice of inner product is more easily able to attain the desired negative expectation, and is more robust to the momentum $\beta$. To remedy the high momentum issue, $\beta$ is automatically reduced (lines~\ref{line:mom_start}-\ref{line:mom_end}) at a point determined by an optimization heuristic to be close to the stationary phase. This does not greatly effect the convergence rate as momentum is most useful in the transient phase. We can think of the momentum reduction point as a noisy estimate of convergence, followed by a more accurate estimate with the convergence diagnostic.
A gradient norm based heuristic function was used (line~\ref{line:heur_convg}).
Often convergence heuristics are calculated in an online manner, so storage is not an issue.

\section{The difficulty with momentum}\label{sec:momentum-stationarity}

We now provide theoretical and empirical justification for the design choices in Alg.~\ref{alg:diagnostic} regarding the challenges due to momentum.
The overall goal is a test statistic with negative expectation to enable the use of a practical zero threshold. We first present two theorems to address the choice of what combination of gradient $\nabla \ell (\theta_n, \xi_{n+1})$ and momentum $\beta (\theta_n - \theta_{n-1})$ should be used to construct the test statistic of the convergence diagnostic. Then we provide a corollary, and theoretical and empirical results in quadratic loss to study the effects of high momentum on the expectation of the chosen test statistic.
We now list the assumptions.
\begin{assumption}
\label{assump:strcvx}
The expected loss $f(\theta) = \Ex{ \ell(\theta, \xi) }$ is strongly convex with constant $c$.
\end{assumption}

\begin{assumption}
\label{assump:Lsmooth}
The expected loss $f(\theta) = \Ex{ \ell(\theta, \xi) }$ is Lipschitz-smooth with constant $L$.
\end{assumption}

\begin{assumption}
\label{assump:Fbound}
Theorem~\ref{thm:mom_convg_analysis}~\cite{Report:Yang_arXiv16} holds s.t. $\Ex{ f(\theta_n) - f(\theta_\star) } \leq \gamma M$ for some $M > 0$ and large enough $n$.
\end{assumption}

\begin{assumption}
\label{assump:min_noise}
$\exists \ \sigma_0^2 > 0$ s.t. $\Ex{ \| \nabla \ell (\theta, \xi) \|^2 } > \sigma_0^2$.
\end{assumption}

\begin{assumption}
\label{assump:scaling}
$\exists \ K > 1$ s.t. $\Ex{ (\theta_n - \theta_{n-1})^\top (\theta_{n-1} - \theta_{n-2}) } \geq - K \Ex{ \| \theta_n - \theta_{n-1} \|^2 }$ for large enough $n$.
\end{assumption}

\emph{Remarks.}
Assumptions~\ref{assump:strcvx} and~\ref{assump:Lsmooth} are standard in analysis of stochastic gradient methods~\cite{Bach_NIPS13,Proc:Bach_NIPS11}.
Assumption~\ref{assump:Fbound} requires $\| \nabla f(\theta) \| \leq G$ and $\Ex{ \| \nabla \ell(\theta, \xi) - \nabla f(\theta) \|^2 } \leq \delta^2$~\cite{Report:Yang_arXiv16}.
Assumption~\ref{assump:min_noise} posits a minimum amount of noise present in the stochastic gradients.
In Assumption~\ref{assump:scaling} it is reasonable to assume $K$ is not too large because $\| \theta_n - \theta_{n-1} \|^2 \approx \| \theta_{n-1} - \theta_{n-2} \|^2$ in the stationary phase.

\subsection{Constructing a test statistic}

We select as the test statistic a running mean of
\begin{equation}
\label{eq:ip_loss}
\nabla \ell ( \theta_{n}, \xi_{n+1})^\top \nabla \ell ( \theta_{n-1}, \xi_{n} ).
\end{equation}
In the following Theorems~\ref{thm:ip_exp_bd} and~\ref{thm:ip_opt} we derive upper bounds on the expected values of different inner products,
and use these results to choose the test statistic.

\begin{theorem}
\label{thm:ip_exp_bd}
Suppose Assumptions~\ref{assump:strcvx},~\ref{assump:Fbound}, and~\ref{assump:min_noise} hold.
Define $A_\beta =  1 / (1 + 2 \beta K + \beta^2)$.
The test statistic in Eq.~(\ref{eq:ip_loss}) for the  diagnostic in Alg.~\ref{alg:diagnostic} for SGDM in Eq.~(\ref{eq:sgdm}) is bounded
\begin{equation*}
\mathbb{E} [ \nabla \ell ( \theta_n, \xi_{n+1} )^\top \nabla \ell ( \theta_{n-1}, \xi_n ) ]
\leq \big( 1 + \beta \big) \left[ M - \frac{c}{2} \gamma \sigma_0^2 A_\beta \right].\hspace{0.1in}\Box
\end{equation*}
\end{theorem}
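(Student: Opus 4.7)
The plan is to upper-bound $\mathbb{E}[\nabla\ell(\theta_n,\xi_{n+1})^\top\nabla\ell(\theta_{n-1},\xi_n)]$ by iterated conditioning and then exploiting the SGDM recursion together with strong convexity. Writing $g_n:=\nabla\ell(\theta_{n-1},\xi_n)$, the first move is to condition on $\mathcal{F}_n=\sigma(\xi_1,\dots,\xi_n)$: since $g_n\in\mathcal{F}_n$ and $\xi_{n+1}$ is drawn independently, $\mathbb{E}[g_{n+1}^\top g_n\mid\mathcal{F}_n]=\nabla f(\theta_n)^\top g_n$, so it suffices to bound $\mathbb{E}[\nabla f(\theta_n)^\top g_n]$. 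The key algebraic observation is that Eq.~(\ref{eq:sgdm}) rearranges to $\theta_n=\tilde\theta_{n-1}-\gamma g_n$, where $\tilde\theta_{n-1}:=(1+\beta)\theta_{n-1}-\beta\theta_{n-2}=\theta_{n-1}+\beta(\theta_{n-1}-\theta_{n-2})$ is the momentum-extrapolated iterate; equivalently $\gamma g_n=\tilde\theta_{n-1}-\theta_n$. This rewrites the inner product as a linear factor of an iterate difference, which strong convexity can consume.

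I would then apply Assumption~\ref{assump:strcvx} between $\theta_n$ and $\tilde\theta_{n-1}$ to obtain $\nabla f(\theta_n)^\top(\tilde\theta_{n-1}-\theta_n)\le f(\tilde\theta_{n-1})-f(\theta_n)-\tfrac{c}{2}\|\tilde\theta_{n-1}-\theta_n\|^2$. Dividing by $\gamma$, taking expectations, and using $\mathbb{E}[f(\theta_n)]\ge f(\theta_\star)$ leaves
\begin{equation*}
\mathbb{E}[g_{n+1}^\top g_n]\;\le\;\tfrac{1}{\gamma}\,\mathbb{E}[f(\tilde\theta_{n-1})-f(\theta_\star)]\;-\;\tfrac{c\gamma}{2}\,\mathbb{E}[\|g_n\|^2].
\end{equation*}
To finish, I need two ingredients: an upper bound $\mathbb{E}[f(\tilde\theta_{n-1})-f(\theta_\star)]\le(1+\beta)\gamma M$, and a lower bound on $\mathbb{E}[\|g_n\|^2]$ that produces the factor $A_\beta$. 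For the latter I would expand $\gamma^2\|g_{n+1}\|^2=\|\beta d_n-d_{n+1}\|^2$ with $d_n:=\theta_n-\theta_{n-1}$, apply Assumption~\ref{assump:scaling} to bound $-2\beta\,\mathbb{E}[d_n^\top d_{n+1}]\le 2\beta K\,\mathbb{E}[\|d_{n+1}\|^2]$, and use the stationary-phase equivalence $\mathbb{E}[\|d_n\|^2]\approx\mathbb{E}[\|d_{n+1}\|^2]$ to obtain $\gamma^2\mathbb{E}[\|g_{n+1}\|^2]\le(1+2\beta K+\beta^2)\mathbb{E}[\|d_n\|^2]$. Combined with Assumption~\ref{assump:min_noise} this gives $\mathbb{E}[\|d_n\|^2]\ge\gamma^2 A_\beta\sigma_0^2$, which I then transfer back to a lower bound on $\mathbb{E}[\|g_n\|^2]$ of the form $(1+\beta)\sigma_0^2 A_\beta$ by carrying the $(1+\beta)$ coefficient through consistently.

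The main obstacle will be the first estimate, bounding $\mathbb{E}[f(\tilde\theta_{n-1})-f(\theta_\star)]$ by $(1+\beta)\gamma M$. Assumption~\ref{assump:Fbound} directly controls only the true iterates $\theta_t$, and $\tilde\theta_{n-1}=(1+\beta)\theta_{n-1}-\beta\theta_{n-2}$ is an affine combination with a \emph{negative} weight on $\theta_{n-2}$, so convexity of $f$ alone gives only a \emph{lower} bound on $f(\tilde\theta_{n-1})$. I expect to secure the required upper bound either by implicitly invoking Lipschitz smoothness (Assumption~\ref{assump:Lsmooth}) to write $f(\tilde\theta_{n-1})\le f(\theta_{n-1})+\beta\nabla f(\theta_{n-1})^\top d_{n-1}+\tfrac{L\beta^2}{2}\|d_{n-1}\|^2$ and then bounding each piece using Assumptions~\ref{assump:Fbound} and~\ref{assump:scaling}, or by a direct argument that propagates the Assumption~\ref{assump:Fbound} bound through the extrapolation step, with the $(1+\beta)$ prefactor arising from the sum of absolute values of the affine coefficients. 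All remaining manipulations are routine algebra once this step is secured.
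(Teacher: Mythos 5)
Your decomposition is the same one the paper uses, just packaged differently: you fold the SGDM recursion into a single strong-convexity inequality at the extrapolated point $\tilde\theta_{n-1}=\theta_{n-1}+\beta(\theta_{n-1}-\theta_{n-2})$, whereas the paper splits $\gamma\nabla\ell(\theta_{n-1},\xi_n)=(\theta_{n-1}-\theta_n)+\beta(\theta_{n-1}-\theta_{n-2})$ and applies strong convexity twice, once with comparison point $\theta_{n-1}$ and once with comparison point $\theta_n+(\theta_{n-1}-\theta_{n-2})$. Your derivation of $\mathbb{E}[\|\theta_n-\theta_{n-1}\|^2]\ge\gamma^2\sigma_0^2 A_\beta$ is exactly the paper's Lemma~\ref{lemma:lower_bound_thetadiff}. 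One remark on the variance term: in your formulation the quadratic penalty is $\tfrac{c}{2}\|\tilde\theta_{n-1}-\theta_n\|^2=\tfrac{c\gamma^2}{2}\|\nabla\ell(\theta_{n-1},\xi_n)\|^2$, so Assumption~\ref{assump:min_noise} bounds it directly by $\tfrac{c\gamma^2}{2}\sigma_0^2$, which is already stronger than the claimed $(1+\beta)\tfrac{c\gamma^2}{2}\sigma_0^2 A_\beta$ because $(1+\beta)A_\beta\le 1$ when $K>1$; the ``transfer back'' maneuver you sketch is both unnecessary and, as stated, not justified (the inequality you derive for $\gamma^2\mathbb{E}[\|g_{n+1}\|^2]$ points the wrong way to yield a lower bound on $\mathbb{E}[\|g_n\|^2]$).

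The genuine gap is the one you flag yourself: the step $\mathbb{E}[f(\tilde\theta_{n-1})-f(\theta_\star)]\le(1+\beta)\gamma M$ is never established, and your two proposed fixes are only sketched. The first invokes Assumption~\ref{assump:Lsmooth}, which is not among the theorem's hypotheses, and would in any case leave residual terms $\beta\,\mathbb{E}[\nabla f(\theta_{n-1})^\top d_{n-1}]+\tfrac{L\beta^2}{2}\mathbb{E}[\|d_{n-1}\|^2]$ that do not reduce to $\beta\gamma M$ without further assumptions; the second (``sum of absolute values of the affine coefficients'') is not a valid argument for a function that is only convex, since the coefficient on $\theta_{n-2}$ is negative. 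You should know that the paper's own proof has precisely the same unresolved issue: it asserts $\mathbb{E}[f(\theta_{n-1}-\theta_{n-2}+\theta_n)-f(\theta_\star)]\le\gamma M$ even though $\theta_{n-1}-\theta_{n-2}+\theta_n$ is not an iterate and Assumption~\ref{assump:Fbound} controls only $f$ at the iterates. So you have correctly located the weak point of the argument rather than introduced a new one, but as written your proposal does not close it, and a complete proof would need either an explicit additional assumption controlling $f$ at momentum-extrapolated points or a smoothness-based argument with the extra terms carried honestly into the final bound.
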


\vspace{0.1in}

\begin{theorem}
\label{thm:ip_opt}
Suppose that Assumptions~\ref{assump:strcvx},~\ref{assump:Fbound}, and~\ref{assump:min_noise} hold.
Define $\nabla \ell_{n+1} = \nabla \ell (\theta_n, \xi_{n+1})$ and $\Delta_n = (\theta_n - \theta_{n-1})$.
The expectation of the alternative test statistic is bounded
\begin{align}\label{eq:alt_test_stat}
&\Ex{ ( \nabla \ell _{n+1} + \beta \Delta_n )^\top ( \nabla \ell_n + \beta \Delta_{n-1} ) } \\\notag
<& \big( \frac{1}{\gamma} + \frac{\beta}{\gamma} + 2 \beta + \beta^2 \big) \left[ \gamma M - \frac{c}{2} \gamma^2 \sigma_0^2 A_\beta \right]
+ \beta^3 \gamma M.\hspace{0.1in}\Box
\end{align}
\end{theorem}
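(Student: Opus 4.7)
The plan is to reduce Theorem~\ref{thm:ip_opt} to Theorem~\ref{thm:ip_exp_bd} by expanding the alternative test statistic and isolating the piece that already matches the successive-gradient inner product handled earlier. First I would expand
\begin{align*}
(\nabla \ell_{n+1} + \beta \Delta_n)^\top (\nabla \ell_n + \beta \Delta_{n-1})
&= \nabla \ell_{n+1}^\top \nabla \ell_n + \beta \nabla \ell_{n+1}^\top \Delta_{n-1} \\
&\quad + \beta \Delta_n^\top \nabla \ell_n + \beta^2 \Delta_n^\top \Delta_{n-1},
\end{align*}
noting that the first summand is controlled directly by Theorem~\ref{thm:ip_exp_bd}. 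Rewriting its bound as $(1+\beta)[M - \tfrac{c}{2}\gamma \sigma_0^2 A_\beta] = \tfrac{1+\beta}{\gamma}[\gamma M - \tfrac{c}{2}\gamma^2 \sigma_0^2 A_\beta]$ already matches the leading piece of the target right-hand side.

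Next I would collapse the three remaining pieces by applying the SGDM recursion $\Delta_k = -\gamma \nabla \ell_k + \beta \Delta_{k-1}$ once to $\Delta_n$, so that $\beta \Delta_n^\top \nabla \ell_n = -\beta\gamma \|\nabla \ell_n\|^2 + \beta^2 \Delta_{n-1}^\top \nabla \ell_n$ and $\beta^2 \Delta_n^\top \Delta_{n-1} = -\beta^2\gamma \nabla \ell_n^\top \Delta_{n-1} + \beta^3 \|\Delta_{n-1}\|^2$. The resulting cross terms of the form $\mathbb{E}[\nabla \ell_{k+1}^\top \Delta_{n-1}]$ can be handled in the same fashion as the proof of Theorem~\ref{thm:ip_exp_bd}: condition on the natural filtration $\mathcal{F}_n$ so that $\mathbb{E}[\nabla \ell_{n+1} \mid \mathcal{F}_n] = \nabla f(\theta_n)$, then invoke Assumption~\ref{assump:strcvx} for strong convexity together with Assumption~\ref{assump:Fbound} on $\mathbb{E}[f(\theta_k) - f(\theta_\star)]$ and the noise floor in Assumption~\ref{assump:min_noise}. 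Each such term then contributes a quantity proportional to $[\gamma M - \tfrac{c}{2}\gamma^2 \sigma_0^2 A_\beta]$ weighted by a power of $\beta$, and together these assemble the $(2\beta + \beta^2)$ prefactor of the target inequality.

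The residual $\beta^3 \mathbb{E}\|\Delta_{n-1}\|^2$ term I would bound by chaining Assumption~\ref{assump:Fbound} with strong convexity: $\mathbb{E}\|\theta_k - \theta_\star\|^2 \leq (2/c) \gamma M$, so a triangle inequality gives $\mathbb{E}\|\Delta_{n-1}\|^2 = O(\gamma M)$, accounting (up to an absolute constant that can be absorbed into $M$) for the trailing $\beta^3 \gamma M$ piece. Summing the four groups then yields Eq.~(\ref{eq:alt_test_stat}). The hardest part I expect is not any deep inequality but the coefficient bookkeeping: the asymmetric substitution (only $\Delta_n$ is expanded, not $\Delta_{n-1}$) must be tracked carefully so the scattered contributions collapse into the compact $\bigl(\tfrac{1}{\gamma} + \tfrac{\beta}{\gamma} + 2\beta + \beta^2\bigr)$ prefactor, and one should verify that Assumption~\ref{assump:scaling} is never invoked here, since the only $\Delta_n^\top \Delta_{n-1}$ term is fully resolved by a single recursion step plus the $\|\Delta_{n-1}\|^2$ residual.
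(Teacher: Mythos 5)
Your four-term expansion is exactly the paper's starting point, and three of your four groups are handled just as the paper handles them: the leading $\nabla \ell_{n+1}^\top \nabla \ell_n$ piece is bounded by re-running the Theorem~\ref{thm:ip_exp_bd} argument (contributing $(\tfrac{1}{\gamma}+\tfrac{\beta}{\gamma})$ copies of the bracket), the $\beta^2 \Delta_n^\top \Delta_{n-1}$ piece is resolved by one application of the SGDM recursion leaving a $\beta^3\|\Delta_{n-1}\|^2$ residual, and that residual is bounded through Assumption~\ref{assump:Fbound} plus strong convexity plus a triangle inequality (this is the paper's Lemma~\ref{lemma:MSE_SGDM_bound}, which gives $\mathbb{E}\|\Delta_{n-1}\|^2 \leq \gamma M/c$; the paper is equally cavalier about the absolute constant here). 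You are also right that Assumption~\ref{assump:scaling} is never invoked.

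The one place you diverge is the $\beta \Delta_n^\top \nabla \ell_n$ term, and that divergence is where your bookkeeping breaks. The paper does \emph{not} substitute the recursion into this term; it bounds $\mathbb{E}[\nabla \ell(\theta_{n-1},\xi_n)^\top(\theta_n-\theta_{n-1})]$ directly by strong convexity, symmetrically with the $\beta\nabla\ell_{n+1}^\top\Delta_{n-1}$ term, so that each of the two middle terms contributes exactly one $\beta$-weighted copy of $[\gamma M - \tfrac{c}{2}\gamma^2\sigma_0^2 A_\beta]$ --- that is the source of the $2\beta$ in the prefactor. Your substitution instead produces $-\beta\gamma\|\nabla\ell_n\|^2 + \beta^2\Delta_{n-1}^\top\nabla\ell_n$: the first piece is bounded above by $-\beta\gamma\sigma_0^2$ via Assumption~\ref{assump:min_noise}, which is not of the form $\beta[\gamma M - \tfrac{c}{2}\gamma^2\sigma_0^2 A_\beta]$ and cannot be converted to it without an additional condition relating $\gamma$, $c$, $M$, and $A_\beta$; the second piece adds an extra $\beta^2$-weighted bracket, and combined with the $-\gamma\beta^2\nabla\ell_n^\top\Delta_{n-1}$ coming out of your fourth group you get a net $\beta^2(1-\gamma)$ cross term whose sign even depends on whether $\gamma<1$. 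So the scattered contributions do not collapse into $(\tfrac{1}{\gamma}+\tfrac{\beta}{\gamma}+2\beta+\beta^2)$ as claimed. The fix is simply to leave $\Delta_n$ unexpanded in that one term and apply the strong-convexity upper bound to it directly, as the paper does; with that change your argument coincides with the paper's.
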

\emph{Remarks.}
By Theorem~\ref{thm:ip_opt} the alternative test statistic in Eq.~(\ref{eq:alt_test_stat}) is less likely to achieve a negative expectation in stationarity, and thus be able to use a zero threshold.
This is because of the additional $\beta^3 \gamma M > 0$ term.
A choice of another constant $t$ other than $\beta$ in the linear combination  in Eq.~(\ref{eq:alt_test_stat}) would not change this conclusion.  The sign of the bound is not controlled by $t$, and the last term would be $t^2 \beta \gamma M > 0$ and not change sign.
The convergence threshold for a good test statistic should not depend too much on momentum, but the alternative test statistic has increased dependence due to the $2\beta, \beta^2$ terms. Also, note that the test statistic still has dependence on the momentum. Eq.~(\ref{eq:ip_loss}) can be rewritten as
\begin{align}
\label{eq:ip_breakdown}
\frac{\beta}{\gamma} [ \nabla \ell_{n+1}^\top (\theta_{n-1} - \theta_{n-2} ) ]
- \frac{1}{\gamma} [ \nabla \ell_{n+1}^\top ( \theta_n - \theta_{n-1} )].
\end{align}

\subsection{Effect of high momentum}

The test statistic in Eq.~(\ref{eq:ip_loss}) is chosen to best ensure a negative expectation in the stationary phase.
The next corollary guarantees this negative expectation under certain conditions~of~the~learning~rate.
\begin{corollary}
\label{cor:ip_exp_neg}
Consider SGDM in Eq.~(\ref{eq:sgdm}). If
the learning rate satisfies $\gamma > 2 M / c \sigma_0^2 A_\beta$, then,
\begin{equation*}
\mathbb{E} [ \nabla \ell ( \theta_n, \xi_{n+1} )^\top \nabla \ell ( \theta_{n-1}, \xi_n ) ]
< 0
\end{equation*}
as $n \rightarrow \infty$.
Thus the convergence diagnostic activates almost surely.\hfill$\Box$
\end{corollary}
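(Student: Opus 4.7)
The plan is straightforward: Corollary~\ref{cor:ip_exp_neg} follows from Theorem~\ref{thm:ip_exp_bd} by algebra, plus a law-of-large-numbers step for the ``almost surely'' statement.

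First, I would invoke Theorem~\ref{thm:ip_exp_bd} directly to obtain, for large enough $n$,
\begin{equation*}
\mathbb{E}[\nabla\ell(\theta_n,\xi_{n+1})^\top \nabla\ell(\theta_{n-1},\xi_n)] \leq (1+\beta)\left[M - \frac{c}{2}\gamma \sigma_0^2 A_\beta\right].
\end{equation*}
Because $\beta \in [0,1)$, the prefactor $1+\beta$ is strictly positive, so the sign of the bound matches the sign of the bracket. Rearranging the hypothesis $\gamma > 2M/(c\sigma_0^2 A_\beta)$ gives $(c/2)\gamma \sigma_0^2 A_\beta > M$, i.e.\ $M - (c/2)\gamma \sigma_0^2 A_\beta < 0$. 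Since $A_\beta = 1/(1 + 2\beta K + \beta^2) > 0$, the bracket is well-defined and strictly negative, and so the entire upper bound, and hence the expectation, is strictly negative as $n \to \infty$. This proves the displayed inequality.

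For the ``activates almost surely'' claim, let $Y_k = \nabla\ell(\theta_{k-1},\xi_k)^\top \nabla\ell(\theta_{k-2},\xi_{k-1})$ denote the increments accumulated into the running sum $S$ after the burn-in. The previous step shows $\limsup_k \mathbb{E}[Y_k] \leq -\epsilon$ for some $\epsilon > 0$. Assumption~\ref{assump:Fbound} (the gradient-norm bound $G$) gives a uniform second-moment bound on $Y_k$. A standard SLLN argument --- decompose $Y_k$ into its predictable mean $\mathbb{E}[Y_k \mid \mathcal{F}_{k-1}]$ (bounded above by a negative constant) plus a martingale-difference piece with bounded variance under the filtration generated by $\{\xi_k\}$, then apply the martingale SLLN --- yields $S_n/n$ converging almost surely to a strictly negative limit. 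Hence $S_n < 0$ for all sufficiently large $n$, and at the first subsequent checking iteration the diagnostic returns.

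The main obstacle is the second step: the $Y_k$ are not i.i.d.\ since successive iterates $\theta_k$ are coupled, so Kolmogorov's classical SLLN does not apply off the shelf. The cleanest workaround is the martingale decomposition sketched above, which is standard but requires care in identifying the correct filtration and verifying that the conditional second moments remain bounded using Assumption~\ref{assump:Fbound}. The algebraic first step is immediate once Theorem~\ref{thm:ip_exp_bd} is in hand, and is the real substantive content of the corollary.
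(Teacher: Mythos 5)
Your proposal is correct and its core step is exactly the paper's proof: the paper's entire argument for Corollary~\ref{cor:ip_exp_neg} is the one-liner ``Combining Theorem~\ref{thm:ip_exp_bd} and the condition on the learning rate gives the desired bound,'' i.e.\ precisely your rearrangement showing the bracket $M - \tfrac{c}{2}\gamma\sigma_0^2 A_\beta$ is negative. Your second step --- the martingale-SLLN sketch justifying ``activates almost surely'' --- goes beyond the paper, which asserts that clause without proof; your sketch is a reasonable outline of what a rigorous justification would require, and correctly identifies the non-i.i.d.\ coupling of successive iterates as the obstacle the paper silently skips.
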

\emph{Remarks.}
Again, $A_\beta =  1 / (1 + 2 \beta K + \beta^2)$.
$A_\beta$ is a monotonically decreasing function where $A_{\beta | \beta=0} = 1$ and $A_{\beta | \beta=1} = 1 / (2 + 2K)$.
Higher $\beta$ reduces $A_\beta$, restricting the condition $\gamma > 2M / c \sigma_0^2 A_\beta$.\\

Corollary~\ref{cor:ip_exp_neg} suggests that too large momentum may make the learning rate condition too prohibitive, invalidating the negative expectation in practice.

\subsection{Quadratic loss model}\label{sec:quadratic}

Now that we have constructed our test statistic, we want to gain insight into the convergence diagnostic and the effect of high momentum.
We do this by considering quadratic loss $\ell ( \theta, y, x ) = \frac{1}{2} ( y - x^\top \theta )^2$ with gradient $\nabla \ell ( \theta, y, x ) = - ( y - x^\top \theta) x$.
Let $y = x^\top \theta_\star + \epsilon$, where $\epsilon$ are zero mean random variables $\Ex{ \epsilon | x } = 0$. $\theta_0 = \theta_\star$; the procedure has started in the stationary region.
The first three iterates are:
\begin{align*}
\theta_1 &= \theta_\star + \gamma ( y_1 - x_1^\top \theta_\star ) x_1 \\
\theta_2 &= \theta_1 + \gamma ( y_2 - x_2^\top \theta_1 ) x_2 + \beta ( \theta_1 - \theta_\star ) \\
\theta_3 &= \theta_2 + \gamma ( y_3 - x_3^\top \theta_2 ) x_3 + \beta ( \theta_2 - \theta_1 )
\end{align*}
Three steps are taken in order for the momentum to effect both terms of the inner product.
The expected value of the test statistic at $\theta_3$ is:
\begin{align}\notag
&\Ex{ \nabla \ell ( \theta_2, y_3, x_3 )^\top \nabla \ell ( \theta_1, y_2, x_2 ) }  \\
=& - \gamma \Ex{ \epsilon_2^2 } \Ex{ ( x_3^\top x_2 )^2 } - \gamma^3 \Ex { \epsilon_1^2 } \Ex { ( x_2^\top x_1 )^2 ( x_3^\top x_2 )^2 } \nonumber \\\label{eq:quad_start_star}
& + \gamma^2 ( 1 + \beta ) \Ex{ \epsilon_1^2 } \Ex{ ( x_2^\top x_1 ) ( x_3^\top x_1 ) ( x_3^\top x_2 ) }
\end{align}
From Eq.~(\ref{eq:quad_start_star}) in the $\gamma^2 (1+\beta)$ term we see that momentum contributes positively to the test statistic, and if it is too large the expectation is positive.  $\Ex{ ( x_2^\top x_1 ) ( x_3^\top x_1 ) ( x_3^\top x_2 ) } = tr( \Ex{(x_1 x_1^\top) (x_2 x_2^\top) (x_3 x_3^\top) } ) > 0$ by application of trace and $x_1 x_1^\top$ is positive definite.

The results are generalized in the following theorem.
\begin{theorem}
\label{thm:quad_diag}
Suppose that the loss is quadratic, $\ell ( \theta ) = 1/2 ( y - x^\top \theta )^2$.
Let $x_n$ and $x_{n+1}$ be two iid vectors from the distribution of $x$.
Let $A = \Ex{ ( x_n x_{n+1}^\top ) ( x_n^\top x_{n+1} ) }$, $B = \Ex{ ( x_{n} x_n^\top ) ( x_n^\top x_{n+1} )^2 }$, $\sigma_{quad}^2 = \Ex { \epsilon_n^2 }$, $d^2 = \Ex{ ( x_n^\top x_{n+1} )^2 }$.
Then for $\gamma > 0$, we have
\begin{align}\notag
&\Ex{ \nabla \ell ( \theta_n, \xi_{n+1} )^\top \nabla \ell ( \theta_{n-1}, \xi_n ) |  \theta_{n-1}, \theta_{n-2} } \\\notag
=& ( \theta_{n-1} - \theta_\star )^\top ( A - \gamma B) ( \theta_{n-1} - \theta_\star )
- \gamma \sigma_{quad}^2 d^2 \\\notag
&\quad\quad + ( \theta_{n-1} - \theta_\star )^\top ( \beta A ) ( \theta_{n-1} - \theta_{n-2} ).\hspace{0.4in} \Box
\end{align}
\end{theorem}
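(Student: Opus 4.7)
The plan is to reduce $\Ex{\nabla \ell(\theta_n,\xi_{n+1})^\top \nabla \ell(\theta_{n-1},\xi_n)\mid \theta_{n-1},\theta_{n-2}}$ to a deterministic quadratic expression in $r_{n-1}:=\theta_{n-1}-\theta_\star$ and $u:=\theta_{n-1}-\theta_{n-2}$ by substituting the SGDM recurrence for $\theta_n$ and then taking expectations over the fresh randomness $(x_n,\epsilon_n)$ and $(x_{n+1},\epsilon_{n+1})$. Using $y=x^\top\theta_\star+\epsilon$, the quadratic gradient factors as $\nabla \ell(\theta_k,\xi_{k+1}) = (x_{k+1}^\top r_k - \epsilon_{k+1})\,x_{k+1}$, and the momentum update rewrites to $r_n = r_{n-1} - \gamma(x_n^\top r_{n-1}-\epsilon_n)x_n + \beta u$, so $r_n$ is the only place in which $\beta$ and $u$ will appear.

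The main computation is to expand
\[
\nabla \ell_{n+1}^\top \nabla \ell_n = (x_{n+1}^\top r_n - \epsilon_{n+1})(x_n^\top r_{n-1} - \epsilon_n)(x_{n+1}^\top x_n)
\]
and to kill every term carrying $\epsilon_{n+1}$, which vanishes because $\Ex{\epsilon_{n+1}\mid x_{n+1}}=0$ and $\epsilon_{n+1}$ is independent of all earlier variables. I would then substitute the recurrence for $r_n$ into the surviving summand $(x_{n+1}^\top r_n)(x_n^\top r_{n-1})(x_{n+1}^\top x_n)$ and distribute, which produces three contributions: a piece purely in $r_{n-1}$, a cross piece carrying $\beta u$, and a piece carrying the stochastic-gradient increment $-\gamma(x_n^\top r_{n-1}-\epsilon_n)x_n$. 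Taking expectation, all terms odd in $\epsilon_n$ drop out; the pure-$r_{n-1}$ piece contracts to $r_{n-1}^\top A r_{n-1}$, the $\beta u$ piece contracts to $\beta u^\top A r_{n-1}$, the quadratic-in-$x_n$ piece contracts to $-\gamma r_{n-1}^\top B r_{n-1}$, and the remaining $\epsilon_n^2$ contribution yields $-\gamma \sigma_{quad}^2 d^2$ after factoring $\Ex{\epsilon_n^2}\Ex{(x_{n+1}^\top x_n)^2}$.

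The one subtle step is the transposition bookkeeping needed to write the cross-term in the form $r_{n-1}^\top(\beta A)u$ as stated in the theorem rather than $\beta u^\top A r_{n-1}$; this uses that $A=\Ex{(x_n x_{n+1}^\top)(x_n^\top x_{n+1})}$ is symmetric, which follows by swapping the iid labels $n\leftrightarrow n+1$ together with the identity $(x_n x_{n+1}^\top)^\top = x_{n+1}x_n^\top$ and the fact that $x_n^\top x_{n+1}$ is a scalar. Collecting the four contributions then matches the claimed identity exactly. I expect the main obstacle to be purely bookkeeping: the distribution produces on the order of a dozen cross-terms, and the argument only goes through cleanly if one is systematic about which terms vanish by the zero-mean conditions on $\epsilon_n$ and $\epsilon_{n+1}$, which collapse by independence of $x_n$ and $x_{n+1}$, and which survive to be identified with $A$, $B$, or $d^2$; no deeper probabilistic tool or inequality is required.
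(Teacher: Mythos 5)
Your proposal is correct and follows essentially the same route as the paper's proof: substitute the SGDM recurrence into the gradient at $\theta_n$, expand the product $(x_{n+1}^\top r_n-\epsilon_{n+1})(x_n^\top r_{n-1}-\epsilon_n)(x_{n+1}^\top x_n)$, drop the mean-zero $\epsilon$ terms under the conditional expectation, and identify the survivors with $A$, $B$, and $d^2$. Your explicit verification that $A$ is symmetric (via the iid label swap) so that $\beta\,u^\top A\,r_{n-1}$ matches the stated form $r_{n-1}^\top(\beta A)u$ is a small bookkeeping point the paper leaves implicit, but it does not change the argument.
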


\emph{Remarks.}
The momentum term $\beta A$ only becomes significant in the stationary phase when $\| \theta_{n-1} - \theta_\star \|^2 \approx \| \theta_{n-1} - \theta_{n-2} \|^2$.
It makes an expected positive contribution as $\theta_{n-1}$ and $\theta_{n-2}$ are more likely to be on opposite sides of $\theta_\star$ in the stationary phase. Otherwise, progress towards $\theta_\star$ is still being made in the transient phase.

In the transient phase the bias dominates, resulting in an expected positive contribution from $(A-\gamma B)$ to the test statistic.
In the stationary phase the variance dominates, resulting in an expected negative contribution from $- \gamma \sigma_{quad}^2 d^2$.
Theorem~\ref{thm:quad_diag} supports that in stationarity momentum $\beta$ contributes positively to the test statistic, and $\beta$ that is too high makes the expected value of the test statistic positive.

We empirically validate the effect of high momentum on the test statistic for quadratic loss. We sample $1000$ data points $x \sim N ( 0, I_{20} )$, set $y = x^\top \theta_\star + \epsilon$ with $\epsilon \sim N ( 0, 1 )$, and $\theta_{\star, i} = (-1)^{i} \ 2 \exp( -0.7 i )$ for $i = 1, \dots, 20$.
SGDM is run with batch size $25$ for $50$ epochs.
The stationary phase is marked when the MSE with respect to $\theta_\star$ has flattened out. Table~\ref{tab:high_momentum_ip} reports the mean test statistic in stationarity across 25 independent runs of SGDM. We set $\beta=0.2$ and $\beta=0.9$ to contrast low and high momentum. Both settings attain equivalent MSE. The  results in Table~\ref{tab:high_momentum_ip} support the observations from Corollary~\ref{cor:ip_exp_neg} and Theorem~\ref{thm:quad_diag}: the expectation of the test statistic becomes positive with too~large~momentum.

\begin{table}[h]
\caption{Mean test statistic in stationarity across 25 independent runs with $\gamma = 10^{-2}$.
Low $\beta$ setting has $\beta=0.2$.
High $\beta$ setting has $\beta=0.8$. \vspace{-0.1in}
}
\label{tab:high_momentum_ip}
\begin{center}
\begin{tabular}{|c| c c|}
\hline
 & Low $\beta$ & High $\beta$ \\
\hline
Test Statistic in Stationarity & -6.71 & 2.77\\
\hline
\end{tabular}
\end{center}
\end{table}

\section{Distribution  of inner products}\label{sec:distribution_IP}

The convergence diagnostic crucially relies upon the negative expectation of its test statistic.
An important question emerges: \textbf{what drives the expectation of the inner product negative?
What is its relation to momentum?}
At a high level, there is an oscillation in the stationary phase driven by the dominating variance of the stochastic gradients.
This oscillation interacts with the curvature of the loss function around $\theta_\star$, driving the expectation of inner products negative.
We propose a more refined view which also helps explain the observed sensitivity of the expectation to high momentum.

\begin{proposition}\label{prop:noise_key_iter}

In the stationary phase there are a small number of key iterates which drive the expected inner product in Eq.~(\ref{eq:ip_loss}) negative.
Consider the decomposition of the stochastic~gradient
\begin{equation*}
\nabla \ell ( \theta_n, \xi_{n+1} ) = \Ex{ \nabla \ell ( \theta_n, \xi_{n+1} )} + \sigma^2
\end{equation*}
into its true gradient and noise.  A majority of inner products\\ $\nabla \ell ( \theta_n, \xi_{n+1} ) ^\top \nabla \ell ( \theta_{n-1}, \xi_{n} )$ are mean zero due to the dominance of the noise term $\sigma^2$.  \textbf{The expectation is negative due to a relatively sparse number of inner products which have high magnitude and negative sign.}
In these few key inner products the true gradient dominates because of an interaction with the loss curvature.
This behavior depends on a sufficiently small momentum.
\hfill$\Box$
\end{proposition}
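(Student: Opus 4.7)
The plan is to establish Proposition~\ref{prop:noise_key_iter} by decomposing the stochastic gradient into signal and noise, $\nabla \ell(\theta_n,\xi_{n+1}) = g_n + \eta_{n+1}$, where $g_n := \nabla f(\theta_n)$ is the true gradient and $\eta_{n+1} := \nabla \ell(\theta_n,\xi_{n+1}) - g_n$ is mean-zero conditional on the filtration $\mathcal{F}_n$ generated by $\xi_1,\dots,\xi_n$. Expanding the test statistic gives
\[
\nabla \ell(\theta_n,\xi_{n+1})^\top \nabla \ell(\theta_{n-1},\xi_n) = g_n^\top g_{n-1} + g_n^\top \eta_n + \eta_{n+1}^\top g_{n-1} + \eta_{n+1}^\top \eta_n,
\]
which I would analyze term by term.

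First I would show that the three noise-containing terms contribute a pointwise signal with essentially zero mean. Because $\xi_{n+1}$ is independent of $\mathcal{F}_n$, $\Ex{\eta_{n+1}^\top g_{n-1} \mid \mathcal{F}_n} = 0$ and $\Ex{\eta_{n+1}^\top \eta_n \mid \mathcal{F}_n} = 0$, while the only mildly dependent cross term $g_n^\top \eta_n$ picks up a bias of order $\gamma$ through the one-step Taylor expansion $g_n = g_{n-1} + O(\|\theta_n - \theta_{n-1}\|)$, which is subdominant in stationarity. Together these terms carry the bulk of the pointwise variance of the inner product, so on a typical iterate the test statistic looks noise-dominated and apparently mean zero, matching the ``majority are mean zero'' claim.

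Next I would argue that the remaining term $g_n^\top g_{n-1}$ is the sole source of the negative expectation, but is non-negligible only on a sparse subset of iterates. Using the local quadratic approximation $g_n \approx H (\theta_n - \theta_\star)$ at $\theta_\star$ with $H = \nabla^2 f(\theta_\star) \succ 0$ (valid by Assumption~\ref{assump:strcvx}), one gets
\[
g_n^\top g_{n-1} \approx (\theta_n - \theta_\star)^\top H^2 (\theta_{n-1} - \theta_\star).
\]
Because iterates concentrate near $\theta_\star$ in the stationary phase, $\|g_n\|$ is typically much smaller than $\|\eta_{n+1}\|$, so this term is negligible most of the time. The ``key'' iterates are the rare events in which noise has pushed $\theta_{n-1}$ far along a high-curvature direction and the deterministic restoring drift then carries $\theta_n$ back across $\theta_\star$ on the next step; for those events the displacements are approximately antipodal in the eigenbasis of $H$, producing a large negative value of $g_n^\top g_{n-1}$ that single-handedly shifts the overall mean. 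A concentration argument on the stationary law of $\theta_n$, for instance via the Ornstein--Uhlenbeck diffusion approximation to SGDM, would quantify both the sparsity of these events and the magnitude of their contribution.

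Finally I would incorporate the role of momentum. Unrolling Eq.~(\ref{eq:sgdm}) yields $\theta_n - \theta_{n-1} = -\gamma \sum_{k \le n} \beta^{n-k} \nabla \ell(\theta_{k-1}, \xi_k)$, so large $\beta$ strongly positively correlates consecutive displacements and forces $\theta_n - \theta_\star$ and $\theta_{n-1} - \theta_\star$ to lie on the same side of $\theta_\star$ along the principal curvature directions. This destroys the antipodal geometry needed to generate the negative key events and is exactly the mechanism by which the $\beta A$ term in Theorem~\ref{thm:quad_diag} becomes the dominant positive contribution. The hardest step is the second one: formalizing ``sparse'' requires a quantitative description of the joint stationary distribution of $(\theta_{n-1}, \theta_n)$, so that the fat tail driving $g_n^\top g_{n-1}$ can be cleanly separated from the noise-dominated bulk, and this is where the OU-limit analysis carries the real analytic weight.
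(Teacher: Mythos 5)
You should first be aware that the paper never proves Proposition~\ref{prop:noise_key_iter} formally: it is supported indirectly by the variance-to-mean bounds of Theorem~\ref{thm:ip_var_bound} and Corollary~\ref{cor:set_gamma_threshold} (which show $Var[IP] \gg \Ex{IP}^2$, so a typical realized inner product is fluctuation-dominated) together with the histograms and cosine-vs-norm scatter plots in Figures~\ref{fig:hist_ip}--\ref{fig:angle_norm}. Your attempt to derive the claim analytically from a signal/noise decomposition is therefore a genuinely different and more ambitious route than the paper's.

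However, the central step of your argument has a concrete error. You discard the cross term $g_n^\top \eta_n$ as an $O(\gamma)$ bias that is ``subdominant in stationarity,'' and attribute the negative expectation entirely to rare antipodal realizations of $g_n^\top g_{n-1}$. The paper's own quadratic computation (Theorem~\ref{thm:quad_diag}) says the opposite. Since $\theta_n = \theta_{n-1} - \gamma(g_{n-1}+\eta_n) + \beta(\theta_{n-1}-\theta_{n-2})$, the true gradient at the next iterate satisfies $g_n \approx g_{n-1} - \gamma H \eta_n + \cdots$ with $H = \nabla^2 f(\theta_\star)$, hence $\Ex{g_n^\top \eta_n} \approx -\gamma\, \Ex{\eta_n^\top H \eta_n}$; in the quadratic model this is precisely the $-\gamma \sigma_{quad}^2 d^2$ term, which is the \emph{only} negative term in Theorem~\ref{thm:quad_diag} and the one that dominates in stationarity. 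By contrast, $g_n^\top g_{n-1} \approx (\theta_{n-1}-\theta_\star)^\top (I-\gamma H) H^2 (\theta_{n-1}-\theta_\star)$ is typically \emph{positive} for stable stepsizes (it corresponds to the $(A-\gamma B)$ quadratic form), because for $\gamma \|H\| < 1$ successive iterates do not overshoot $\theta_\star$, so the antipodal geometry you invoke does not occur at the level of the true gradients. The ``key iterates'' of the proposition are those where a large noise realization $\eta_n$ kicks $\theta_n$ outward and the restoring true gradient $g_n$ is then anti-aligned with the stochastic gradient that caused the kick; that is, the heavy negative tail lives in the realized values of the $g_n^\top \eta_n$ term you dropped, not in $g_n^\top g_{n-1}$. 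Your OU-limit program and your account of how large $\beta$ destroys the signal could still be carried out, but they must be applied to that cross term, and the sparsity claim would then follow from the tail of $\eta_n^\top H \eta_n$ rather than from the stationary law of $(\theta_{n-1},\theta_n)$ alone.
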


\emph{Remarks.}
There are some iterates $\theta_n$ in the stationary phase which get relatively farther from $\theta_\star$.
This would require a relatively large gradient $\nabla \ell (\theta_{n-1}, \xi_n)$ pointing generally away from $\theta_\star$.
The following iteration would result in an also large gradient $\nabla \ell (\theta_{n}, \xi_{n+1})$ pointing generally back towards $\theta_\star$ due to the curvature of the loss.

\begin{figure}[h]
\mbox{\hspace{-0.1in}
  \includegraphics[width=1.8in]{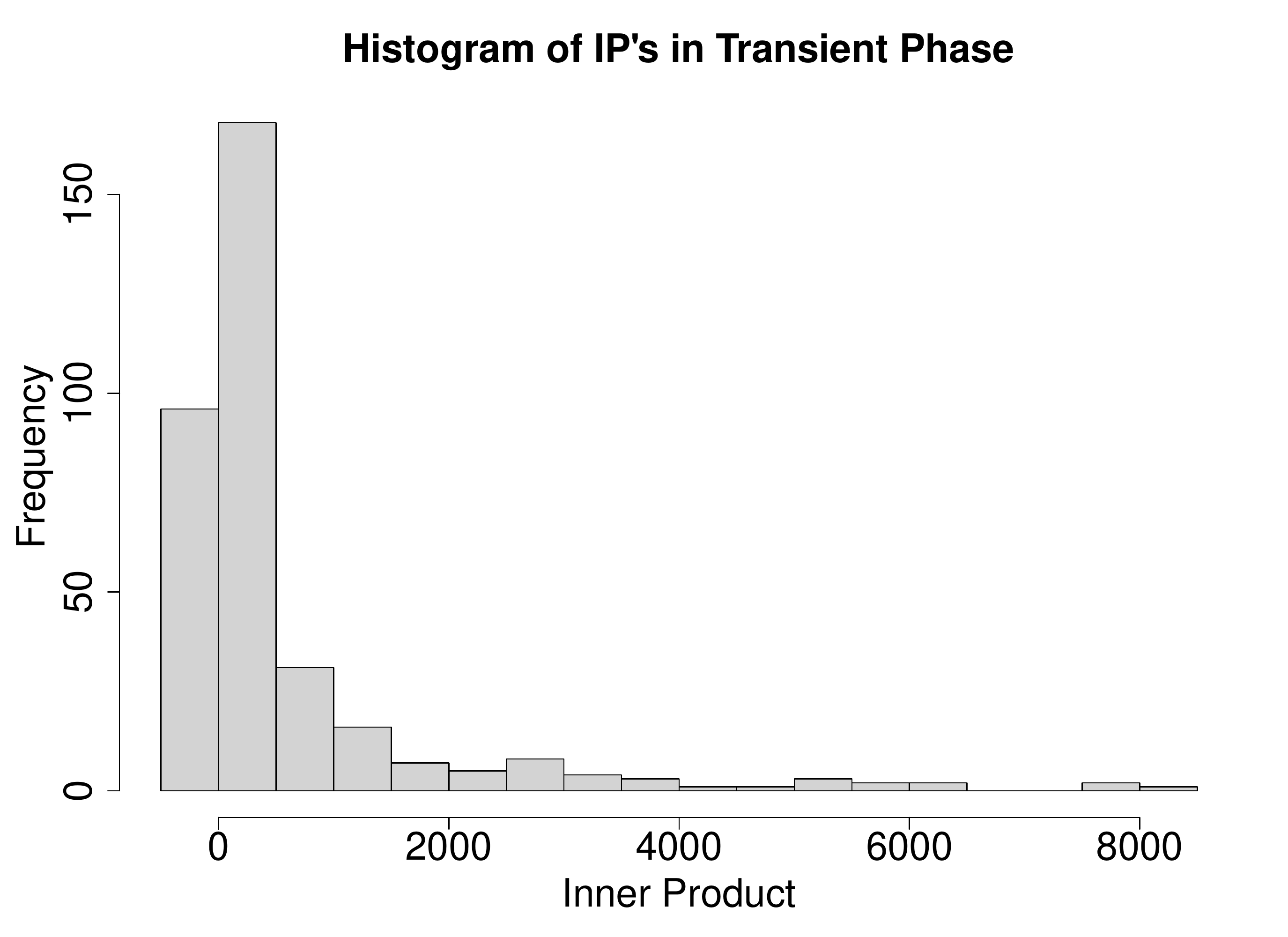}\hspace{-0.12in}
  \includegraphics[width=1.8in]{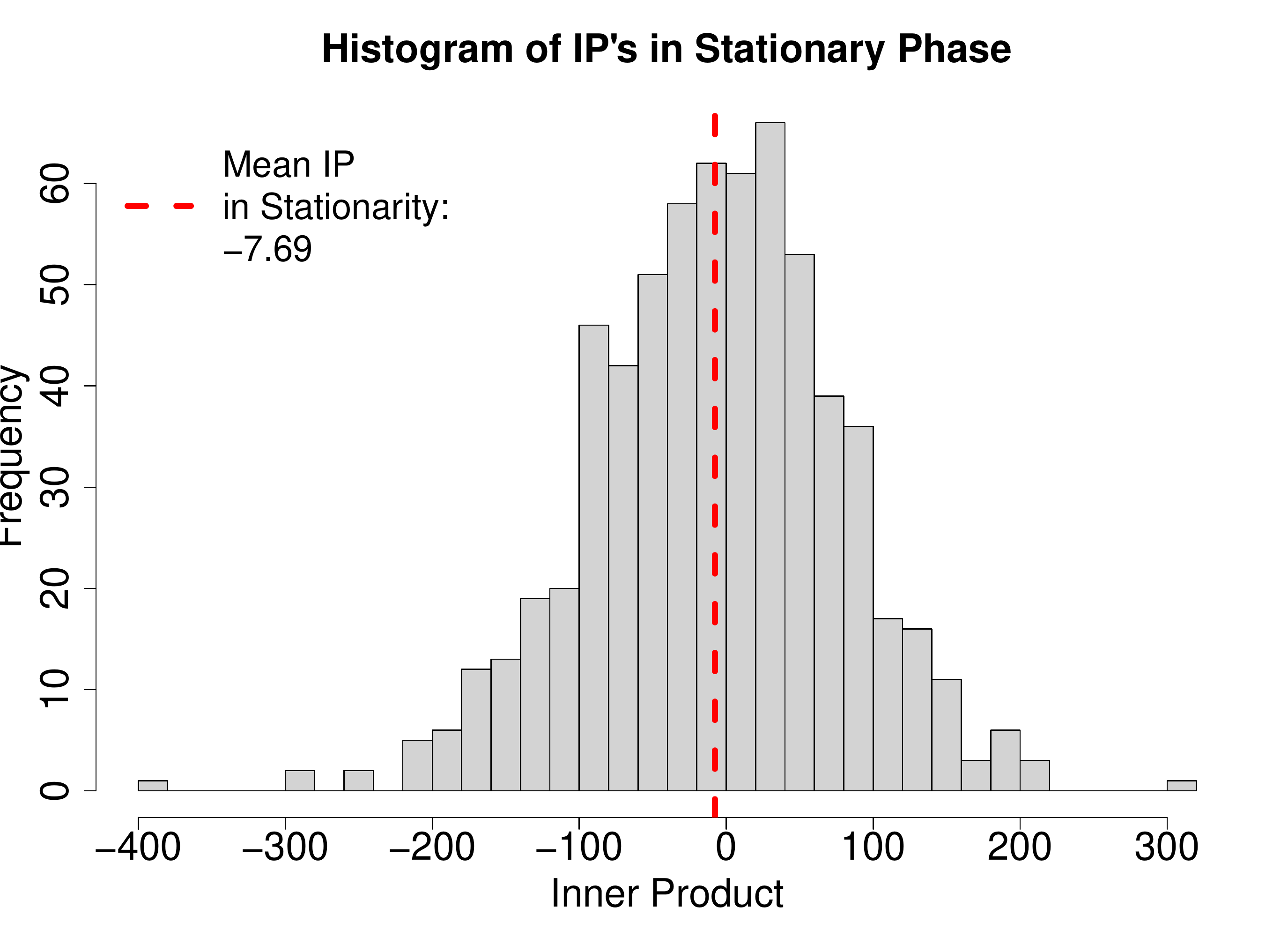}
}
\vspace{-0.2in}
  \caption{Left panel:  Histogram of the inner product of successive gradients (Eq.~(\ref{eq:ip_loss})) for SGDM in the transient phase. Right panel:
  Histogram of the inner products for SGDM in the stationary phase.
  Training settings from the low $\beta$ quadratic setting in Section~\ref{sec:quadratic}.
  Note the asymmetry in the distribution of inner products in the stationary phase.
  }
\label{fig:hist_ip}
\end{figure}

\begin{figure}[h]
\mbox{\hspace{-0.1in}
  \includegraphics[width=1.85in]{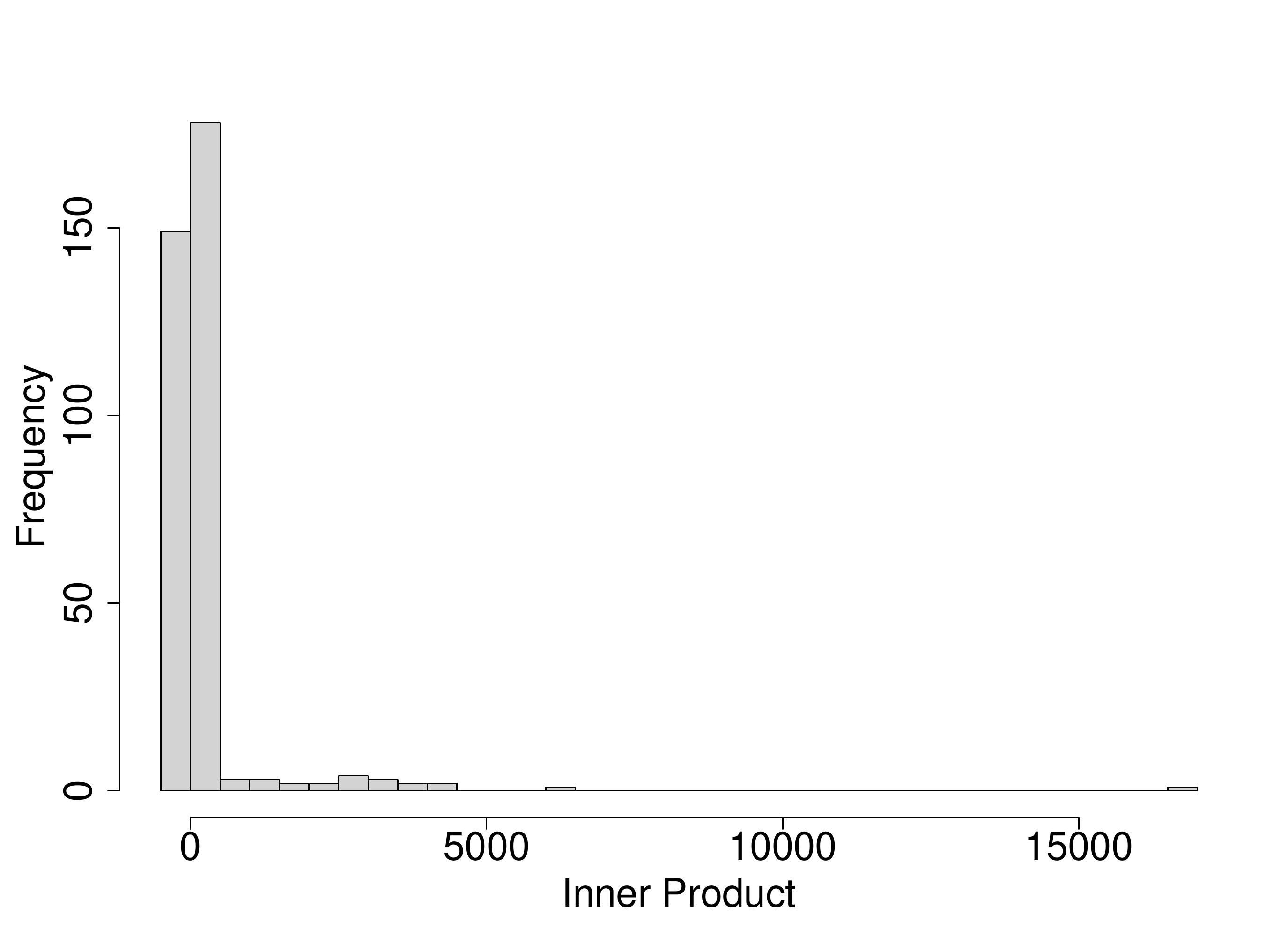}\hspace{-0.12in}
  \includegraphics[width=1.85in]{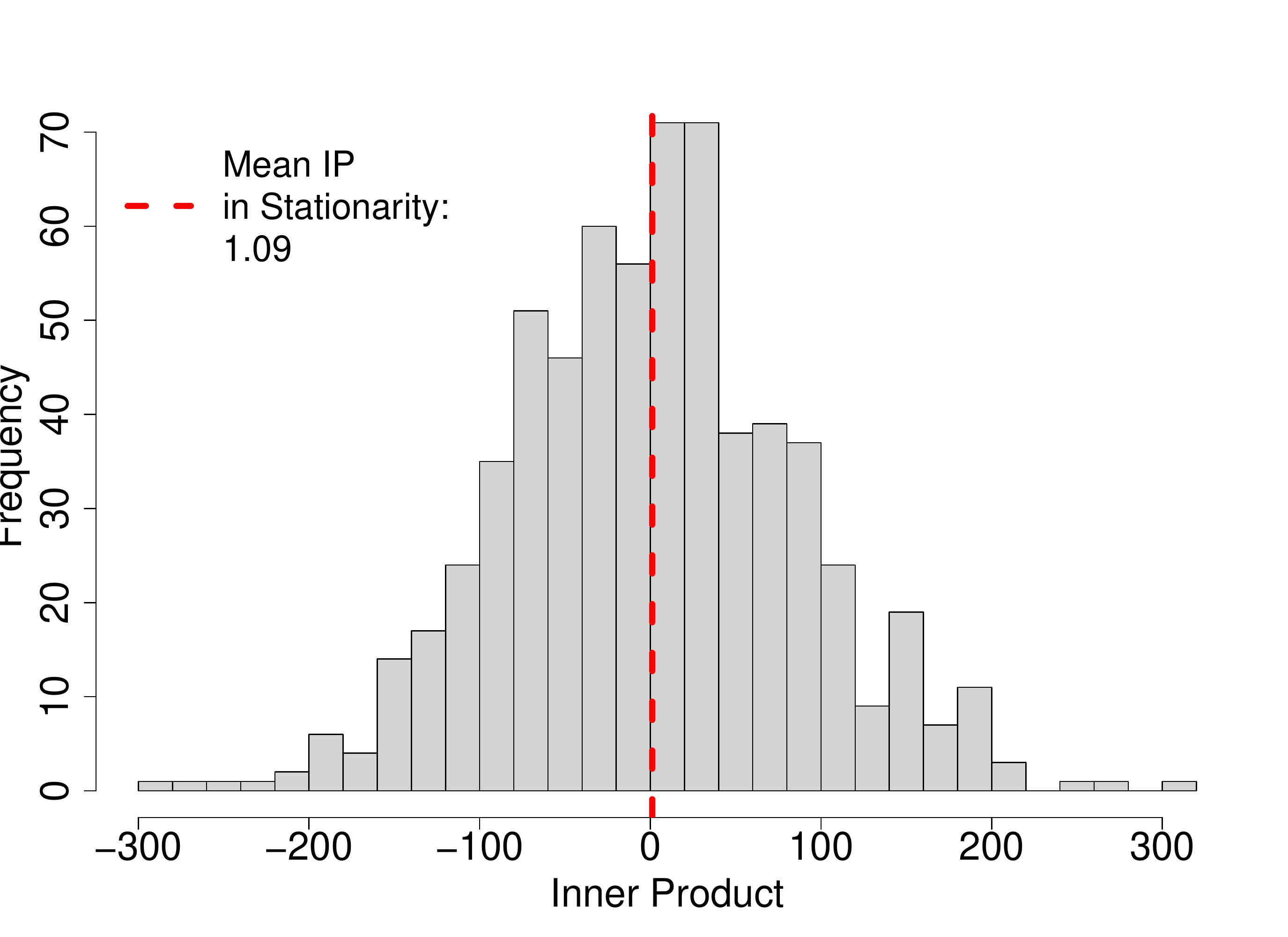}
}
\vspace{-0.2in}
  \caption{ Histogram of the inner product of successive gradients with high momentum $\beta=0.8$ and no momentum reduction.
  Left panel: Transient phase.
  Right panel: Stationary phase.
 Quadratic loss model from Section~\ref{sec:quadratic}.
  Note the symmetry in the distribution of inner products in the stationary phase.
  }
\label{fig:hist_ip_highmom}
\end{figure}

We first provide empirical evidence to support Proposition~\ref{prop:noise_key_iter}.  The low $\beta$ quadratic setting from Section~\ref{sec:quadratic} is used, and SGDM run for 20 epochs. In Fig.~\ref{fig:hist_ip} are histograms of the inner products from Eq.~(\ref{eq:ip_loss}) in the transient and stationary phase. The phase transition is chosen by monitoring MSE with respect to $\theta_\star$.
These results have been observed to be robust across loss functions and parameter settings.  In the transient phase the inner products are majority positive and have positive skew.
This makes sense as when $\theta_n$ is far away from $\theta_\star$, the bias dominates and gradients are likely pointed in the same direction. In the stationary phase we observe a unimodal distribution around zero with a longer negative tail.
\textbf{What is interesting  is the distribution of inner products in the stationary phase.}
The expectation is  negative, and consistently so across many experiments. But the magnitude of the variance exceeds the magnitude of the mean by an order of magnitude. The  high magnitude variance indicates a high frequency of iterates with mean zero inner product.
The larger negative tail--specifically the small number of inner products around $-400$--supports the existence of a small number of key iterates as stated in Proposition~\ref{prop:noise_key_iter}.
This empirical observation is even more striking when you compare similar histograms when the momentum is high $\beta = 0.8$ and keeping all other settings the same, in Fig.~\ref{fig:hist_ip_highmom}.
The previously observed asymmetry in the stationary phase is now gone.
Thus with high momentum (or without momentum reduction), the convergence diagnostic cannot work as there is no clear signal from the test statistic.

\begin{figure}[h]
\mbox{
  \includegraphics[width=1.8in]{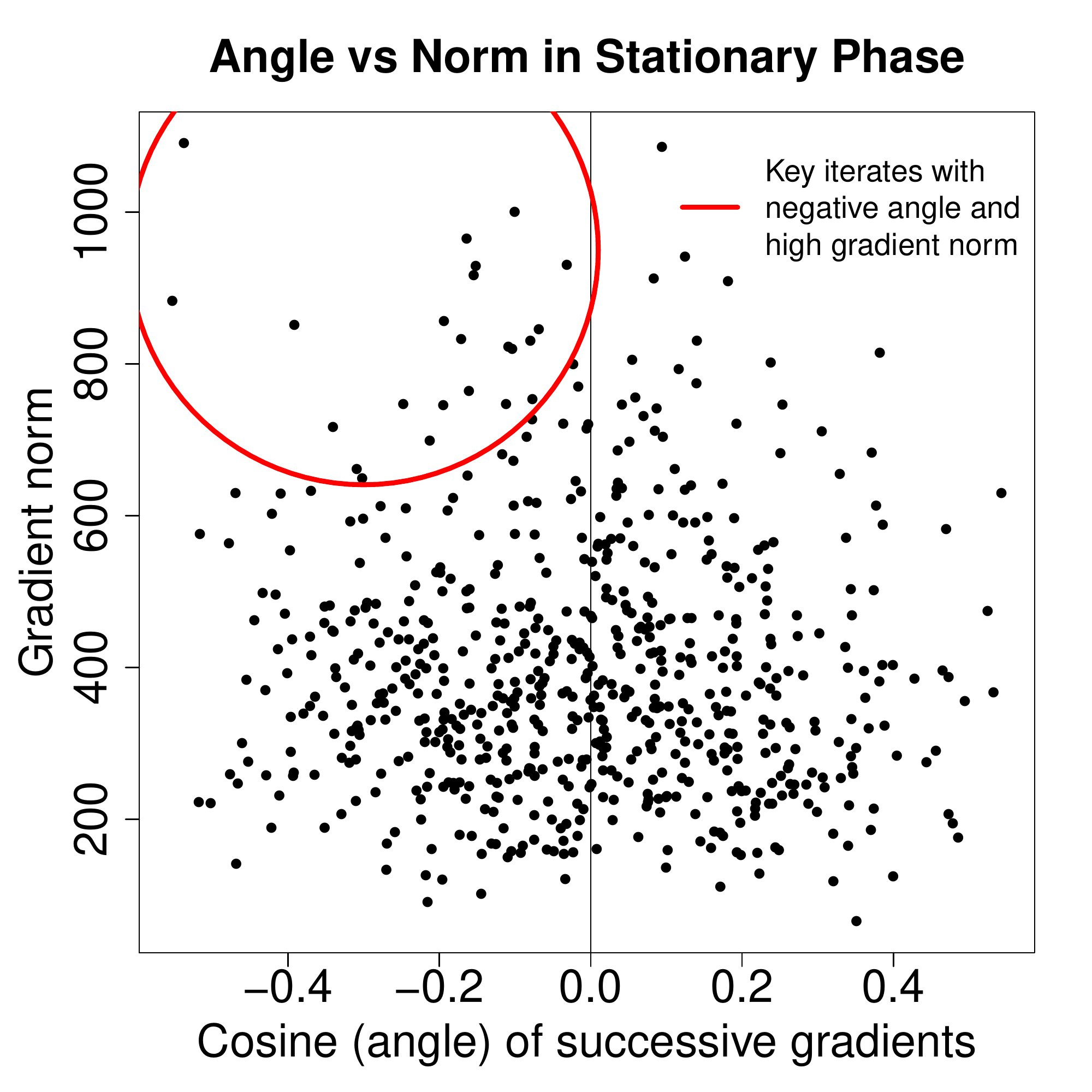}\hspace{-0.12in}
  \includegraphics[width=1.8in]{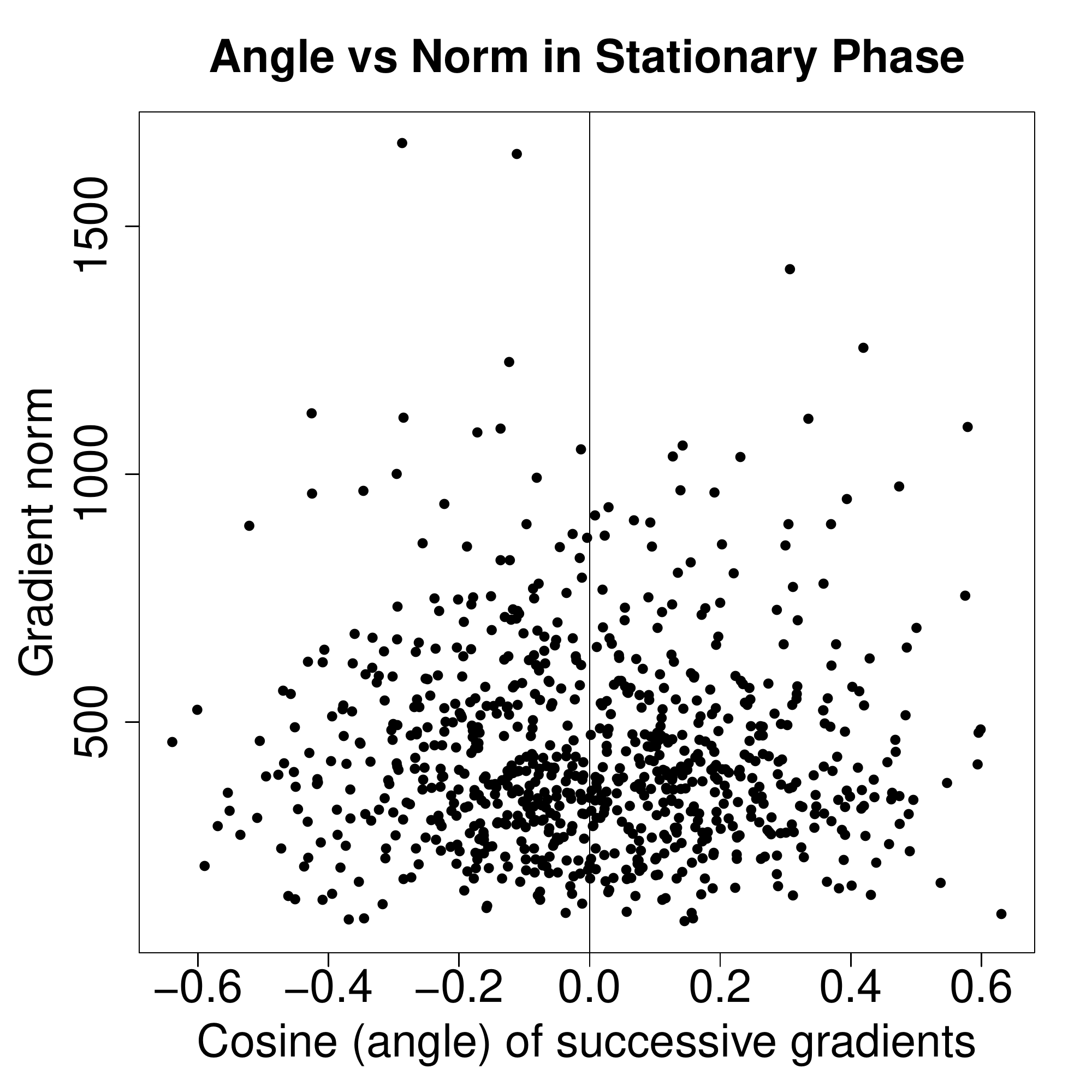}
}
\vspace{-0.2in}
  \caption{Cosine similarity vs gradient norm for SGDM in the stationary phase.
  The red circle indicates those key inner products with negative angle and high gradient norm.
  Left panel: Training settings from the low $\beta$ quadratic setting in Section~\ref{sec:quadratic}.
  Right panel: high momentum $\beta = 0.8$.
  The inner product distribution is symmetric in this case.
  }
  \label{fig:angle_norm}
\end{figure}

Fig.~\ref{fig:angle_norm} provides the further empirical evidence  by plotting the magnitude $\| \nabla \ell (\theta_n, \xi_{n+1}) \|_2^2$ and  cosine$(\nabla \ell (\theta_n, \xi_{n+1}), \nabla \ell (\theta_{n-1}, \xi_{n}) )$ of successive gradients for SGDM in the stationary phase. Again, we first look at the low $\beta$ quadratic setting in Section~\ref{sec:quadratic}. A red circle is drawn to identify iterates with high magnitude and negative angle, exactly those key iterates described in Proposition~\ref{prop:noise_key_iter}.
We see such key iterates exist and drive the expectation negative.
These results have been observed to be robust across loss functions and parameter settings.  With high momentum we have empirically observed that these key inner products with high magnitude and negative angle disappear.

Thus Proposition~\ref{prop:noise_key_iter} helps explain the observed sensitivity of the expected test statistic to high momentum.

\subsection{Variance bounds}
We now provide theory which supports Proposition~\ref{prop:noise_key_iter}. We show that in the stationary phase, the magnitude of the variance dominates the magnitude of the mean
for the test statistic of the convergence diagnostic.  A relatively large variance of the inner products $\nabla \ell (\theta_n, \xi_{n+1})^\top \nabla \ell (\theta_{n-1}, \xi_n)$ suggests that a majority of iterates are dominated by the variance of the stochastic gradient.
A relatively small mean for the inner products suggests that a minority of iterates drive the expectation. Even though in the stationary phase SGDM is trapped in a bounded region, and the expected test statistic driven by a sparse number of key iterates, there is still significant room for random motion.

\begin{theorem}
\label{thm:ip_var_bound}
Consider the SGDM procedure in Eq.~(\ref{eq:sgdm}).
Suppose that Assumptions~\ref{assump:strcvx},~\ref{assump:Lsmooth},~\ref{assump:Fbound},~\ref{assump:min_noise}, and~\ref{assump:scaling} hold. Define\\ $IP = \nabla \ell(\theta_n, \xi_{n+1})^\top \nabla \ell(\theta_{n-1}, \xi_n)$.
Then,
\begin{align*}
\frac{Var [ IP ]}{\mathbb{E} [ IP ]^2}
&\geq \frac{ ( M - L \gamma \sigma_0^2 A_\beta )^2 }{M^2 ( 1 + 8 L / c )^2 }
- 1.\hspace{1in}\Box
\end{align*}\\
\end{theorem}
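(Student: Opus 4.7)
The strategy is to use the algebraic identity
\[
\frac{\mathrm{Var}[IP]}{\mathbb{E}[IP]^2} \;=\; \frac{\mathbb{E}[IP^2]}{\mathbb{E}[IP]^2}\;-\;1,
\]
which reduces the theorem to producing two separate estimates: an upper bound $|\mathbb{E}[IP]| \leq M(1+8L/c)$ that forms the denominator $M^2(1+8L/c)^2$, and a lower bound $\mathbb{E}[IP^2] \geq (M - L\gamma\sigma_0^2 A_\beta)^2$ that forms the numerator.

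For the upper bound on $|\mathbb{E}[IP]|$, the first move is to exploit the independence of the minibatches $\xi_n$ and $\xi_{n+1}$: conditioning on $\theta_{n-1}, \theta_n$ the stochastic gradients are independent with means $\nabla f(\theta_n)$ and $\nabla f(\theta_{n-1})$, so by the tower property $\mathbb{E}[IP] = \mathbb{E}[\nabla f(\theta_n)^\top \nabla f(\theta_{n-1})]$. Apply Cauchy--Schwarz and then $L$-smoothness in the form $\|\nabla f(\theta)\|^2 \leq 2L(f(\theta) - f(\theta_\star))$ (Assumption~\ref{assump:Lsmooth}) paired with the stationary-phase bound $\mathbb{E}[f(\theta_n) - f(\theta_\star)] \leq \gamma M$ from Assumption~\ref{assump:Fbound}. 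Assumption~\ref{assump:scaling} is used to control the cross-iterate term $\mathbb{E}[\Delta_n^\top \Delta_{n-1}]$ that appears when the SGDM recursion in (\ref{eq:sgdm}) is unfolded, and strong convexity (Assumption~\ref{assump:strcvx}) converts remaining $\gamma$-dependent quantities back into the constant $M$. The constant $1+8L/c$ is expected to emerge from this chain as a conservative absorber of the $L/c$-ratio produced by the smoothness-to-strong-convexity conversion and the $K$-factor from Assumption~\ref{assump:scaling}.

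For the lower bound on $\mathbb{E}[IP^2]$, the plan is to sharpen Theorem~\ref{thm:ip_exp_bd} by rerunning its argument with $L$-smoothness (Assumption~\ref{assump:Lsmooth}) taking the role played by strong convexity (Assumption~\ref{assump:strcvx}) in the original proof. Concretely, starting from the decomposition (\ref{eq:ip_breakdown}) and the identity $\mathbb{E}\|\Delta_n\|^2 \geq \gamma^2 \sigma_0^2 A_\beta$ (which follows from Assumptions~\ref{assump:min_noise}--\ref{assump:scaling}), the step in the Theorem~\ref{thm:ip_exp_bd} proof that produces a coefficient $c/2$ on $\gamma\sigma_0^2 A_\beta$ is replaced by its $L$-analog, upgrading the coefficient from $c/2$ to $L$ and removing the $(1+\beta)$ prefactor; this yields the one-sided inequality $\mathbb{E}[IP] \leq M - L\gamma\sigma_0^2 A_\beta$. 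In the large-$\gamma$ regime of Corollary~\ref{cor:ip_exp_neg}, the right-hand side is negative, so squaring gives $\mathbb{E}[IP]^2 \geq (M - L\gamma\sigma_0^2 A_\beta)^2$, and the trivial inequality $\mathbb{E}[IP^2] \geq \mathbb{E}[IP]^2$ transfers the bound up to the second moment. Combining the two estimates yields the stated inequality.

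The main obstacle is the second step. Substituting $L$ for $c/2$ at the correct place in the Theorem~\ref{thm:ip_exp_bd} skeleton has to be done so that the direction of the inequality on $\mathbb{E}[IP]$ is preserved: one must track the sign carefully through the Cauchy--Schwarz bound on $\mathbb{E}[\nabla\ell_{n+1}^\top \Delta_{n-1}]$ and through the $K$-factor introduced by Assumption~\ref{assump:scaling}, and then verify that the refined bound $M - L\gamma\sigma_0^2 A_\beta$ is actually negative in the nontrivial regime (otherwise squaring is vacuous). The upper bound on $|\mathbb{E}[IP]|$ is, by comparison, a fairly routine chain of standard convex-analytic inequalities, with the only real bookkeeping being the exact constant $1 + 8L/c$.
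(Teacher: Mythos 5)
Your outer skeleton (write $\mathrm{Var}[IP]/\mathbb{E}[IP]^2 = \mathbb{E}[IP^2]/\mathbb{E}[IP]^2 - 1$, then bound numerator and denominator separately) is exactly the paper's, but both of your component estimates have genuine gaps. For the denominator, your opening move --- ``conditioning on $\theta_{n-1},\theta_n$ the two stochastic gradients are independent, so $\mathbb{E}[IP]=\mathbb{E}[\nabla f(\theta_n)^\top\nabla f(\theta_{n-1})]$'' --- is false: $\xi_n$ is used to produce $\theta_n$, so given $(\theta_{n-2},\theta_{n-1},\theta_n)$ the gradient $\nabla\ell(\theta_{n-1},\xi_n)$ is not conditionally unbiased for $\nabla f(\theta_{n-1})$; it is in fact a deterministic function of the iterates, $\gamma\nabla\ell(\theta_{n-1},\xi_n)=\theta_{n-1}-\theta_n+\beta(\theta_{n-1}-\theta_{n-2})$. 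The paper exploits precisely this identity (the decomposition in Eq.~(\ref{eq:ip_breakdown})), takes conditional expectation only over $\xi_{n+1}$, applies the $L$-smoothness lower bound to each of the two resulting terms, and then controls $\mathbb{E}\|\theta_n-\theta_{n-1}\|^2\le 8\gamma M'/c$ via strong convexity plus the triangle inequality (Lemma~\ref{lemma:MSE_SGDM_bound}); that is where the specific constant $1+8L/c$ comes from. Your Cauchy--Schwarz-plus-smoothness route would instead produce something of the form $2L\gamma M$, and ``the constant is expected to emerge as a conservative absorber'' is not a derivation.

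The numerator plan fails for a structural reason. You propose to rerun the Theorem~\ref{thm:ip_exp_bd} argument with $L$-smoothness ``taking the role played by strong convexity,'' upgrading the coefficient $c/2$ to $L$ to get $\mathbb{E}[IP]\le M-L\gamma\sigma_0^2A_\beta<0$ and then squaring. But smoothness gives $\nabla f(x)^\top(y-x)\le f(y)-f(x)+\tfrac{L}{2}\|y-x\|^2$, with a \emph{plus} sign on the quadratic; only strong convexity yields the negative quadratic in an upper bound, and then only with constant $c/2$. So the inequality $\mathbb{E}[IP]\le M-L\gamma\sigma_0^2A_\beta$ is not obtainable from the stated assumptions, and without it your sign argument and the subsequent squaring collapse. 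The paper goes the other way (Lemma~\ref{lemma:exp_ip_sq_lower}): it lower-bounds $\mathbb{E}[IP^2]\ge\mathbb{E}[IP]^2$ by Jensen, substitutes the decomposition~(\ref{eq:ip_breakdown}), applies the smoothness \emph{lower} bound (which legitimately carries $-\tfrac{L}{2}\|\cdot\|^2$), expands the square, and bounds the resulting cross terms one by one using Lemma~\ref{lemma:lower_bound_thetadiff} ($\mathbb{E}\|\Delta_n\|^2\ge\gamma^2\sigma_0^2A_\beta$), Cauchy--Schwarz, and an independence/positive-correlation argument; the $(1+\beta)^2$ factors then cancel between numerator and denominator. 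You would need to reproduce that term-by-term expansion (or supply a separate argument pinning down the sign of $\mathbb{E}[IP]$ under an explicit learning-rate condition) before the squaring step can be justified.
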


\begin{corollary}
\label{cor:set_gamma_threshold}
Consider the SGDM procedure in Eq.~(\ref{eq:sgdm}).
Fix a scaling factor $\lambda > 2$.
Set the learning rate $\gamma = 2 t M / L \sigma_0^2 A_\beta$ with $t  \geq 1 + \sqrt{ \lambda } ( 1 + 4 L / c )$.
Then,
\begin{equation*}
\hspace{0.5in}Var [IP] \geq (\lambda - 1) \ \Ex{IP}^2.\hspace{1.2in}\Box
\end{equation*}
\end{corollary}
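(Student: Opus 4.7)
The plan is to derive Corollary~\ref{cor:set_gamma_threshold} as a direct consequence of Theorem~\ref{thm:ip_var_bound} by substituting the prescribed learning rate and checking that the condition on $t$ makes the resulting lower bound exceed $\lambda - 1$. In broad strokes, the bound from Theorem~\ref{thm:ip_var_bound} has the form $\mathrm{Var}[IP]/\mathbb{E}[IP]^2 \geq R(\gamma)^2 - 1$ for a certain affine function $R$ of $\gamma$, and the goal is to choose $\gamma$ large enough that $R(\gamma)^2 \geq \lambda$.

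First I would substitute the prescribed rate $\gamma = 2tM / (L\sigma_0^2 A_\beta)$ into the right-hand side of Theorem~\ref{thm:ip_var_bound}. This gives $L\gamma\sigma_0^2 A_\beta = 2tM$, so $M - L\gamma\sigma_0^2 A_\beta = M(1-2t)$, and since $t \geq 1$, squaring yields $(M - L\gamma\sigma_0^2 A_\beta)^2 = M^2(2t-1)^2$. The ratio in Theorem~\ref{thm:ip_var_bound} therefore reduces to
\begin{equation*}
\frac{\mathrm{Var}[IP]}{\mathbb{E}[IP]^2} \;\geq\; \frac{(2t-1)^2}{(1 + 8L/c)^2} - 1.
\end{equation*}
So it remains to verify that $(2t-1)^2 \geq \lambda\,(1 + 8L/c)^2$, which is equivalent to $2t - 1 \geq \sqrt{\lambda}\,(1 + 8L/c)$ since both sides are positive under $t \geq 1$ and $\lambda > 2$.

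Next I would plug in the assumed lower bound $t \geq 1 + \sqrt{\lambda}(1 + 4L/c)$ and verify the inequality directly:
\begin{equation*}
2t - 1 \;\geq\; 2\bigl(1 + \sqrt{\lambda}(1 + 4L/c)\bigr) - 1 \;=\; 1 + 2\sqrt{\lambda} + 8\sqrt{\lambda}L/c,
\end{equation*}
while the target is $\sqrt{\lambda} + 8\sqrt{\lambda}L/c$. The slack is $1 + \sqrt{\lambda} > 0$, confirming the inequality with room to spare. Combining with the reduced ratio above gives $\mathrm{Var}[IP]/\mathbb{E}[IP]^2 \geq \lambda - 1$, and multiplying through by $\mathbb{E}[IP]^2$ gives the stated conclusion.

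I do not expect a substantive obstacle here: the proof is essentially algebraic bookkeeping on top of Theorem~\ref{thm:ip_var_bound}. The one thing worth being careful about is the sign when passing from $(M - L\gamma\sigma_0^2 A_\beta)^2$ to $(2t-1)^2$ (one must note that $t \geq 1$ ensures $2t - 1 > 0$, so taking square roots of both sides of the target inequality is legitimate) and the implicit requirement that the right-hand side of Theorem~\ref{thm:ip_var_bound} be nonnegative for the bound to be informative, which is guaranteed precisely by the choice of $t$.
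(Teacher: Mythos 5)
Your proposal is correct and follows the same overall strategy as the paper: substitute $\gamma = 2tM/(L\sigma_0^2 A_\beta)$ into Theorem~\ref{thm:ip_var_bound} and reduce the claim to an algebraic condition on $t$. The one substantive divergence is which form of the theorem's bound you plug into. You use the theorem exactly as stated, so $L\gamma\sigma_0^2 A_\beta = 2tM$ and the numerator becomes $M^2(2t-1)^2$, after which the hypothesis $t \geq 1 + \sqrt{\lambda}(1+4L/c)$ comfortably implies $2t-1 \geq \sqrt{\lambda}(1+8L/c)$ and hence the conclusion; your sign check via $t\geq 1$ is the right thing to be careful about. The paper's own proof instead arrives at $(1-t)^2/(1+8L/c)^2 - 1$, which corresponds to the $\frac{L}{2}\gamma\sigma_0^2 A_\beta$ version of the bound appearing in Lemma~\ref{lemma:exp_ip_sq_lower} (the theorem statement and its proof differ by this factor of $2$), and then solves the quadratic $(1-t)^2 \geq \lambda(1+8L/c)^2$ to get $t \geq 1 + \sqrt{\lambda}(1+8L/c)$ --- a condition that does \emph{not} match the corollary's stated hypothesis $t \geq 1 + \sqrt{\lambda}(1+4L/c)$, and under which that stated hypothesis would in fact be insufficient. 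So your route is not only valid but is the one under which the corollary as written actually holds; the paper's version requires either strengthening the hypothesis on $t$ to use $8L/c$ or adopting the theorem's stated (factor-$L$) bound as you did. Worth noting in passing that the appendix works with $M'$ where $M = 8M'$, but this constant cancels in the ratio and does not affect either argument.
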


\emph{Remarks.}
The results hold regardless of the sign of the expectation of inner products, and show that the variance of the test statistic upper bounds the squared mean.
Corollary~\ref{cor:set_gamma_threshold} specifies that a greater learning rate increases the variance bound.
This makes sense as a larger learning rate increases the radius of the stationary region.

We have seen that Theorem~\ref{thm:ip_var_bound} and Corollary~\ref{cor:set_gamma_threshold}, along with Figures~\ref{fig:hist_ip}, \ref{fig:hist_ip_highmom} and~\ref{fig:angle_norm}, provide theoretical and empirical support for Proposition~\ref{prop:noise_key_iter}. While the bound in Theorem~\ref{thm:ip_var_bound} is more robust to $\beta$, it is still unable to provide practical guidance as  the data dependent constants $M$, $L$, $c$, $\sigma_0^2$, and $A_\beta$ must still be estimated.

The convergence diagnostic monitors a certain signal in the gradients.
In Section~\ref{sec:momentum-stationarity} we have shown that this signal can be sensitive to high momentum, and in this Section we have shown that the signal may be sparse within other gradient noise.
Currently, we believe that an empirical mean is still the best way to capture this gradient signal, with the simple but effective automatic reduction in Alg.~\ref{alg:diagnostic} to combat the negative effects of high momentum.

\section{Numerical experiments}\label{sec:synth-data-experiments}
We now evaluate the convergence diagnostic in Alg.~\ref{alg:diagnostic} on synthetic data experiments in settings of quadratic loss and phase retrieval~\cite{Article:Chen_MP19}.
The quadratic loss setting is described in Section~\ref{sec:quadratic}. For phase retrieval let $\ell ( \theta, y, x ) = 1/4 [ ( x^\top \theta )^2 - y ]^2$ with $x \sim N ( 0, I_{20} )$ and $y = ( x^\top \theta_\star )^2$.
$\theta_{\star,i} = (-1)^i \times 2 \exp( -0.7 i )$ for $i = 1, \dots, 20$.
The number of data points $N = 10^3$.
The checking period $c$ is every epoch.
Due to non-convexity in phase retrieval we only record the training runs where SGDM has entered a good minima to eliminate the need to tune  the parameters for our error rate procedure for different minima.

There are two failure modes: the convergence diagnostic can activate too early, or too late.
Let $\theta_n$ be the estimate when the convergence diagnostic has activated.
If the diagnostic activates too early then the error is too high, i.e., $\| \theta_n - \theta_\star \|^2 > \eta$ for some threshold value.
$\eta$ is set as a tight upper bound on the error observed in the stationary phase across many runs.
If the diagnostic activates too late, it can waste unnecessary computation. Let $K = (n - k) / n$ such that $\| \theta_k - \theta_n \|^2 = \eta$ and $k \leq n$.
If the diagnostic activates too late, then we expect $\theta_n$ to be far into the stationary phase,  and thus $n-k$ to be a significant portion of  $n$, i.e., $K > \kappa$ for some threshold value. $\kappa$ is set as a tight lower bound on the $K$ calculated by running SGDM  into the stationary phase.

Table~\ref{tab:diag_errors} displays the results of 100 independent runs with the percentage of type I errors (too early), type II errors (too late), and good diagnostic activations.
Low and high momentum settings are used for quadratic loss and phase retrieval.
Empirically the type I errors are small, while the type II errors are a larger concern.
This observation on the higher frequency of type II errors is corroborated by~\cite{Proc:Chee_AISTATS18}. Encouragingly we see that the automatic momentum reduction in Alg.~\ref{alg:diagnostic} has enabled the convergence diagnostic to be robust to momentum.
High momentum settings have little to no effect on the Type I and II error rates of the convergence diagnostic.
Contrast Table~\ref{tab:diag_errors} with the results in Table~\ref{tab:high_momentum_ip}, where the sign of the test statistic in the stationary phase was shown to be sensitive to momentum.
Additionally, in approximately $50$-$70\%$ of all Type II errors the diagnostic activated only moderately late. In approximately $10$-$20\%$ of Type II errors the diagnostic~activated~late.

\begin{table}[t]
\caption{
Empirical evaluation of the convergence diagnostic in Alg.~\ref{alg:diagnostic} over 100 independent runs for each experimental setting.
SGDM run for $20$ epochs with batch size $20$.
Quadratic low $\beta$ (Q-Low) set $\beta = 0.2$, $\gamma = 10^{-2}$, $\eta = 10^{-3}$, $\kappa = 0.65$.
Quadratic high $\beta$ (Q-High) set $\beta = 0.8$, $\gamma = 10^{-2}$, $\eta = 2 \times 10^{-3}$, $\kappa = 0.30$.
Phase retrieval low $\beta$ (PR-Low) set $\beta=0.2$, $\gamma=10^{-2}$, $\eta = 10^{-2}$, $\kappa = 0.6$.
Phase retrieval high $\beta$ (PR-High) set $\beta=0.8$, $\gamma=10^{-2}$, $\eta = 10^{-2}$, $\kappa = 0.65$.
\vspace{-0.1in}}
\label{tab:diag_errors}
\begin{center}
\begin{tabular}{|c| c c c|}
\hline
 & Type I & Type II & Good \\ %[0.5ex]
 & (too early) & (too late) & activation \\
\hline%\hline
Q-Low & 1\% & 22\% & 77\% \\
\hline
Q-High & 0\% & 17\% & 83\% \\
\hline
PR-Low & 1\% & 17\% & 82\% \\
\hline
PR-High & 0\% & 16\% & 84\% \\
\hline
\end{tabular}
\end{center}
\end{table}

\section{Application: an automatic learning rate schedule}
\label{sec:autoLR}
The convergence diagnostic has a natural application in automating the learning rate schedule.
Hyper-parameter tuning, especially for the learning rate, has a major effect on the performance of stochastic gradient methods.
Tuning is typically a very manual process requiring many training re-runs.
The benefit of an automatic learning rate is to greatly reduce the amount of supervision and number of training runs.

\begin{algorithm}[h]
\SetKwInOut{Input}{input}
\Input{Alg.~\ref{alg:diagnostic} {\tt SGDM}$(\theta, \gamma)$,  initial and minimum stepsize $\gamma_0$, $\gamma_{min}$, learning rate reduction $\rho \in (0,1)$
}
$\gamma \gets \gamma_0$ \\
\While{$\gamma > \gamma_{min}$}{
	$\theta \gets {\tt SGDM} ( \theta, \gamma )$ \\
	$\gamma \gets \rho \times \gamma$ \\
}
\Return{ $\theta$}
\caption{SGDM with automatic learning rate}
\label{alg:autoLR}
\end{algorithm}

In Alg.~\ref{alg:autoLR} we present an automatic learning rate based on the convergence diagnostic in Alg.~\ref{alg:diagnostic}.
SGDM with constant learning rate moves quickly towards $\theta_\star$ but cannot improve beyond distance $O(\gamma)$, as suggested by Theorem~\ref{thm:mom_convg_analysis}.
The convergence diagnostic is used to detect stationarity, after which the learning rate is reduced $\gamma \gets \rho \gamma$ and a smaller radius $O(\rho \gamma)$ of stationarity achieved, with $\rho \in (0,1)$. Alg.~\ref{alg:autoLR} takes advantage of the speedup afforded to constant rate in the transient phase, while avoiding the trade-off cost of a larger stationary region by reducing the learning rate. In practice it is common to use a constant learning rate and decrease several times at hand chosen points~\cite{Proc:He_CVPR16,Article:Krizhevsky_CACM17}.

A major benefit of automatic hyper-parameter tuning is robustness to a potentially misspecified initial setting, in this case $\gamma_0$. We train logistic regression on benchmark datasets
 MNIST and Online News Popularity (from UCI repository),
for a variety of initial learning rates $\gamma_0$.
In Fig.~\ref{fig:mnist_news_binary} the accuracy on a held out test set is compared between the automatic rate in Alg.~\ref{alg:autoLR} and a decreasing rate $\gamma = \gamma_0 / n$ for $\gamma_0^{mnist} \in \{1.0, 0.1, 0.01\}$ in MNIST and $\gamma_0^{news} \in \{10, 1.0, 0.1\}$ in Online News.
The findings are consistent across both datasets.
The automatic learning rate was significantly more robust to initial conditions than the decreasing rate. In addition, the automatic learning rate achieved significantly higher test accuracy than the~decreasing~rate.

\begin{figure}[t]
\begin{center}
  \includegraphics[width=2.3in]{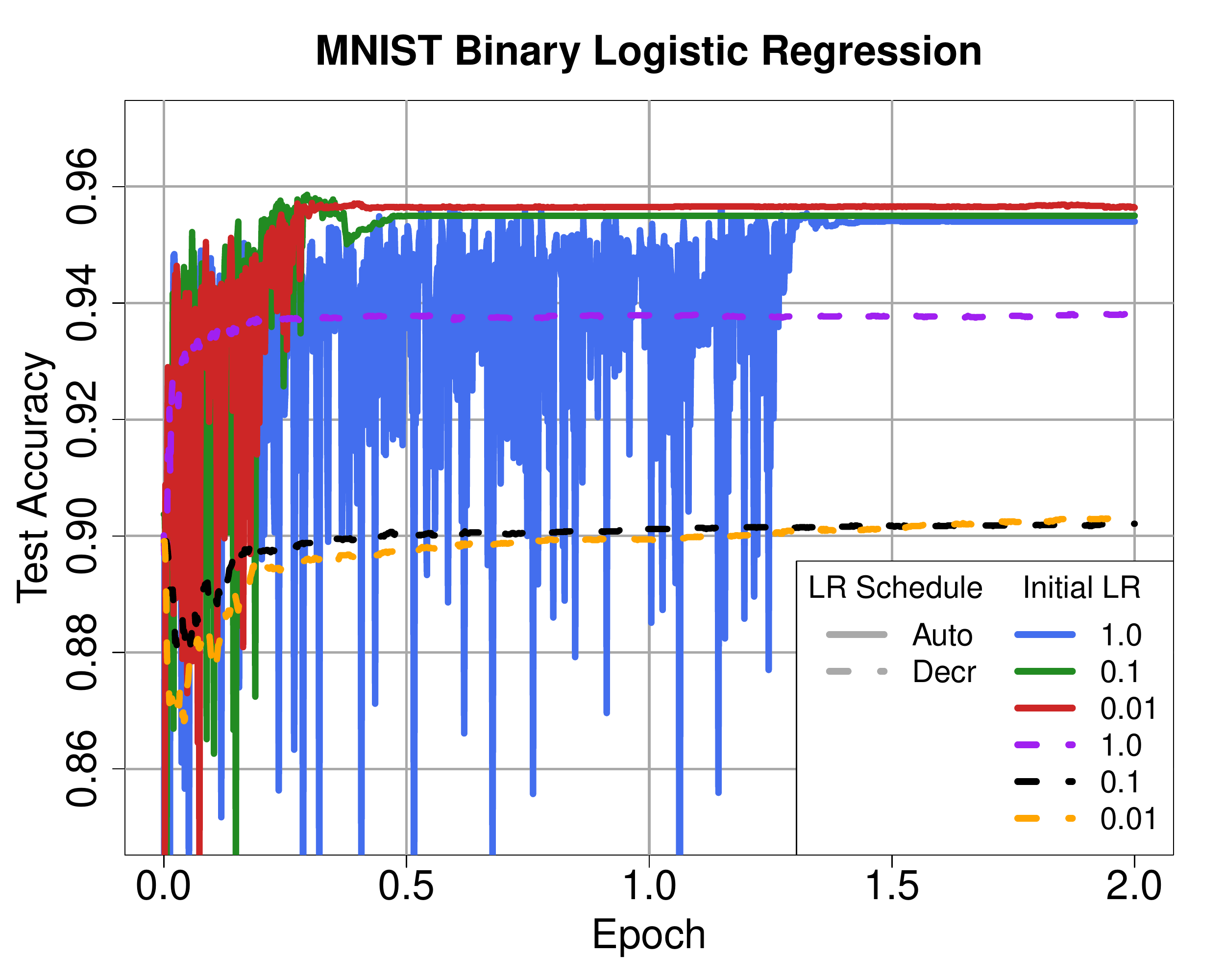}
\includegraphics[width=2.3in]{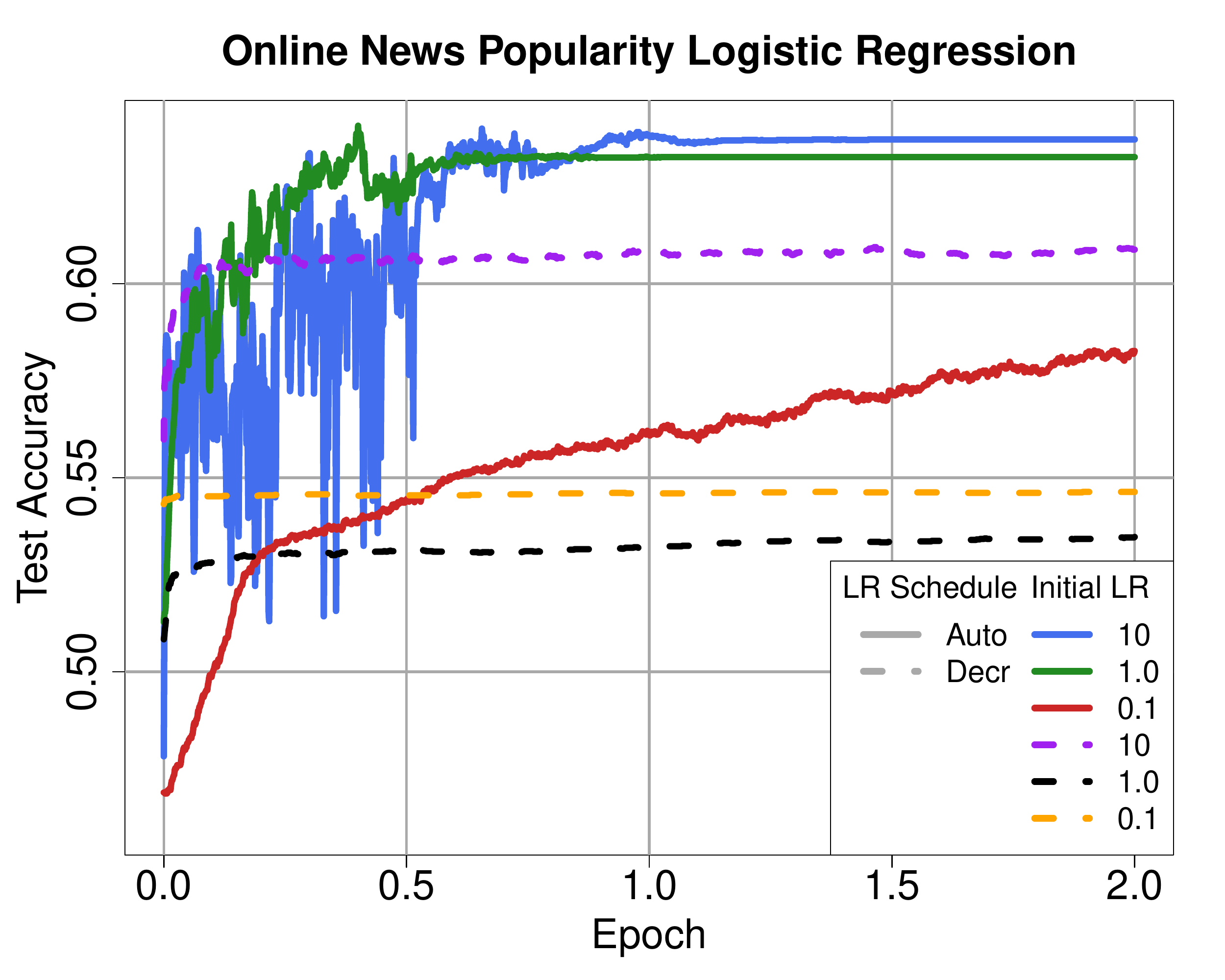}
\end{center}
\vspace{-0.16in}
  \caption{Binary logistic regression  with SGDM using Alg.~\ref{alg:autoLR} and decreasing rate $\gamma = \gamma_0 / n$,
  $\beta = 0.8$. Upper panel: MNIST. Bottom panel: Online News Popularity.
  }
\label{fig:mnist_news_binary}\vspace{-0.13in}
\end{figure}

\begin{figure}[h!]
\begin{center}
  \includegraphics[width=2.3in]{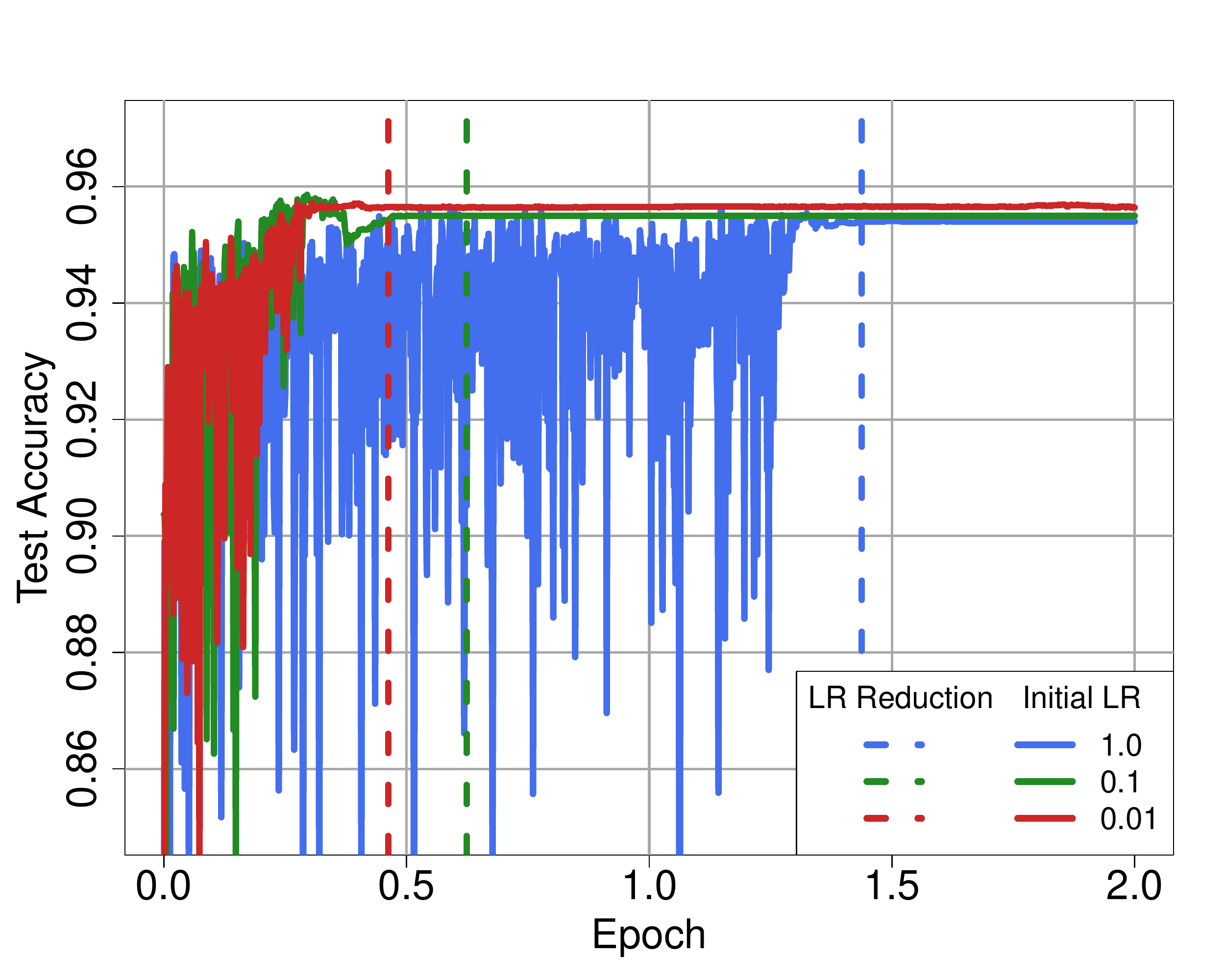}
\includegraphics[width=2.3in]{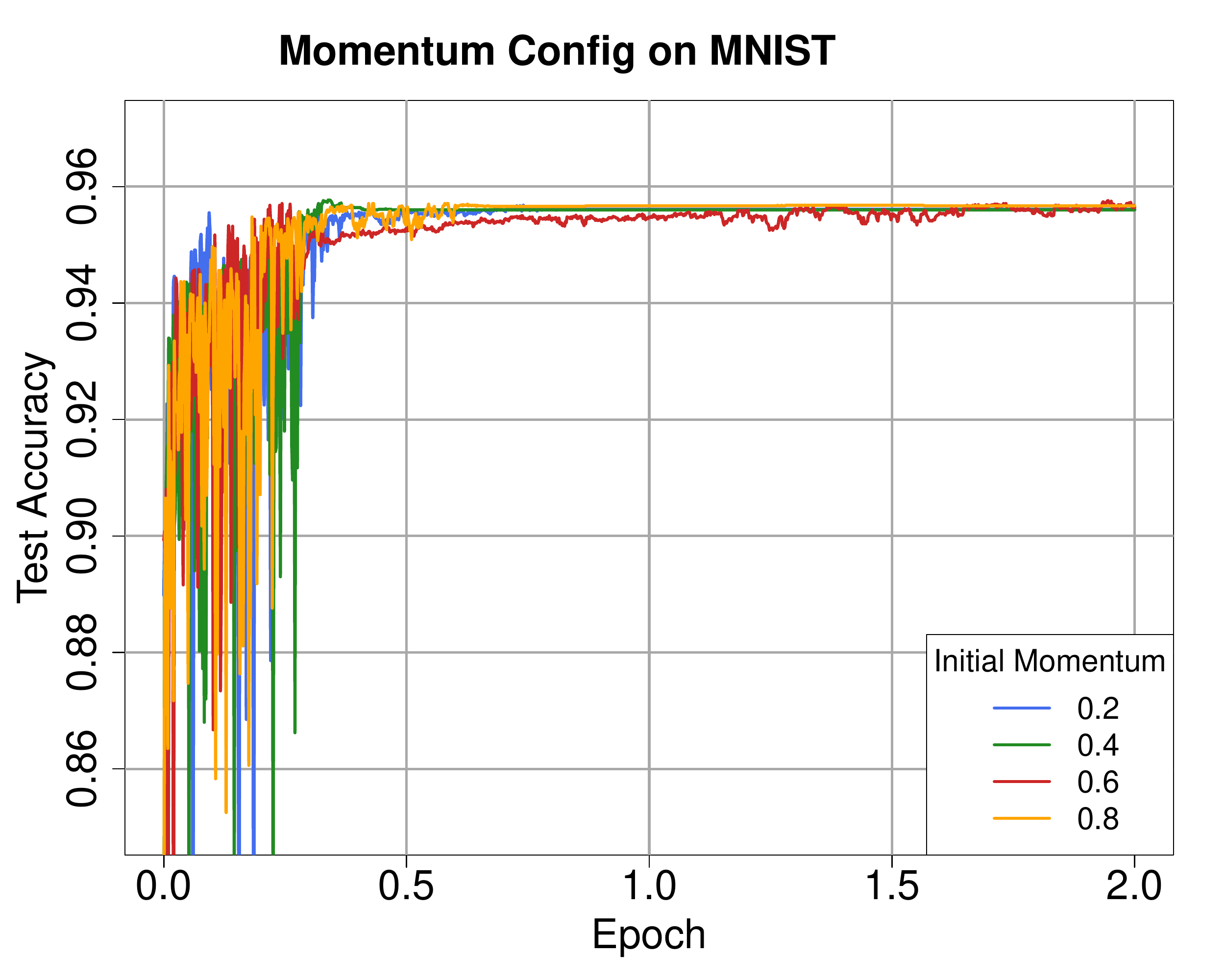}
\end{center}
\vspace{-0.16in}
  \caption{Upper: Vertical lines marks the diagnostic activation and learning rate reduction on MNIST.
  Bottom: SGDM using Alg.~\ref{alg:autoLR} and varying momentum $\beta\in\{0.2, 0.4, 0.6, 0.8\}$.
  }
\label{fig:mnistactivate_momvary}\vspace{-0.12in}
\end{figure}

In Fig.~\ref{fig:mnistactivate_momvary} (upper panel) we plot only SGDM using Alg.~\ref{alg:autoLR} and vertical lines to mark the convergence diagnostic's activation.
On the bottom panel of Fig.~\ref{fig:mnistactivate_momvary} shows SGDM using Alg.~\ref{alg:autoLR} with varying initial momentum.
From the similarity in training curves we see that Alg.~\ref{alg:autoLR} is also robust to setting the momentum.
The function $h()$ we used in Alg.~\ref{alg:diagnostic} to set the change in momentum was the mean squared distance between successive iterates.
However, any convergence heuristic can be used for $h()$ in Alg.~\ref{alg:diagnostic}.
We show that $h()$ works consistently in the upper panel of Fig.~\ref{fig:momred_constlr}.
Regardless of the initial momentum, SGDM using Alg.~\ref{alg:autoLR} on MNIST reduces the initial momentum at a consistent point, helping to ensure a consistent activation of the convergence diagnostic.

\begin{figure}[h]
\begin{center}
  \includegraphics[width=2.3in]{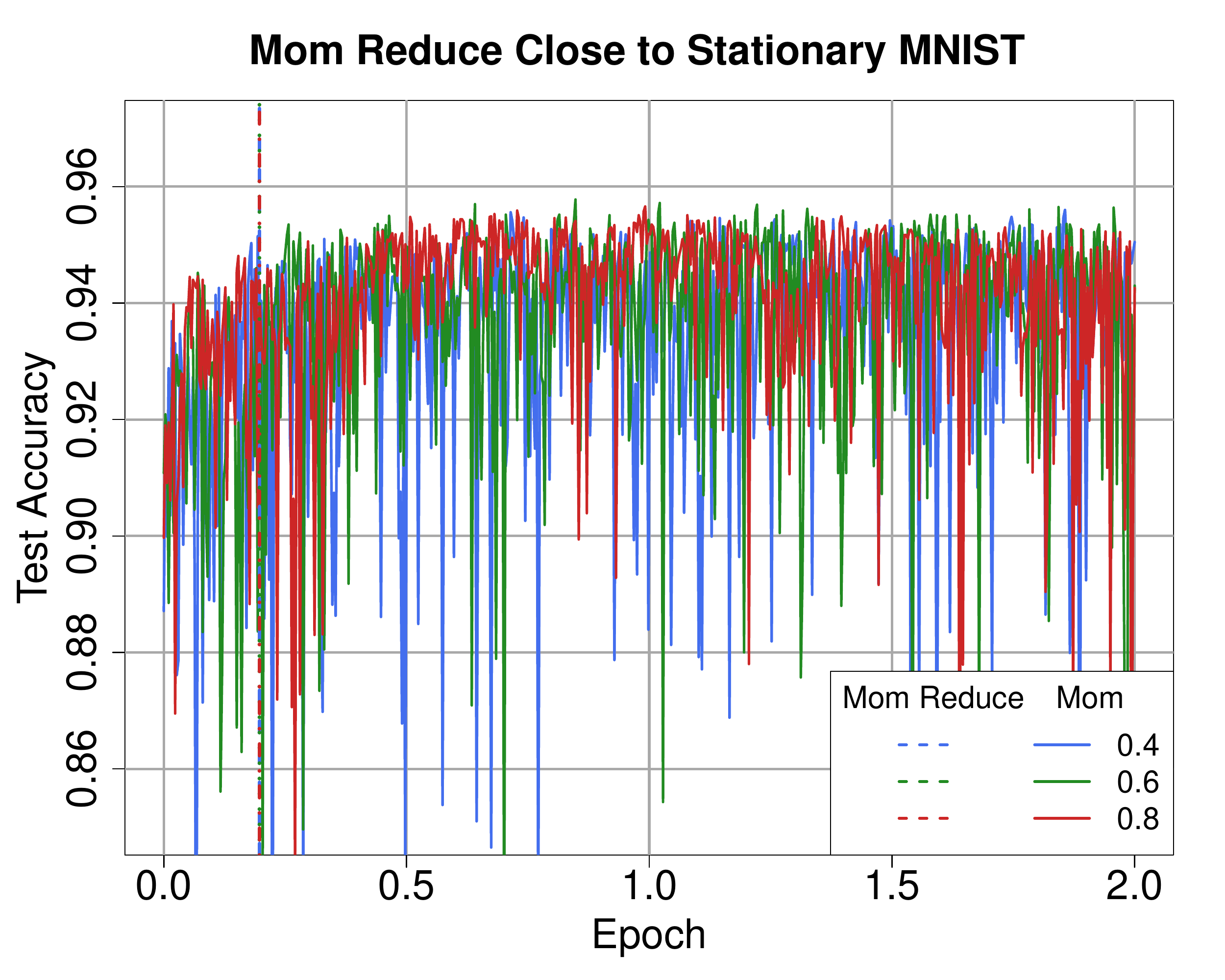}\hspace{-0.12in}
\includegraphics[width=2.3in]{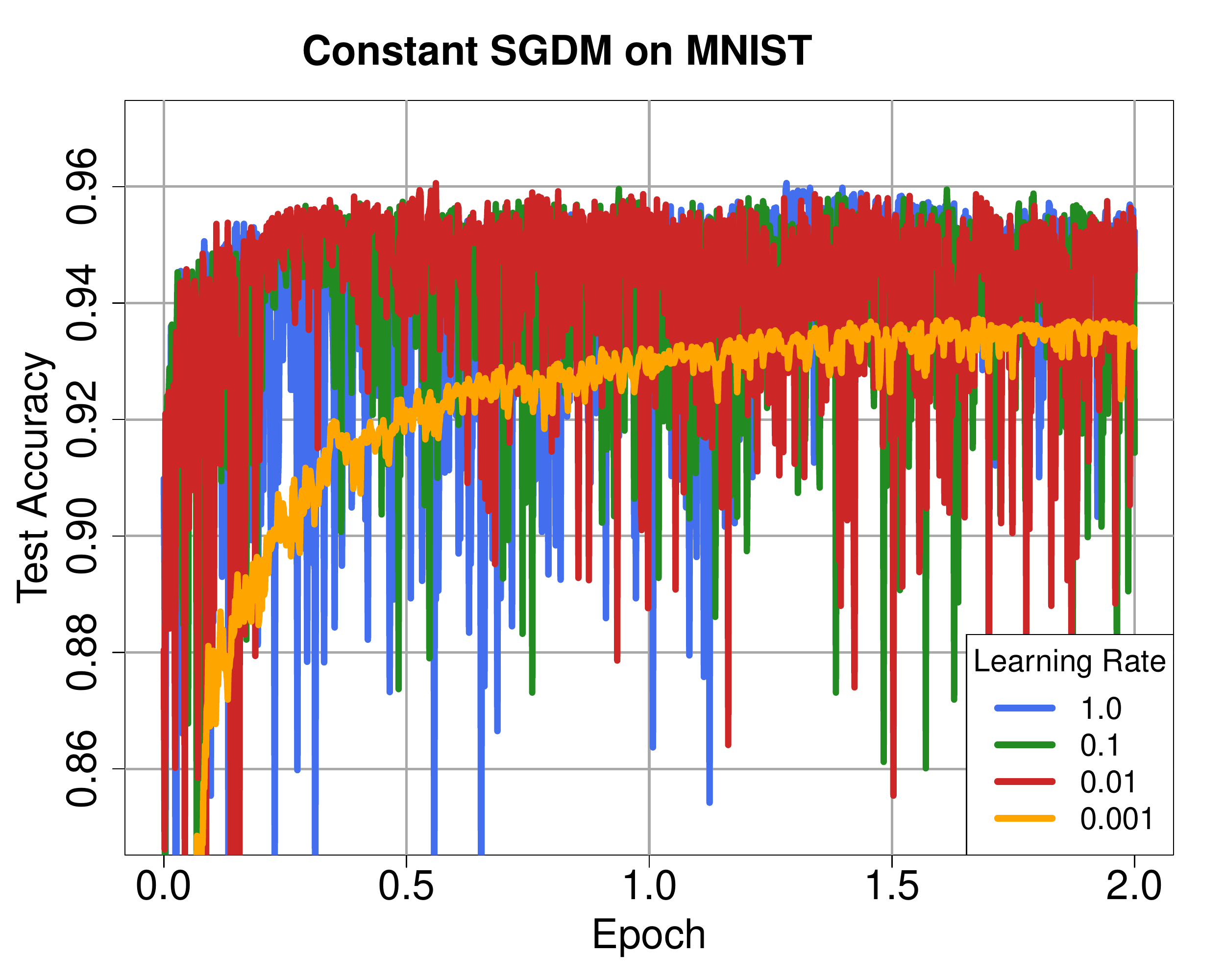}
\end{center}
\vspace{-0.17in}
  \caption{Upper: Vertical line marks a consistent momentum reduction using convergence heuristic of $\| \theta_n - \theta_{n-1} \|^2$.
  Bottom: Binary logistic regression with SGDM using Alg.~\ref{alg:autoLR} and constant learning rate $\gamma_0 \in \{1.0, 0.1, 0.01, 0.001\}$ on MNIST.
  }
\label{fig:momred_constlr}\vspace{-0.15in}
\end{figure}

Experiments were performed with constant rate $\gamma = \gamma_0$, however the stationary region was large enough to result in significant test accuracy fluctuations for a given $\gamma_0$.

\begin{figure}[b]
\vspace{-0.15in}
\begin{center}
\includegraphics[width=2.4in]{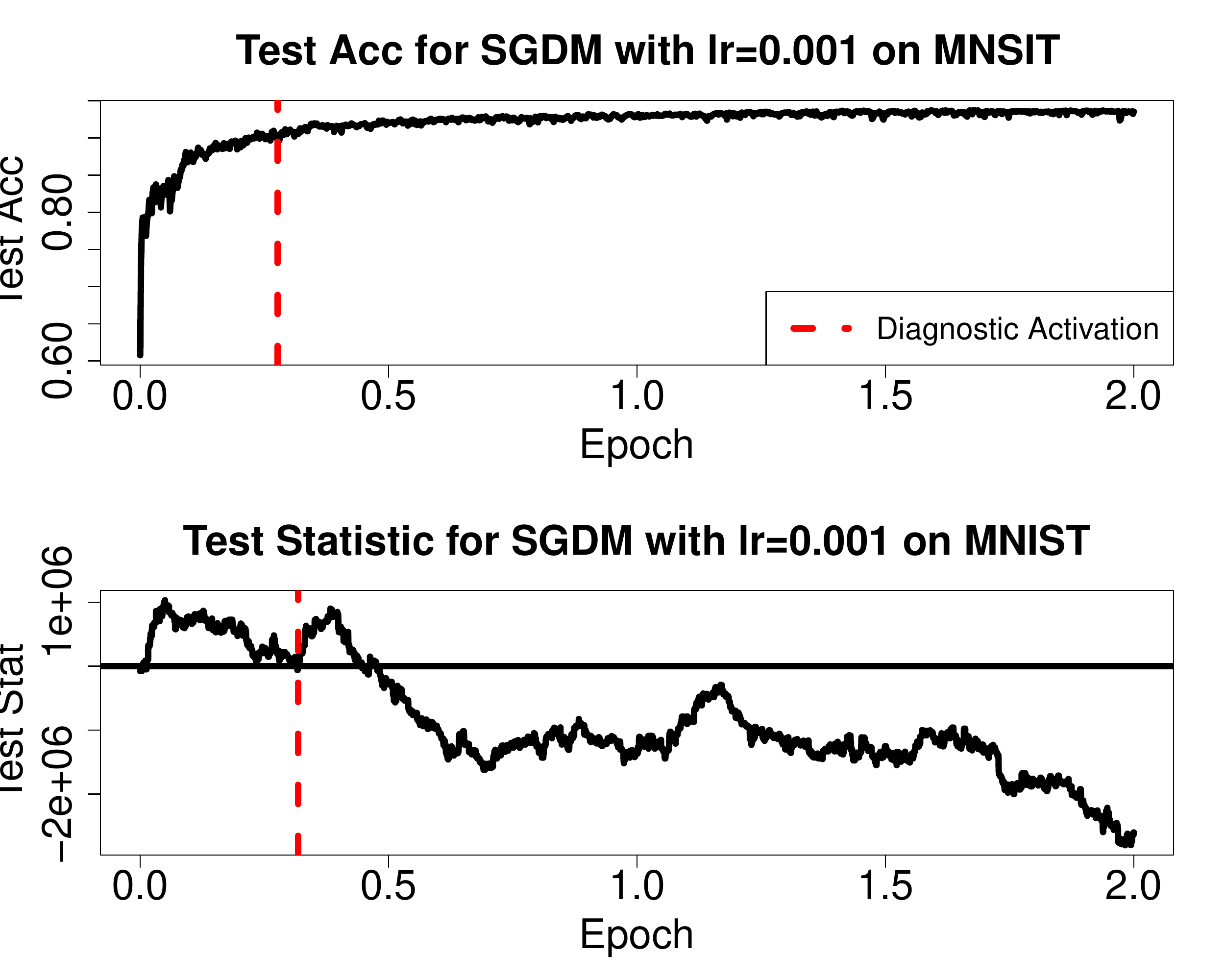}
\end{center}
\vspace{-0.15in}
  \caption{Test accuracy and convergence diagnostic test statistic from Alg.~\ref{alg:diagnostic} on MNIST.
  The vertical line marks the diagnostic's activation, which coincides with the test accuracy flattening out.
  }
\label{fig:mnist_teststat}%\vspace{-0.1in}
\end{figure}

The fact that SGDM using Alg.~\ref{alg:autoLR} is able to achieve competitive performance indicates that the convergence diagnostic is working well.
As further evidence that the convergence diagnostic in Alg.~\ref{alg:diagnostic} is working well, Fig.~\ref{fig:mnist_teststat} plots the test statistic of the convergence diagnostic along with the test accuracy for SGDM on MNIST.
The vertical line indicates the activation of the convergence diagnostic, when the test statistic becomes negative.
The diagnostic activates just as the test accuracy flattens out.

\begin{figure}[h]
\begin{center}
  \includegraphics[width=2.5in]{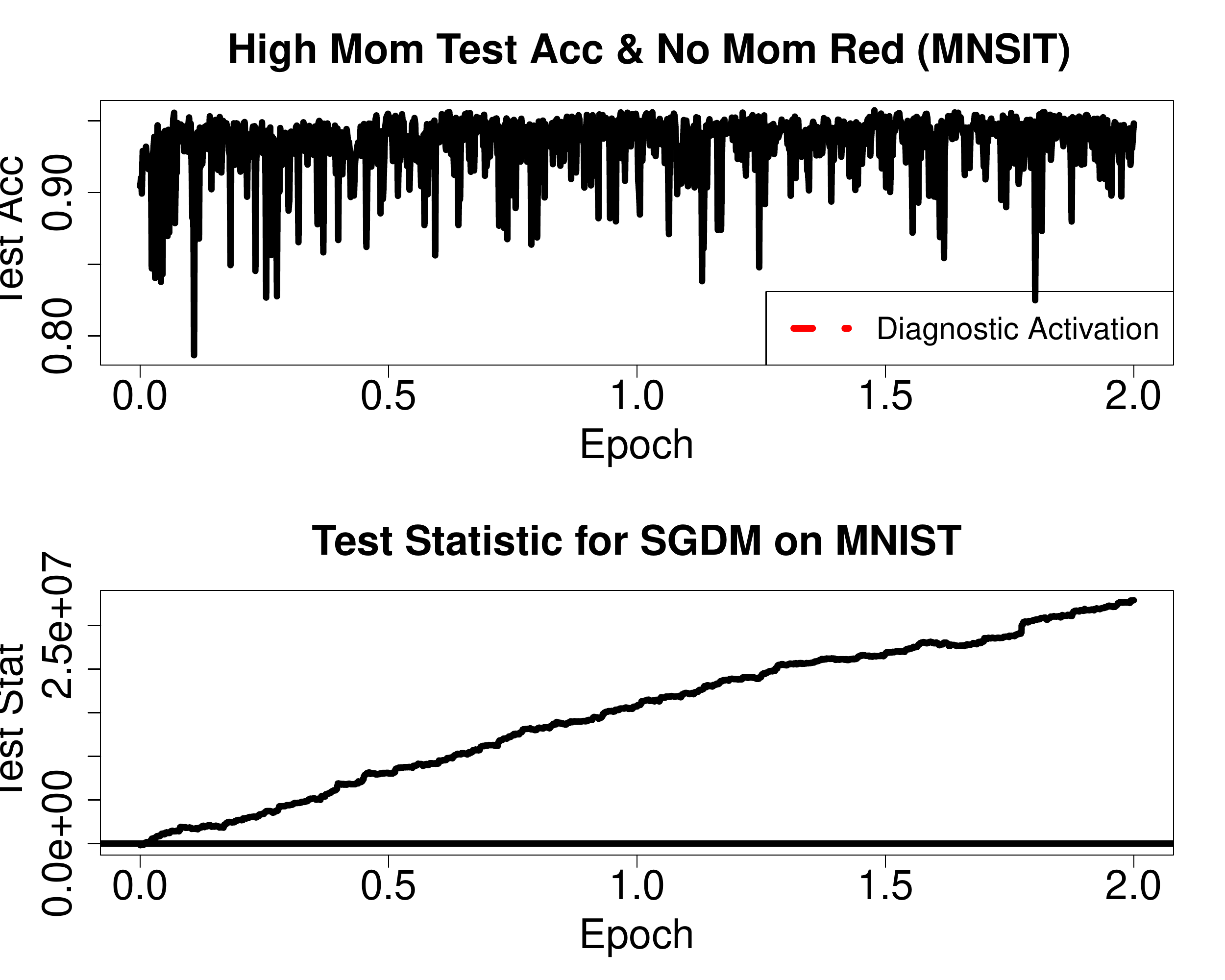}\hspace{-0.12in}
\end{center}
\vspace{-0.15in}
  \caption{Test accuracy and convergence diagnostic test statistic on MNIST. The momentum reduction has been removed, and $\beta = 0.8$. The test accuracy has flattened out and SGDM has converged, but the diagnostic does not activate because the test statistic is perpetually positive.
  }
\label{fig:mnist_ablation}
\end{figure}

We conduct an ablation study to understand the negative effects of high momentum on our convergence diagnostic.
Fig.~\ref{fig:mnist_ablation} plots the test accuracy and test statistic by removing the momentum reduction component of Alg.~\ref{alg:diagnostic} and using SGDM with high momentum $\beta = 0.8$ on MNIST.
SGDM has convergence because the test accuracy has plateaued, but the convergence diagnostic does not activate because its test statistic is perpetually positive.
We plot the test statistic of Alg.~\ref{alg:diagnostic} in Fig.~\ref{fig:mnist_ablation2} with no momentum reduction and increasing momentum $\beta \in \{0.2, 0.4, 0.6, 0.8\}$.
Momentum has a proportional relationship with the slope of the test statistic.
A positive slope indicates that the test statistic is not negative upon convergence, and thus the convergence diagnostic without the momentum reduction is ineffective for higher momentum $\beta \in\{ 0.6, 0.8\}$ on MNIST.

\begin{figure}[h!]
\begin{center}
\includegraphics[width=2.3in]{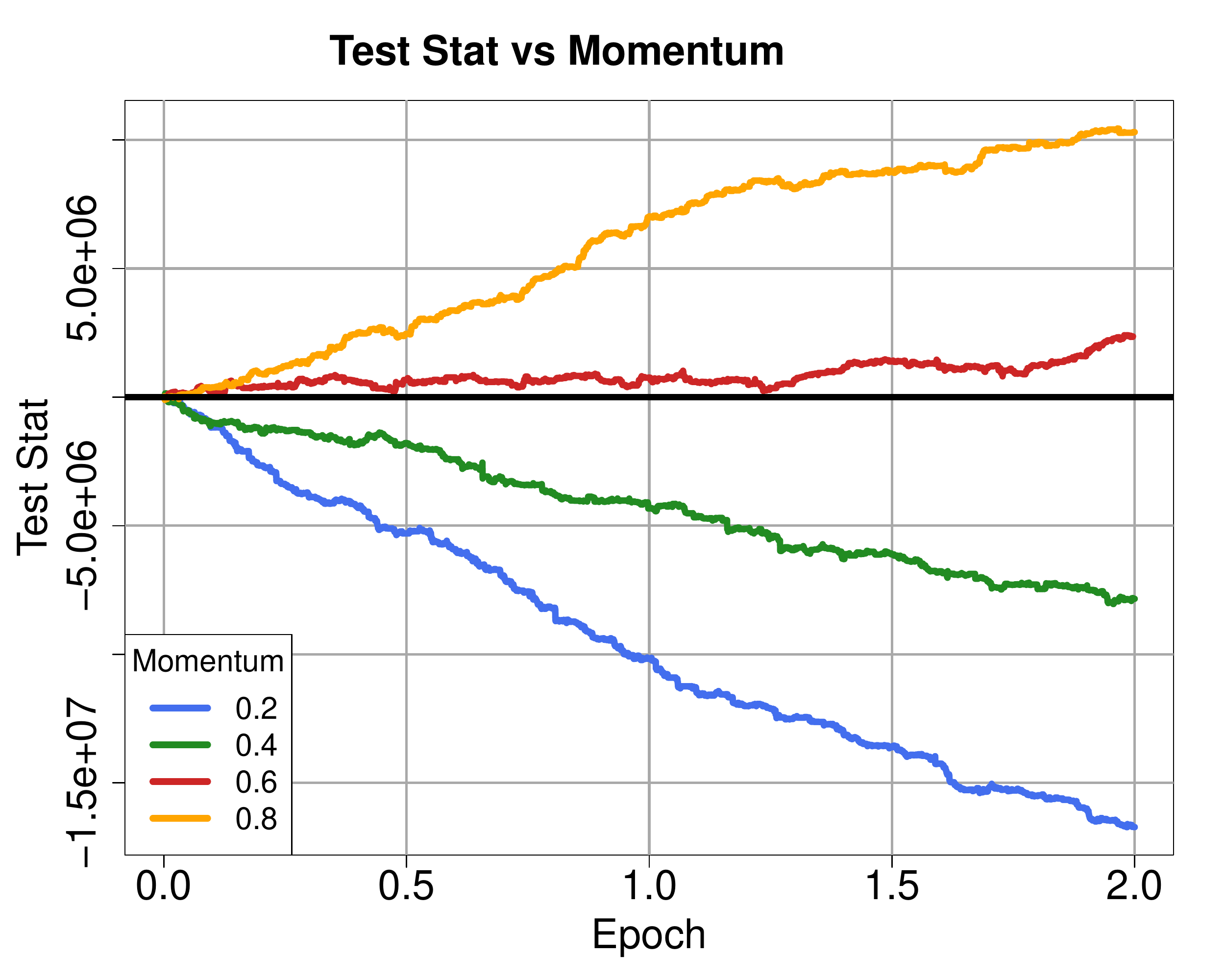}
\end{center}
\vspace{-0.15in}
  \caption{
  Test statistic for different values of momentum $\beta \in \{0.2, 0.4, 0.6, 0.8\}$.
  Higher momentum increases the slope of the test statistic, indicating an even greater difficulty for the convergence diagnostic to detect the stationary phase.
  }
\label{fig:mnist_ablation2}
\end{figure}

\section{Conclusion}

In this paper we focus on detecting the phase transition of SGDM (stochastic gradient descent with momentum) to the stationary phase.
Inspiration is drawn from literature on stopping times in stochastic approximation.
Momentum introduces challenges in the construction and operation of the test statistic for the convergence diagnostic.
We present theory and experiments which support that high momentum alters the  trajectory of the stochastic gradients which the diagnostic monitors. In addition we show the dynamics of SGDM in stationarity are largely random with a sparse number of key iterates behaving in an informative way, which the diagnostic is able to capture.
The proposed automatic momentum reduction technique resolves the issues with high momentum.
Empirical results demonstrate that the diagnostic has few type I errors and a reasonably small number of type II errors, and thus reliably detects convergence to the stationary phase.
We present an application to an automatic learning rate and show that it is robust to initial conditions.

Future work include extensions to other stochastic gradient methods such as adaptive gradient  or distributed methods such as ``local method''~\cite{Proc:Chen_FODS20} are  of great interest.\\

\vspace{0.35in}

%\noindent {\bf Th proofs can be found in the technical report with the same title on \url{www.arxiv.org}, which will be submitted at the time we submit the paper to the conference.}
\bibliographystyle{plain}
\bibliography{standard}

\newpage\clearpage
\onecolumn
\section{Proofs for Section~\ref{sec:momentum-stationarity}}

\noindent We re-state the assumptions:

\vspace{0.05in}

\noindent\textbf{Assumption 1.} \ The expected loss $f(\theta) = \Ex{ \ell(\theta, \xi) }$ is strongly convex with constant $c$.

\vspace{0.05in}

\noindent\textbf{Assumption 2.} \ The expected loss $f(\theta) = \Ex{ \ell(\theta, \xi) }$ is Lipschitz-smooth with constant $L$.

\vspace{0.05in}

\noindent\textbf{Assumption 3.} \ Theorem~\ref{thm:mom_convg_analysis}~\cite{Report:Yang_arXiv16} holds s.t. $\Ex{ f(\theta_n) - f(\theta_\star) } \leq \gamma M$ for some $M > 0$ and large enough $n$.

\vspace{0.05in}

\noindent\textbf{Assumption 4.} \ $\exists \ \sigma_0^2 > 0$ s.t. $\Ex{ \| \nabla \ell (\theta, \xi) \|^2 } > \sigma_0^2$.

\vspace{0.05in}

\noindent\textbf{Assumption 5.} \ $\exists \ K > 1$ s.t. $\Ex{ (\theta_n - \theta_{n-1})^\top (\theta_{n-1} - \theta_{n-2}) } \geq - K \Ex{ \| \theta_n - \theta_{n-1} \|^2 }$ for large enough $n$.\\

Firstly, we  derive a lower bound on the expected distance between iterates.
\begin{lemma}
\label{lemma:lower_bound_thetadiff}
Suppose that Assumptions~\ref{assump:min_noise} and~\ref{assump:scaling} hold.
Then for large enough $n$,
\begin{equation*}
\mathbb{E} [ \| \theta_n - \theta_{n-1} \|^2 ] \geq
\gamma^2 \sigma_0^2 \biggl( \frac{1}{1 + 2 K \beta + \beta^2} \biggr)
\end{equation*}
\end{lemma}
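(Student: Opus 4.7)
The plan is to start from the SGDM recursion rearranged to isolate the stochastic gradient, then take norm squared and expectation, and finally apply Assumptions~\ref{assump:min_noise} and~\ref{assump:scaling} to turn the resulting quadratic identity into the claimed inequality.

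Concretely, the update in Eq.~(\ref{eq:sgdm}) gives
\begin{equation*}
\gamma \nabla \ell(\theta_{n-1}, \xi_n) = -(\theta_n - \theta_{n-1}) + \beta(\theta_{n-1} - \theta_{n-2}),
\end{equation*}
so squaring and taking expectation yields
\begin{equation*}
\gamma^2 \, \mathbb{E}\|\nabla \ell(\theta_{n-1}, \xi_n)\|^2
= \mathbb{E}\|\theta_n - \theta_{n-1}\|^2
- 2\beta\,\mathbb{E}\bigl[(\theta_n - \theta_{n-1})^\top(\theta_{n-1} - \theta_{n-2})\bigr]
+ \beta^2 \, \mathbb{E}\|\theta_{n-1} - \theta_{n-2}\|^2.
\end{equation*}
The left side is bounded below by $\gamma^2 \sigma_0^2$ via Assumption~\ref{assump:min_noise}. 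For the cross term, Assumption~\ref{assump:scaling} gives $-2\beta \,\mathbb{E}[(\theta_n - \theta_{n-1})^\top(\theta_{n-1} - \theta_{n-2})] \leq 2\beta K\,\mathbb{E}\|\theta_n - \theta_{n-1}\|^2$ (valid for large $n$). Finally, since for large $n$ the iterates live in the stationary region, $\mathbb{E}\|\theta_{n-1} - \theta_{n-2}\|^2$ and $\mathbb{E}\|\theta_n - \theta_{n-1}\|^2$ are essentially equal, so the $\beta^2$ term can be absorbed, giving
\begin{equation*}
\gamma^2 \sigma_0^2 \leq (1 + 2K\beta + \beta^2)\,\mathbb{E}\|\theta_n - \theta_{n-1}\|^2,
\end{equation*}
from which the claim follows by dividing through.

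The main subtlety is the step that replaces $\mathbb{E}\|\theta_{n-1} - \theta_{n-2}\|^2$ with $\mathbb{E}\|\theta_n - \theta_{n-1}\|^2$. This is exactly the "for large enough $n$" caveat in the lemma statement: in the stationary phase the distribution of consecutive increments is approximately stationary, which is precisely the kind of regime Assumption~\ref{assump:scaling} is formulated for. Everything else is a direct algebraic manipulation of the SGDM recursion, so this stationarity identification is the only conceptual step one needs to justify carefully.
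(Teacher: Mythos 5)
Your proposal is correct and follows essentially the same route as the paper: both start from the rearranged update $\Delta_n - \beta\Delta_{n-1} = -\gamma\nabla\ell(\theta_{n-1},\xi_n)$, expand the squared norm, and apply Assumptions~\ref{assump:min_noise} and~\ref{assump:scaling} to the resulting identity. The only difference is the final step — the paper unrolls the resulting recursion into a geometric series and lets $n\to\infty$, while you directly identify $\mathbb{E}\|\theta_{n-1}-\theta_{n-2}\|^2$ with $\mathbb{E}\|\theta_n-\theta_{n-1}\|^2$ — and both versions rest on the same ``large enough $n$ / stationary increments'' approximation that the paper itself acknowledges.
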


\begin{proof}
By re-arranging the update equation for SGDM in Eq.~(\ref{eq:sgdm}),
we get $\theta_n - \theta_{n-1} - \beta ( \theta_{n-1} - \theta_{n-2} ) = - \gamma \nabla \ell ( \theta_{n-1}, \xi_n )$.
For brevity, let $\Delta_n \equiv \theta_n - \theta_{n-1}$, and $\nabla \ell_n \equiv \nabla \ell ( \theta_{n-1}, \xi_n )$.
Applying squared norm and re-arranging terms,
\begin{align*}
\| \Delta_n - \beta \Delta_{n-1} \|^2 &= \| - \gamma \nabla \ell_n \|^2 \\
\| \Delta_n \|^2 &= \gamma^2 \| \nabla \ell_n \|^2 + 2 \beta \Delta_n^\top \Delta_{n-1} - \beta^2 \| \Delta_{n-1} \|^2
\end{align*}

Apply expectations to both sides and the second inequality assumption to define a recursive relation.
\begin{align*}
\Ex{ \| \Delta_n \|^2 } &= \gamma^2 \Ex{ \| \nabla \ell_n \|^2 } + 2 \beta \Ex{ \Delta_n^\top \Delta_{n-1} } - \beta^2 \Ex{ \| \Delta_{n-1} \|^2 } \\
&\geq \gamma^2 \sigma_0^2 - \Ex{ \| \Delta_{n-1} \|^2 } ( 2 \beta K + \beta^2 )
\end{align*}

Using this recursive relation we obtain
\begin{align*}
\Ex{ \| \Delta_n \|^2 } &\geq \gamma^2 \sum_{i=0}^{n-1} ( -1 )^{i} ( 2 \beta K + \beta^2 )^{i} \sigma_0^2
= \gamma^2 \sigma_0^2 \biggl( \frac{1 - ( 2 \beta K + \beta^2 )^n}{1 + 2 \beta K + \beta^2} \biggr)
\end{align*}

Because we expect the number of iterations $n$ to be large when entering the stationary phase, we make the approximation
\begin{align*}
\Ex{ \| \Delta_n \|^2 } &\geq \gamma^2 \sigma_0^2 \biggl( \frac{1}{1 + 2 \beta K + \beta^2} \biggr)
\end{align*}
\end{proof}
\emph{Remarks.}
If $\Ex{ \Delta_n^\top \Delta_{n-1} } > 0$ then any negative lower bound is trivial.
It is reasonable to assume $K$ is not too large.
We expect $\| \theta_{n} - \theta_{n-1} \|^2 \approx \| \theta_{n-1} - \theta_{n-2} \|^2$ because they are successive iterates. \\

\begin{customthm}{\ref{thm:ip_exp_bd}}
Suppose that Assumptions~\ref{assump:strcvx},~\ref{assump:Fbound}, and~\ref{assump:min_noise} hold.  The test statistic for the convergence diagnostic in Algorithm~1 for SGDM in Eq.~(\ref{eq:sgdm})
is bounded
\begin{equation*}
\mathbb{E} [ \nabla \ell ( \theta_n, \xi_{n+1} )^\top \nabla \ell ( \theta_{n-1}, \xi_n ) ]
\leq ( 1 + \beta ) \left[ M - \frac{c}{2} \gamma \sigma_0^2 A_\beta \right]
< 0
\end{equation*}
as $n \rightarrow \infty$.
And thus the convergence diagnostic activates almost surely.
\end{customthm}

\begin{proof}
First re-write the inner product with the decomposition in Eq.~(\ref{eq:ip_breakdown}).
\begin{align*}
\nabla \ell ( \theta_n, \xi_{n+1} )^\top \nabla \ell ( \theta_{n-1}, \xi_n )
&= \frac{1}{\gamma} \nabla \ell ( \theta_n, \xi_{n+1} )^\top ( \theta_{n-1} - \theta_n )
+ \frac{\beta}{\gamma}  \nabla \ell ( \theta_n, \xi_{n+1} )^\top ( \theta_{n-1} - \theta_{n-2} )
\end{align*}

Apply expectation to both sides, and then apply the strong convexity assumption.
\begin{align*}
\Ex{ \nabla \ell ( \theta_n, \xi_{n+1} )^\top \nabla \ell ( \theta_{n-1}, \xi_n ) }
&\leq \frac{1}{\gamma} [ f ( \theta_{n-1} ) - f ( \theta_n ) - \frac{c}{2} \| \theta_{n-1} - \theta_n \|^2 ] \\
&\quad + \frac{\beta}{\gamma} [ f ( \theta_{n-1} - \theta_{n-2} + \theta_n ) - f ( \theta_n ) - \frac{c}{2} \| \theta_{n-1} - \theta_{n-2} \|^2 ] \\
&\leq \frac{1}{\gamma} [ f ( \theta_{n-1} ) - f ( \theta_\star ) - \frac{c}{2} \| \theta_{n-1} - \theta_n \|^2 ] \\
&\quad+ \frac{\beta}{\gamma} [ f ( \theta_{n-1} - \theta_{n-2} + \theta_n ) - f ( \theta_\star )  - \frac{c}{2} \| \theta_{n-1} - \theta_{n-2} \|^2 ]
\end{align*}

Now extend the expectation over randomness in the trajectory, and apply the result from Theorem~\ref{thm:mom_convg_analysis}
and Lemma~\ref{lemma:lower_bound_thetadiff}.
\begin{align*}
&\leq \frac{1}{\gamma} \bigg[ \gamma M - \frac{c}{2} \gamma^2 \sigma_0^2 A_\beta \bigg] + \frac{\beta}{\gamma} \bigg[ \gamma M - \frac{c}{2} \gamma^2 \sigma_0^2 A_\beta \bigg] \\
&= ( 1 + \beta ) [ M - \frac{c}{2} \gamma \sigma_0^2 A_\beta ]
%&< 0
\end{align*}
\end{proof}

\emph{Remarks.}
$A_\beta = 1 / (1 + 2 \beta K + \beta^2)$ from  Lemma~\ref{lemma:lower_bound_thetadiff}.
$A_\beta$ is a monotonically decreasing function, where $A_{\beta | \beta=0} = 1$, and $A_{\beta | \beta=1} = 1 / (2 + 2K)$. Thus for the condition on the learning rate, higher $\beta$ causes lower $A_\beta$ which increases~the~condition~on~$\gamma^2$.\\

We now derive bounds similar to Theorem~\ref{thm:ip_exp_bd}
for the alternative test statistic constructed in Eq.~(\ref{eq:alt_test_stat}), and show that its expectation is highly sensitive to momentum.
This increased sensitivity makes it a poor choice to use in a convergence~diagnostic.\\

\begin{customthm}{\ref{thm:ip_opt}}
Suppose that Assumptions~\ref{assump:strcvx},~\ref{assump:Fbound}, and~\ref{assump:min_noise} hold. Then,
\begin{align*}
\Ex{ ( \nabla \ell ( \theta_n, \xi_{n+1}) + \beta & ( \theta_n - \theta_{n-1} ) )^\top ( \nabla \ell ( \theta_{n-1}, \xi_{n} ) + \beta ( \theta_{n-1} - \theta_{n-2} ) ) } \\
&< \left( \frac{1}{\gamma} + \frac{\beta}{\gamma} + 2 \beta + \beta^2 \right) \left[ \gamma M - \frac{c}{2} \gamma^2 \sigma_0^2 A_\beta \right]
+ \beta^3 \gamma M
\end{align*}
\end{customthm}

\begin{proof}
Decompose the terms in the inner product, apply expectation to both sides, and apply the strong convexity assumption.
\begin{align*}
\Ex{ ( \nabla \ell ( \theta_n, \xi_{n+1}) &+ \beta ( \theta_n - \theta_{n-1} ) )^\top ( \nabla \ell ( \theta_{n-1}, \xi_{n} ) + \beta ( \theta_{n-1} - \theta_{n-2} ) ) } \\
&= \Ex{\nabla \ell ( \theta_n, \xi_{n+1} )^\top \nabla \ell ( \theta_{n-1}, \xi_n )}
+ \Ex{\beta \nabla \ell ( \theta_n, \xi_{n+1} )^\top ( \theta_{n-1} - \theta_{n-2} )} \\
&\quad + \Ex{\beta \nabla \ell ( \theta_{n-1}, \xi_n )^\top ( \theta_n - \theta_{n-1} )}
+ \Ex{\beta^2 ( \theta_{n-1} - \theta_{n-2} )^\top ( \theta_{n} - \theta_{n-1} )} \\
&\leq
\frac{1}{\gamma} \left[ f( \theta_{n-1} ) - f( \theta_n ) - \frac{c}{2} \| \theta_{n-1} - \theta_n \|^2 \right] \\
&\quad + \frac{\beta}{\gamma} \left[ f( \theta_{n-1} - \theta_{n-2} + \theta_n ) - f( \theta_n ) - \frac{c}{2} \| \theta_{n-1} - \theta_{n-2} \|^2 \right] \\
&\quad + \beta \left[ f( \theta_{n-1} - \theta_{n-2} + \theta_n ) - f( \theta_n ) - \frac{c}{2} \| \theta_{n-1} - \theta_{n-2} \|^2 \right] \\
&\quad + \beta \left[ f( \theta_{n-1} ) - f( \theta_n ) - \frac{c}{2} \| \theta_{n-1} - \theta_n \|^2 \right] \\
&\quad + \beta^2 \left[ \beta \| \theta_{n-1} - \theta_{n-2} \|^2 + f( \theta_{n-2} ) - f( \theta_{n-1} ) - \frac{c}{2} \| \theta_{n-1} - \theta_{n-2} \|^2 \right]
\end{align*}
Where the last line uses $\theta_n - \theta_{n-1} = - \nabla \ell (\theta_{n-1}, \xi_{n} + \beta (\theta_{n-1} - \theta_{n-2} )$.
Take expectation with respect to the trajectory, and apply Theorem~\ref{thm:mom_convg_analysis},
Lemma~\ref{lemma:lower_bound_thetadiff}, and Lemma~\ref{lemma:MSE_SGDM_bound}.
\begin{align*}
&\leq \left( \frac{1}{\gamma} + \frac{\beta}{\gamma} + 2 \beta + \beta^2 \right) \left[ \gamma M - \frac{c}{2} \gamma^2 \sigma_0^2 A_\beta \right]
+ \beta^3\frac{\gamma M}{c}
\end{align*}
\end{proof}
\emph{Remarks.}
The expectation in Theorem~\ref{thm:ip_opt} is much more sensitive to the momentum parameter $\beta$, as seen in the additional $\beta$ terms.
The upper bound assumption on $\Ex{\| \theta_n - \theta_{n-1} \|^2}$ is not restrictive since we have a bound on $\Ex{f(\theta_n)-f(\theta_\star)}$ from Theorem~\ref{thm:mom_convg_analysis}.

\begin{customcor}{\ref{cor:ip_exp_neg}}
Consider SGDM in Eq.~(\ref{eq:sgdm}). If
the learning rate satisfies $\gamma > 2 M / c \sigma_0^2 A_\beta$, then,
\begin{equation*}
\mathbb{E} [ \nabla \ell ( \theta_n, \xi_{n+1} )^\top \nabla \ell ( \theta_{n-1}, \xi_n ) ]
< 0
\end{equation*}
as $n \rightarrow \infty$.
Thus the convergence diagnostic activates almost surely.
\end{customcor}
\begin{proof}
Combining Theorem~\ref{thm:ip_exp_bd} and the condition on the learning rate gives the desired bound.
\end{proof}

\noindent\textbf{Quadratic Loss Model.} \  Let's gain insight into the convergence diagnostic by first assuming quadratic loss $\ell ( \theta, y, x ) = \frac{1}{2} ( y - x^\top \theta_\star )^2$, $\nabla \ell ( \theta, y, x ) = - ( y - x^\top \theta) x$. Let $y = x^\top \theta_\star + \epsilon$, where $\epsilon$ are zero mean random variables $\Ex{ \epsilon | x } = 0$.
If we are able to initialize $\theta_0 = \theta_\star$, i.e. within the stationary region, then the update equations are:
\begin{align*}
\theta_1 &= \theta_\star + \gamma ( y_1 - x_1^\top \theta_\star ) x_1 \\
\theta_2 &= \theta_1 + \gamma ( y_2 - x_2^\top \theta_1 ) x_2 + \beta ( \theta_1 - \theta_\star ) \\
\theta_3 &= \theta_2 + \gamma ( y_3 - x_3^\top \theta_2 ) x_3 + \beta ( \theta_2 - \theta_1 )
\end{align*}
With the gradients reducing to:
\begin{align*}
\nabla \ell ( \theta_\star, y_1, x_1 ) &= ( y_1 - x_1^\top \theta_\star ) x_1 \\
&=  \epsilon_1 x_1\\
\nabla \ell ( \theta_1, y_2, x_2 ) &= ( y_2 - x_2^\top \theta_1 ) x_2 \\
&= ( y_2 - x_2^\top \theta_\star - \gamma ( y_1 - x_1^\top \theta_\star ) x_2^\top x_1 ) x_2 \\
&= ( \epsilon_2 - \gamma \epsilon_1 x_2^\top x_1 ) x_2 \\
\nabla \ell ( \theta_2, y_3, x_3 ) &= ( y_3 - x_3^\top \theta_2 ) x_3 \\
&= ( y_3 - x_3^\top \theta_1 - \gamma ( y_2 - x_2^\top \theta_1 ) x_3^\top x_2 - \beta x_3^\top ( \theta_1 - \theta_\star ) ) x_3 \\
&= ( y_3 - x_3^\top \theta_\star - \gamma ( y_1 - x_1^\top \theta_\star ) x_3^\top x_1 \\
&\quad - \gamma ( y_2 - x_2^\top \theta_\star - \gamma ( y_1 - x_1^\top \theta_\star ) x_2^\top x_1 ) x_3^\top x_2 \\
&\quad - \beta x_3^\top ( \gamma ( y_1 - x_1^\top \theta_\star ) x_1 ) ) x_3 \\
&= ( \epsilon_3 - \gamma \epsilon_1 x_3^\top x_1 - \gamma \epsilon_2 x_3^\top x_2 + \gamma^2 \epsilon_1 ( x_2^\top x_1 ) ( x_3^\top x_2 ) - \beta \gamma \epsilon_1 x_3^\top x_1 ) x_3
\end{align*}

The value of the expected test statistics is:
\begin{align*}
&\Ex{ \nabla \ell ( \theta_1, y_2, x_2 )^\top \nabla \ell ( \theta_2, y_3, x_3 ) }\\
=& \Ex{ - \gamma \epsilon_2^2 ( x_3^\top x_2 )^2 + \gamma^2 \epsilon_1^2 ( x_2^\top x_1 ) ( x_3^\top x_1 ) ( x_3^\top x_2 )
- \gamma^3 \epsilon_1^2 ( x_2^\top x_1 )^2 ( x_3^\top x_2 )^2 + \beta \gamma^2 \epsilon_1^2 ( x_2^\top x_1 ) ( x_3^\top x_1 ) ( x_3^\top x_2 ) } \\
=& - \gamma \Ex{ \epsilon_2^2 } \Ex{ ( x_3^\top x_2 )^2 } - \gamma^3 \Ex { \epsilon_1^2 } \Ex { ( x_2^\top x_1 )^2 ( x_3^\top x_2 )^2 }  + \gamma^2 ( 1 + \beta ) \Ex{ \epsilon_1^2 } \Ex{ ( x_2^\top x_1 ) ( x_3^\top x_1 ) ( x_3^\top x_2 ) } \\
%&< 0
\end{align*}

The following theorem shows how the test statistic evolves as the optimization procedure moves through~parameter~space.

\begin{customthm}{\ref{thm:quad_diag}}
Let the loss be quadratic, $\ell ( \theta ) = 1/2 ( y - x^\top \theta )^2$.
Let $x_n$, $x_{n+1}$ be two iid vectors from the distribution of $x$.
Let $A = \Ex{ ( x_n x_{n+1}^\top ) ( x_n^\top x_{n+1} ) }$, $B = \Ex{ ( x_{n} x_n^\top ) ( x_n^\top x_{n+1} )^2 }$, $\sigma_{quad}^2 = \Ex { \epsilon_n^2 }$, $d^2 = \Ex{ ( x_n^\top x_{n+1} )^2 }$.
Then,
\begin{align*}
&\Ex{ \nabla \ell ( \theta_n, \xi_{n+1} )^\top \nabla \ell ( \theta_{n-1}, \xi_n ) |  \theta_{n-1}, \theta_{n-2} }\\
=& ( \theta_{n-1} - \theta_\star )^\top ( A - \gamma B) ( \theta_{n-1} - \theta_\star )
- \gamma \sigma_{quad}^2 d^2  + ( \theta_{n-1} - \theta_\star )^\top ( \beta A ) ( \theta_{n-1} - \theta_{n-2} )
\end{align*}
\end{customthm}
\begin{proof}

Let $\nabla \ell(\theta_n) \equiv \nabla \ell( \theta_{n-1}, \xi_n)$.
The inner product is
\begin{align*}
&\nabla \ell ( \theta_n )^\top \nabla \ell ( \theta_{n-1} )\\
=&
\left( y_{n+1} - x_{n+1}^\top \theta_n \right)
\left( y_{n} - x_{n}^\top \theta_{n-1} \right)
x_{n}^\top x_{n+1} \\
=&
\biggl[
y_{n+1} - x_{n+1}^\top \theta_{n-1}
- \gamma ( y_{n} - x_{n}^\top \theta_{n-1} ) x_n^\top x_{n+1}
- \beta x_{n+1}^\top ( \theta_{n-1} - \theta_{n-2} )
\biggr]
\left( y_{n} - x_{n}^\top \theta_{n-1} \right)
x_{n}^\top x_{n+1}
\end{align*}
Distributing the second gradient term and trailing $x_n, x_{n+1}$ yields
\begin{align*}
&\nabla \ell ( \theta_n )^\top \nabla \ell ( \theta_{n-1} )\\
=&
\left( y_{n+1} - x_{n+1}^\top \theta_{n-1} \right) \left( y_{n} - x_{n}^\top \theta_{n-1} \right) x_{n}^\top x_{n+1}
 - \gamma \left( y_{n} - x_{n}^\top \theta_{n-1} \right)^2 \left( x_{n}^\top x_{n+1} \right)^2 \\
&- \beta \left( y_{n} - x_{n}^\top \theta_{n-1} \right) [ x_{n+1}^\top ( \theta_{n-1} - \theta_{n-2} ) ] \left( x_{n}^\top x_{n+1} \right)
\end{align*}
And now substitute for $\epsilon$
\begin{align*}
&\nabla \ell ( \theta_n )^\top \nabla \ell ( \theta_{n-1} )\\
=&
\biggl[ x_{n+1}^\top ( \theta_\star - \theta_{n-1} ) + \epsilon_{n+1} \biggr]
\biggl[ x_{n}^\top ( \theta_\star - \theta_{n-1} ) + \epsilon_{n} \biggr] x_{n}^\top x_{n+1} \\
& \quad - \gamma \biggl[ x_{n}^\top ( \theta_\star - \theta_{n-1} ) + \epsilon_{n} \biggr]^2
\left( x_{n}^\top x_{n+1} \right)^2 \\
& \quad - \beta \biggl[ x_{n}^\top ( \theta_\star - \theta_{n-1} ) + \epsilon_{n} \biggr]
\biggl[ x_{n+1}^\top ( \theta_{n-1} - \theta_{n-2} ) \biggr]
 \left( x_{n}^\top x_{n+1} \right)
 \end{align*}
 Expand terms
 \begin{align*}
 &\nabla \ell ( \theta_n )^\top \nabla \ell ( \theta_{n-1} )\\
 =&
 ( \theta_{n-1} - \theta_\star )^\top \biggl[ ( x_n x_{n+1}^\top ) ( x_n^\top x_{n+1} ) \biggr] ( \theta_{n-1} - \theta_\star )
 + \epsilon_n W^{(1)} + \epsilon_{n+1} W^{(2)} + \epsilon_n \epsilon_{n+1} W^{(3)} \\
 & \quad - ( \theta_{n-1} - \theta_\star )^\top \biggl[ \gamma ( x_{n} x_{n}^\top ) ( x_n^\top x_{n+1} )^2 \biggr] ( \theta_{n-1} - \theta_\star )
 - \epsilon_n W^{(4)} - \gamma \epsilon_n^2 ( x_n^\top x_{n+1} )^2 \\
 & \quad + ( \theta_{n-1} - \theta_\star )^\top \biggl[ \beta ( x_n x_{n+1}^\top ) ( x_n^\top x_{n+1} ) \biggr] ( \theta_{n-1} - \theta_{n-2} )
 - \epsilon_n W^{(5)}
\end{align*}
Take expectation with respect to $n-1, n-2$.
Conditioning with respect to two terms is necessary to capture the momentum.2
The non square epsilon terms are eliminated because they are mean zero.\\
\end{proof}

\section{Proofs for Section~\ref{sec:distribution_IP}}
We first give some technical Lemmas necessary to prove the main variance bound results.

\begin{lemma}
\label{lemma:MSE_SGDM_bound}
Suppose that Assumptions~\ref{assump:strcvx} and~\ref{assump:Fbound} hold for some positive $M' > 0$. Let $M = 8 M'$.
Then we can bound the L2 distance iterates,

\begin{equation*}
\Ex{ \| \theta_n - \theta_{n-1} \|^2 } \leq \frac{ \gamma M }{ c }
\end{equation*}
\end{lemma}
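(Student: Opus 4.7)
The plan is to combine the function-value bound from Assumption~\ref{assump:Fbound} (i.e., Theorem~\ref{thm:mom_convg_analysis}) with strong convexity to convert the guarantee on $\mathbb{E}[f(\theta_n) - f(\theta_\star)]$ into a guarantee on the squared distance $\mathbb{E}[\|\theta_n - \theta_\star\|^2]$, and then use the triangle inequality to pass from distances to $\theta_\star$ to distances between successive iterates. This is the standard ``strong convexity $\Rightarrow$ quadratic growth'' route, and it should account cleanly for the factor $8$ appearing in $M = 8 M'$.

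Concretely, the first step is to invoke strong convexity at $\theta_\star$: since $\nabla f(\theta_\star) = 0$, Assumption~\ref{assump:strcvx} gives
\[
f(\theta) - f(\theta_\star) \;\geq\; \tfrac{c}{2}\,\|\theta - \theta_\star\|^2,
\]
so that $\|\theta_n - \theta_\star\|^2 \leq (2/c)\,(f(\theta_n) - f(\theta_\star))$. Taking expectations and applying Assumption~\ref{assump:Fbound} (with constant $M'$) yields
\[
\mathbb{E}\,[\|\theta_n - \theta_\star\|^2] \;\leq\; \tfrac{2\gamma M'}{c}
\]
for large enough $n$, and the same bound of course holds for $\theta_{n-1}$.

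The second step is the triangle inequality in the form $\|a - b\|^2 \leq 2\|a\|^2 + 2\|b\|^2$, applied to $a = \theta_n - \theta_\star$ and $b = \theta_{n-1} - \theta_\star$:
\[
\|\theta_n - \theta_{n-1}\|^2 \;\leq\; 2\|\theta_n - \theta_\star\|^2 + 2\|\theta_{n-1} - \theta_\star\|^2.
\]
Taking expectations and substituting the previous bound gives
\[
\mathbb{E}\,[\|\theta_n - \theta_{n-1}\|^2] \;\leq\; 4 \cdot \tfrac{2\gamma M'}{c} \;=\; \tfrac{8\gamma M'}{c} \;=\; \tfrac{\gamma M}{c},
\]
using $M = 8 M'$ as defined in the statement. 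This completes the argument.

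There is no real obstacle here; every step is routine. The only thing to be slightly careful about is that Assumption~\ref{assump:Fbound} is stated for ``large enough $n$'', so the conclusion inherits that qualifier, which matches how the lemma is used (in the stationary phase) in Theorem~\ref{thm:ip_opt} and Theorem~\ref{thm:ip_var_bound}. Also, the choice of triangle-inequality constants is what fixes the multiplicative factor: using $\|a-b\|^2 \leq 2\|a\|^2 + 2\|b\|^2$ rather than a tighter bound is what produces precisely $M = 8 M'$.
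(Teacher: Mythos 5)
Your proof is correct and follows the same overall route as the paper: strong convexity at $\theta_\star$ converts Assumption~\ref{assump:Fbound} into $\Ex{\|\theta_n-\theta_\star\|^2}\le 2\gamma M'/c$, and then the distance between successive iterates is controlled by their distances to $\theta_\star$. The one step where you genuinely diverge is the second one. The paper squares the triangle inequality, obtains the cross term $2\,\Ex{\|\theta_n-\theta_\star\|\,\|\theta_{n-1}-\theta_\star\|}$, and disposes of it by asserting that $\|\theta_n-\theta_\star\|$ and $\|\theta_{n-1}-\theta_\star\|$ are independent in the stationary phase, followed by Jensen's inequality, arriving at the same constant $8$. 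You instead use the deterministic inequality $\|a-b\|^2\le 2\|a\|^2+2\|b\|^2$, which gives $4\cdot(2\gamma M'/c)=8\gamma M'/c$ with no probabilistic input at all. Your version is preferable: the paper's independence claim for successive iterates of a momentum method is dubious (successive iterates are strongly coupled, especially with momentum), whereas your bound needs nothing beyond the pointwise inequality, and it lands on exactly the same constant $M=8M'$. The only caveat, which you already note, is that the conclusion holds only for $n$ large enough for Assumption~\ref{assump:Fbound} to apply, consistent with how the lemma is used downstream.
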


\begin{proof}
From Theorem~\ref{thm:mom_convg_analysis}~\cite{Report:Yang_arXiv16}, we can see that for large enough $n$, this is a valid condition.
Under the strong convexity assumption, we can bound
\begin{align*}
\frac{c}{2} \| \theta_n - \theta_\star \|^2 &\leq f ( \theta_n ) - f ( \theta_\star ) - \nabla f ( \theta_\star )^\top ( \theta_n - \theta_\star ) \\
&= f ( \theta_n ) - f ( \theta_\star )
\end{align*}
Applying expectation  to both sides,
\begin{align*}
\frac{c}{2} \ \Ex{ \| \theta_n - \theta_\star \|^2 } &\leq \Ex{ f ( \theta_n ) - f ( \theta_\star ) }
\end{align*}

We now use the triangle inequality  to obtain the desired bound.
\begin{align*}
\| \theta_n - \theta_{n-1} \| &\leq \| \theta_n - \theta_\star \|  + \| \theta_{n-1} - \theta_\star \| \\
\| \theta_n - \theta_{n-1} \|^2 &\leq \| \theta_n - \theta_\star \|^2  + \| \theta_{n-1} - \theta_\star \|^2 + 2 \| \theta_n - \theta_\star \| \| \theta_{n-1} - \theta_\star \|
\end{align*}
Apply expectation to both sides,
\begin{align*}
\Ex{ \| \theta_n - \theta_{n-1} \|^2 }
&\leq \Ex{ \| \theta_n - \theta_\star \|^2 } + \Ex{ \| \theta_{n-1} - \theta_\star \|^2 } + 2 \Ex{ \| \theta_n - \theta_\star \| } \Ex{ \| \theta_{n-1} - \theta_\star \| } \\
&\leq \Ex{ \| \theta_n - \theta_\star \|^2 } + \Ex{ \| \theta_{n-1} - \theta_\star \|^2 } + 2 \sqrt{ \Ex{ \| \theta_n - \theta_\star \|^2 } \Ex{ \| \theta_{n-1} - \theta_\star \|^2 } }\\
&\leq 8 \frac{ \gamma M' }{ c }
\end{align*}

In the stationary phase the variance of the stochastic gradient dominates and no more progress is made towards $\theta_\star$, so then $\| \theta_n - \theta_\star \|^2$ and $\| \theta_{n-1} - \theta_\star \|^2$ are independent in the stationary phase.
The second inequality is by Jensen's.
\end{proof}

\newpage

\begin{lemma}
\label{lemma:exp_ip_sq_lower}
Consider the SGDM procedure in Eq.~(\ref{eq:sgdm}).
Suppose that Assumptions~\ref{assump:Lsmooth} and~\ref{assump:Fbound} hold.

Then we can bound
\begin{equation*}
\mathbb{E}\biggl[ \biggl( \nabla \ell ( \theta_n, \xi_{n+1} )^\top \nabla \ell ( \theta_{n-1}, \xi_n ) \biggr)^2 \biggr]
\geq ( 1 + 2 \beta + \beta^2 ) \biggl[ M' - \frac{L}{2} \gamma \sigma_0^2 A_\beta \biggr]^2
\end{equation*}
\end{lemma}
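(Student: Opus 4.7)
The plan is to establish the lower bound in two main steps: first apply Jensen's inequality to reduce $\mathbb{E}[\mathrm{IP}^2]$ to $(\mathbb{E}[\mathrm{IP}])^2$, and then derive a two-sided bound on $\mathbb{E}[\mathrm{IP}]$ by mirroring the proof of Theorem~\ref{thm:ip_exp_bd}, with Lipschitz smoothness (Assumption~\ref{assump:Lsmooth}) playing the role that strong convexity (Assumption~\ref{assump:strcvx}) played there.

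For the first step, since $x \mapsto x^2$ is convex, Jensen gives $\mathbb{E}[\mathrm{IP}^2] \geq (\mathbb{E}[\mathrm{IP}])^2$, and because $(1 + 2\beta + \beta^2) = (1+\beta)^2$, the claim reduces to $|\mathbb{E}[\mathrm{IP}]| \geq (1+\beta) \left| M' - \frac{L}{2}\gamma \sigma_0^2 A_\beta \right|$. For the second step, I would expand $\mathbb{E}[\mathrm{IP}]$ using the decomposition in Eq.~(\ref{eq:ip_breakdown}) and take the conditional expectation so that $\nabla \ell_{n+1}$ collapses to $\nabla f(\theta_n)$, producing $\mathbb{E}[\mathrm{IP}] = \frac{1}{\gamma}\mathbb{E}[\nabla f(\theta_n)^\top (\theta_{n-1} - \theta_n)] + \frac{\beta}{\gamma}\mathbb{E}[\nabla f(\theta_n)^\top (\theta_{n-1} - \theta_{n-2})]$. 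Theorem~\ref{thm:ip_exp_bd} bounded each directional-gradient term above via the strong-convexity inequality $\nabla f(x)^\top (y-x) \leq f(y) - f(x) - \frac{c}{2}\|y-x\|^2$; here I would invoke the companion Lipschitz lower bound $\nabla f(x)^\top (y-x) \geq f(y) - f(x) - \frac{L}{2}\|y-x\|^2$, which follows from the standard quadratic lower envelope for $L$-smooth functions. Assumption~\ref{assump:Fbound} gives $|\mathbb{E}[f(\theta_{n-1}) - f(\theta_n)]| \leq \gamma M'$ (using $f \geq f(\theta_\star)$ together with $\mathbb{E}[f(\theta_k) - f(\theta_\star)] \leq \gamma M'$), and Lemma~\ref{lemma:lower_bound_thetadiff} supplies $\mathbb{E}[\|\theta_n - \theta_{n-1}\|^2] \geq \gamma^2 \sigma_0^2 A_\beta$; substituting these into both the gradient term and the momentum term (using the bound $f(\theta_n + \theta_{n-1} - \theta_{n-2}) \geq f(\theta_\star)$ for the latter) assembles the $(1+\beta)$ prefactor and the $[M' - \frac{L}{2}\gamma \sigma_0^2 A_\beta]$ magnitude. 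Squaring and chaining with Jensen then yields the claim.

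The main obstacle I anticipate is careful sign-tracking when relaxing the $-\frac{L}{2}\mathbb{E}[\|\theta_n - \theta_{n-1}\|^2]$ term using the lower bound from Lemma~\ref{lemma:lower_bound_thetadiff}: the Lipschitz inequality gives a lower bound on the directional derivative, but the norm-squared term enters negatively, so one must verify that the correct absolute-value comparison survives. Since Jensen in the first step already erases the sign of $\mathbb{E}[\mathrm{IP}]$, it suffices to exhibit a signed bound of the desired magnitude; however, producing the exact form $[M' - \frac{L}{2}\gamma \sigma_0^2 A_\beta]^2$ (as opposed to, e.g., a looser variant with an extra additive constant) is the delicate accounting step, and is where Lemma~\ref{lemma:lower_bound_thetadiff}'s quantitative $A_\beta$ factor must be threaded through both the leading and the momentum pieces of the decomposition.
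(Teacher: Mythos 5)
Your proposal takes essentially the same route as the paper's proof: Jensen's inequality to pass from $\mathbb{E}[\mathrm{IP}^2]$ to $\mathbb{E}[\mathrm{IP}]^2$, the decomposition in Eq.~(\ref{eq:ip_breakdown}), the Lipschitz quadratic lower envelope in place of strong convexity, and then Assumption~\ref{assump:Fbound} together with Lemma~\ref{lemma:lower_bound_thetadiff} to produce the $(1+\beta)\bigl[M' - \frac{L}{2}\gamma\sigma_0^2 A_\beta\bigr]$ magnitude before squaring. The only difference is organizational (the paper expands the square and bounds each cross term separately via Cauchy--Schwarz and independence/positive-correlation arguments, whereas you keep the bound factored), and the sign-tracking subtlety you flag when squaring a signed lower bound is present, and likewise not explicitly resolved, in the paper's own proof.
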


\begin{proof}
Apply expectation with respect to $\xi_{n+1}$, Jensen's inequality, and the decomposition in Eq.~(\ref{eq:ip_breakdown}).
\begin{align*}
&\Ex{ ( \nabla \ell ( \theta_n, \xi_{n+1} )^\top \nabla \ell ( \theta_{n-1}, \xi_n ) )^2 } \\
\geq& \Ex{ ( \nabla \ell ( \theta_n, \xi_{n+1} )^\top \nabla \ell ( \theta_{n-1}, \xi_n ) ) }^2 \\
=& \bigg( \Ex{ \frac{1}{\gamma} \nabla \ell ( \theta_n, \xi_{n+1} )^\top ( \theta_{n-1} - \theta_n ) }
+ \Ex{ \frac{\beta}{\gamma}  \nabla \ell ( \theta_n, \xi_{n+1} )^\top ( \theta_{n-1} - \theta_{n-2} ) } \bigg)^2
\end{align*}

Define $f(\theta) \equiv \Ex{ \ell (\theta, \xi) }$. Now apply the Lipschitz smoothness condition.
\begin{align*}
&\Ex{ ( \nabla \ell ( \theta_n, \xi_{n+1} )^\top \nabla \ell ( \theta_{n-1}, \xi_n ) )^2 } \\
\geq& \biggl( \frac{1}{\gamma} [ \ell ( \theta_{n-1} ) - \ell ( \theta_n ) - \frac{L}{2} \| \theta_{n-1} - \theta_n \|^2 ]  + \frac{\beta}{\gamma} [ \ell ( \theta_{n-1} - \theta_{n-2} + \theta_n ) - \ell ( \theta_n ) - \frac{L}{2} \| \theta_{n-1} - \theta_{n-2} \|^2 ] \biggr)^2 \\
=& \frac{1}{\gamma^2} \bigg[ \ell ( \theta_{n-1} ) - \ell ( \theta_n ) - \frac{L}{2} \| \theta_{n-1} - \theta_n \|^2 \bigg]^2
 + 2 \frac{\beta}{\gamma^2} \bigg[ \ell ( \theta_{n-1} ) - \ell ( \theta_n ) - \frac{L}{2} \| \theta_{n-1} - \theta_n \|^2 \bigg] \\
&\quad \quad \times\bigg[ \ell ( \theta_{n-1} - \theta_{n-2} + \theta_n ) - \ell ( \theta_{n} ) - \frac{L}{2} \| \theta_{n-1} - \theta_{n-2} \|^2 \bigg] \\
&+ \frac{\beta^2}{\gamma^2} \bigg[ \ell ( \theta_{n-1} - \theta_{n-2} + \theta_n ) - \ell ( \theta_{n} ) - \frac{L}{2} \| \theta_{n-1} - \theta_{n-2} \|^2 \bigg]^2
\end{align*}

For brevity of notation define $\Delta \ell _{n-1, n} \equiv \ell ( \theta_{n-1} ) - \ell ( \theta_n )$. Then
\begin{align*}
&\Ex{ ( \nabla \ell ( \theta_n, \xi_{n+1} )^\top \nabla \ell ( \theta_{n-1}, \xi_n ) )^2 } \\
=& \frac{1}{\gamma^2} \biggl[ \Delta \ell _{n-1, n}^2 - 2 \Delta \ell _{n-1, n} \frac{L}{2} \| \theta_{n-1} - \theta_n \|^2 + \frac{L^2}{4} \| \theta_{n-1} - \theta_n \|^4 \biggr] \\
&+ 2 \frac{\beta}{\gamma^2} \biggl[ \Delta \ell _{n-1, n} \Delta \ell_{n-1n-2n, n} - \Delta \ell _{n-1n-2n, n} \frac{L}{2} \| \theta_{n-1} - \theta_n \|^2 \\
&- \Delta \ell_{n-1,n} \frac{L}{2} \| \theta_{n-1} - \theta_{n-2} \|^2  + \frac{L^2}{4} \| \theta_{n-1} - \theta_n \|^2 \| \theta_{n-1} - \theta_{n-2} \|^2 \biggr] \\
& + \frac{\beta^2}{\gamma^2} \biggl[ \Delta \ell_{n-1n-2n,n}^2 - 2 \Delta \ell_{n-1n-2n,n} \frac{L}{2} \| \theta_{n-1} - \theta_{n-2} \|^2 + \frac{L^2}{4} \| \theta_{n-1} - \theta_{n-2} \|^4 \biggr]
\end{align*}

Now apply expectation with respect to the trajectory, Theorem~\ref{thm:mom_convg_analysis}, and Lemma~\ref{lemma:lower_bound_thetadiff}. Let $A_\beta = (\frac{1}{1 + 2 K \beta + \beta^2})$.
\begin{align*}
&\Ex{ ( \nabla \ell ( \theta_n, \xi_{n+1} )^\top \nabla \ell ( \theta_{n-1}, \xi_n ) )^2 } \\
\geq& \frac{1}{\gamma^2} \biggl[ \gamma^2 M'^2 - M' L \gamma^3 \sigma_0^2 A_\beta + \frac{L^2}{4} \gamma^4 \sigma_0^4 A_\beta^2 \biggr] \\
&\quad + 2 \frac{\beta}{\gamma^2} \biggl[ \gamma^2 M'^2 - M' L \gamma^3 \sigma_0^2 A_\beta + \frac{L^2}{4} \gamma^4 \sigma_0^4 A_\beta^2 \biggr] \\
&\quad + \frac{\beta^2}{\gamma^2} \biggl[ \gamma^2 M'^2 - M' L \gamma^3 \sigma_0^2 A_\beta + \frac{L^2}{4} \gamma^4 \sigma_0^4 A_\beta^2 \biggr] \\
=& \biggl( \frac{1 + 2 \beta + \beta^2}{\gamma^2} \biggr) \biggl[ \gamma M' - \frac{L}{2} \gamma^2 \sigma_0^2 A_\beta \biggr]^2
\end{align*}

To bound $\Ex{ \Delta_{n-1,n}^2 }$, $\Ex{ \Delta_{n-1,n}^2 } \geq \Ex{ \Delta_{n-1,n} }^2$ by Jensen's inequality, and $\Ex{ \Delta_{n-1,n} } \geq \Ex{ \Delta_{\star, n} } \geq - \gamma M'$ through our assumption.
To bound $\Ex{ \Delta_{n-1, n}  \| \theta_{n-1} - \theta_n \|^2 }$, we first use $\Delta_{n-1,n} \geq \Delta_{\star,n}$.
We then use the Cauchy-Schwarz and then Jensen's inequality to bound $\Ex{ \Delta_{n-1, n}  \| \theta_{n-1} - \theta_n \|^2 } \geq \sqrt{ \Ex{ \Delta_{\star, n}^2 } } \sqrt{ \Ex{ \| \theta_{n-1} - \theta_n \|^4 } } \geq \Ex{ \Delta_{\star, n} } \Ex{ \| \theta_{n-1} - \theta_n \|^2 } \geq - \gamma M' \gamma^2 \sigma_0^2 A_\beta$,
and use the Theorem~ \ref{thm:mom_convg_analysis}
bound and Lemma~\ref{lemma:lower_bound_thetadiff}.
We bound $\Ex{ \| \theta_{n-1} - \theta_n \|^4 } \geq \Ex{ \| \theta_{n-1} - \theta_n \|^2 }^2$ with Jensen's inequality, and can then use Lemma~\ref{lemma:lower_bound_thetadiff}.
$\Delta_{n-1,n}$ and $\Delta_{n-1n-2n, n}$ are independent in the stationary region, and thus $\Ex{ \Delta_{n-1n-2n, n} \Delta_{n}} \geq \gamma^2 M'^2$.
To bound $ \frac{L^2}{4} \| \theta_{n-1} - \theta_n \|^2 \| \theta_{n-1} - \theta_{n-2}\|^2$, we first note that the two terms are positively correlated due to momentum.
For two random variables $X, Y$ with positive covariance, $\Ex {X Y } = Cov (X , Y ) + \Ex{X} \Ex{Y} \geq \Ex{X} \Ex{Y}$.
Thus, $\Ex{ \| \theta_{n-1} - \theta_n \|^2 \| \theta_{n-1} - \theta_{n-2}\|^2} \geq \Ex{ \| \theta_{n-1} - \theta_n \|^2 } \Ex{ \| \theta_{n-1} - \theta_{n-2}\|^2}$.
\end{proof}

\begin{lemma}
\label{lemma:exp_ip_lower}
Consider the SGDM procedure in Eq.~(\ref{eq:sgdm}).
Suppose that Assumptions~\ref{assump:Lsmooth} and~\ref{assump:Fbound} hold. Then we can lower bound
\begin{equation*}
\mathbb{E}[ \nabla \ell ( \theta_n, \xi_{n+1} )^\top \nabla \ell ( \theta_{n-1}, \xi_n ) ] \geq
- (1 + \beta ) M' (1 + 4L/c )
\end{equation*}
\end{lemma}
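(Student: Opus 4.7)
The plan is to mirror the proof of Theorem~\ref{thm:ip_exp_bd}, but replace every application of strong convexity with the opposite Lipschitz-smoothness inequality, and lower-bound (instead of upper-bound) each resulting quantity after taking expectations. Concretely, I would begin by rewriting the inner product via the decomposition in Eq.~(\ref{eq:ip_breakdown}), namely
\begin{align*}
\nabla \ell(\theta_n,\xi_{n+1})^\top\nabla \ell(\theta_{n-1},\xi_n)
= \frac{1}{\gamma}\nabla \ell(\theta_n,\xi_{n+1})^\top(\theta_{n-1}-\theta_n)
+ \frac{\beta}{\gamma}\nabla \ell(\theta_n,\xi_{n+1})^\top(\theta_{n-1}-\theta_{n-2}),
\end{align*}
and then take the conditional expectation with respect to $\xi_{n+1}$ to replace $\nabla \ell(\theta_n,\xi_{n+1})$ by $\nabla f(\theta_n)$ inside both terms.

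Next, I would apply Assumption~\ref{assump:Lsmooth} in its lower-bound form: $\nabla f(x)^\top(y-x) \geq f(y) - f(x) - \tfrac{L}{2}\|y-x\|^2$. Using $x=\theta_n$ with $y=\theta_{n-1}$ on the first term, and $y=\theta_{n-1}-\theta_{n-2}+\theta_n$ on the second term, produces the lower bound
\begin{align*}
\Ex{\nabla \ell_{n+1}^\top\nabla \ell_n}
\geq & \ \frac{1}{\gamma}\Ex{f(\theta_{n-1})-f(\theta_n)-\tfrac{L}{2}\|\theta_{n-1}-\theta_n\|^2}\\
 & +\frac{\beta}{\gamma}\Ex{f(\theta_{n-1}-\theta_{n-2}+\theta_n)-f(\theta_n)-\tfrac{L}{2}\|\theta_{n-1}-\theta_{n-2}\|^2}.
\end{align*}

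Now I would lower-bound the loss differences and upper-bound the squared iterate gaps. Since $f(\theta_{n-1})\geq f(\theta_\star)$, Assumption~\ref{assump:Fbound} gives $\Ex{f(\theta_{n-1})-f(\theta_n)}\geq -\Ex{f(\theta_n)-f(\theta_\star)}\geq -\gamma M'$, and the same bound applies to $f(\theta_{n-1}-\theta_{n-2}+\theta_n)-f(\theta_n)$ by treating the shifted point analogously to how it was treated in Theorem~\ref{thm:ip_exp_bd} (its expected excess loss is bounded in the stationary phase because $\theta_{n-1}-\theta_{n-2}$ is small). For the quadratic terms, Lemma~\ref{lemma:MSE_SGDM_bound} yields $\Ex{\|\theta_{n-1}-\theta_n\|^2}\leq 8\gamma M'/c$ and the same upper bound for $\Ex{\|\theta_{n-1}-\theta_{n-2}\|^2}$. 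Plugging these in gives
\begin{align*}
\Ex{\nabla \ell_{n+1}^\top\nabla \ell_n}
\geq (1+\beta)\left[-M' - \frac{4LM'}{c}\right] = -(1+\beta)M'\left(1+\tfrac{4L}{c}\right),
\end{align*}
as desired.

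The main obstacle is the same subtle point that arises in Theorem~\ref{thm:ip_exp_bd}: justifying that Assumption~\ref{assump:Fbound} can be invoked at the off-trajectory point $\theta_{n-1}-\theta_{n-2}+\theta_n$. I would argue, as the paper does implicitly, that in the stationary phase $\|\theta_{n-1}-\theta_{n-2}\|$ is small so this point stays close to $\theta_n$, and its expected excess loss can be absorbed into the same $\gamma M'$ bound (possibly with a slight inflation of $M'$). Beyond this, the bookkeeping is routine: careful sign tracking when flipping from strong-convexity upper bounds to smoothness lower bounds, and consistent use of $M'$ versus $M=8M'$ from Lemma~\ref{lemma:MSE_SGDM_bound}.
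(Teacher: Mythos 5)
Your proposal follows essentially the same route as the paper: the decomposition of Eq.~(\ref{eq:ip_breakdown}), the lower-bound form of Lipschitz smoothness applied to both terms, and then Assumption~\ref{assump:Fbound} together with Lemma~\ref{lemma:MSE_SGDM_bound} to bound the loss differences and squared iterate gaps. Your arithmetic ($\tfrac{L}{2}\cdot\tfrac{8\gamma M'}{c}=\tfrac{4L\gamma M'}{c}$, giving the stated $1+4L/c$) is in fact more careful than the paper's own final line, which drops the factor of $\tfrac{1}{2}$ and writes $1+8L/c$, and you rightly flag the unaddressed issue of invoking Assumption~\ref{assump:Fbound} at the off-trajectory point $\theta_{n-1}-\theta_{n-2}+\theta_n$.
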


\begin{proof}
First apply the decomposition in Eq.~(\ref{eq:ip_breakdown}),
\begin{equation*}
\nabla \ell ( \theta_n, \xi_{n+1} )^\top \nabla \ell ( \theta_{n-1}, \xi_n )
= \frac{1}{\gamma} \nabla \ell ( \theta_n, \xi_{n+1} )^\top ( \theta_{n-1} - \theta_n )
+ \frac{\beta}{\gamma}  \nabla \ell ( \theta_n, \xi_{n+1} )^\top ( \theta_{n-1} - \theta_{n-2} )
\end{equation*}
Then apply expectation with respect to $\xi_{n+1}$ and use the Lipschitz smoothness assumption to create the desired inequality
\begin{align*}
\Ex{ \nabla \ell ( \theta_n, \xi_{n+1} )^\top \nabla \ell ( \theta_{n-1}, \xi_n ) }
&\geq \frac{1}{\gamma} [ f ( \theta_{n-1} ) - f ( \theta_n ) - \frac{L}{2} \| \theta_{n-1} - \theta_n \|^2 ] \\
&\quad + \frac{\beta}{\gamma} [ f ( \theta_{n-1} - \theta_{n-2} + \theta_n ) - f ( \theta_n ) - \frac{L}{2} \| \theta_{n-1} - \theta_{n-2} \|^2 ]
\end{align*}
Now apply expectation with respect to the trajectory, Theorem~\ref{thm:mom_convg_analysis}, and Lemma~\ref{lemma:MSE_SGDM_bound}.
\begin{align*}
&\Ex{ \nabla \ell ( \theta_n, \xi_{n+1} )^\top \nabla \ell ( \theta_{n-1}, \xi_n ) }\\
\geq& \frac{1}{\gamma} [ - \gamma M' - L \gamma 8 M' / c ]
+ \frac{\beta}{\gamma} [ - \gamma M' - L \gamma 8 M' / c ] \\
=& - (1 + \beta ) M' (1 + 8L/c )
\end{align*}
\end{proof}

\begin{customthm}{\ref{thm:ip_var_bound}}
Consider the SGDM procedure in Eq.~(\ref{eq:sgdm}). Suppose that Assumptions~\ref{assump:strcvx},~\ref{assump:Lsmooth},~\ref{assump:Fbound},~\ref{assump:min_noise}, and~\ref{assump:scaling} hold. Define $IP = \nabla \ell(\theta_n, \xi_{n+1})^\top \nabla \ell(\theta_{n-1}, \xi_n)$.
Then,
\begin{align*}
\frac{Var [ IP ]}{\mathbb{E} [ IP ]^2}
&\geq \frac{ ( M' - L \gamma \sigma_0^2 A_\beta )^2 }{M'^2 ( 1 + 8 L / c )^2 }
- 1
\end{align*}
\end{customthm}

\begin{proof}
\begin{align*}
\frac{Var [ IP ]}{\mathbb{E} [ IP ]^2} &= \frac{\mathbb{E} [ IP^2 ] - \mathbb{E} [ IP ]^2}{\mathbb{E} [ IP ]^2}
= \frac{\mathbb{E} [ IP^2 ]}{\mathbb{E} [ IP ]^2}  - 1
\end{align*}
We have a lower bound on $\Ex{IP^2}$ from Lemma~\ref{lemma:exp_ip_sq_lower}, and an upper bound on $\Ex{IP}^2$ from Lemma~\ref{lemma:exp_ip_lower}.
We use Lemma~\ref{lemma:exp_ip_lower} and not the bound from Theorem~\ref{thm:ip_exp_bd}
because $|(1+\beta)M'(1+8L/c)| \geq |(1+\beta)[M' - \frac{c}{2} \gamma \sigma_0^2 A_\beta]|$.
\begin{align*}
\frac{Var [ IP ]}{\mathbb{E} [ IP ]^2} &= \frac{\mathbb{E} [ IP^2 ]}{\mathbb{E} [ IP ]^2}  - 1
\geq \frac{ ( 1 + \beta )^2 ( M' - \frac{L}{2} \gamma \sigma_0^2 A_\beta )^2 } { (1 + \beta )^2 M'^2 ( 1 + 8 L / c )^2 }
- 1
\end{align*}
\end{proof}

\begin{customcor}{\ref{cor:set_gamma_threshold}}
Consider the SGDM procedure in Eq.~(\ref{eq:sgdm}).
Fix a scaling factor $\lambda > 2$.
Set the learning rate $\gamma = 2 t M' / L \sigma_0^2 A_\beta$ with $t  \geq 1 + \sqrt{ \lambda } ( 1 + 4 L / c )$. Then the lower bound in Theorem~\ref{thm:ip_var_bound}
is bounded

\begin{equation*}
\frac{Var [ IP ]}{\mathbb{E} [ IP ]^2}  \geq \lambda - 1
\end{equation*}
\end{customcor}

\begin{proof}
Setting $\gamma = 2 t M' / L \sigma_0^2 A_\beta$, we see that
\begin{equation*}
\frac{Var [ IP ]}{\mathbb{E} [ IP ]^2}
\geq \frac{ ( 1 - t )^2} { ( 1 + 8 L / c )^2 } - 1.
\end{equation*}

The lower bound is obtained by solving the inequality,
\begin{align*}
( 1 - t )^2 &\geq \lambda ( 1 + 8 L / c )^2 \\
t^2 - 2t + 1 - \lambda ( 1 + 8 L / c )^2 &\geq 0 \\
\end{align*}

We solve the quadratic inequality.
Because it is convex, we find
\begin{align*}
t &\geq \biggl| \frac{2 \pm \sqrt{ 4 - 4( 1 - \lambda ( 1 + 8 L / c )^2 ) }}{2} \biggr| \\
&= | 1 \pm ( 1 + 8 L / c ) \sqrt{ \lambda } |
\end{align*}
Thus $t = 1 + \sqrt{\lambda}(1 + 8 L / c)$ because we need $\gamma > 0$.
\end{proof}

\end{document}